\DeclareFontFamily{U}{mathx}{\hyphenchar\font45}
\DeclareFontShape{U}{mathx}{m}{n}{
      <5> <6> <7> <8> <9> <10>
      <10.95> <12> <14.4> <17.28> <20.74> <24.88>
      mathx10
      }{}
\DeclareSymbolFont{mathx}{U}{mathx}{m}{n}
\DeclareMathAccent{\widecheck}{0}{mathx}{"71}
\DeclareMathAccent{\wideparen}{0}{mathx}{"75}
\newlength{\widebarargwidth}
\newlength{\widebarargheight}
\newlength{\widebarargdepth}
\long\def\@makecaption#1#2{
        \vskip 0.8ex
        \setbox\@tempboxa\hbox{\small {\bf #1:} #2}
        \parindent 1.5em  %% How can we use the global value of this???
        \dimen0=\hsize
        \advance\dimen0 by -3em
        \ifdim \wd\@tempboxa >\dimen0
                \hbox to \hsize{
                        \parindent 0em
                        \hfil 
                        \parbox{\dimen0}{\def\baselinestretch{0.96}\small
                                {\bf #1.} #2
                                %%\unhbox\@tempboxa
                                } 
                        \hfil}
        \else \hbox to \hsize{\hfil \box\@tempboxa \hfil}
        \fi
        }
\newcounter{manualsubequation}
\renewcommand{\themanualsubequation}{\alph{manualsubequation}}
\newcommand{\startsubequation}{%
  \setcounter{manualsubequation}{0}%
  \refstepcounter{equation}\ltx@label{manualsubeq\theequation}%
  \xdef\labelfor@subeq{manualsubeq\theequation}%
}
\newcommand{\tagsubequation}{%
  \stepcounter{manualsubequation}%
  \tag{\ref{\labelfor@subeq}\themanualsubequation}%
}
\let\subequationlabel\ltx@label
\renewcommand{\baselinestretch}{1.04} % stretch distance between baselines
\date{}
\newcommand{\BlackBox}{\rule{1.5ex}{1.5ex}}  % end of proof
\renewenvironment{proof}{\par\noindent{\bf Proof\ }}{\hfill\BlackBox\\[2mm]}
\newtheorem{theorem}{Theorem}[section]
\newtheorem{lemma}[theorem]{Lemma}
\newtheorem{corollary}[theorem]{Corollary}
\newtheorem{definition}[theorem]{Definition}
\newtheorem{assumption}[theorem]{Assumption}
\newcommand\numberthis{\addtocounter{equation}{1}\tag{\theequation}}
\newcommand{\ceil}[1]{{\left\lceil #1 \right\rceil}}
\newcommand{\norm}[1]{\left\lVert #1 \right\rVert}
\newcommand{\cA}{{\cal A}}
\newcommand{\cE}{{\cal E}}
\newcommand{\cK}{{\cal K}}
\newcommand{\cN}{{\cal N}}
\newcommand{\cR}{{\cal R}}
\newcommand{\cS}{{\cal S}}
\newcommand{\cT}{{\cal T}}
\newcommand{\R}{\mathbb{R}}
\newcommand{\N}{\mathbb{N}}
\newcommand{\E}{\mathbb{E}}
\renewcommand{\Pr}{\mathbb{P}}
\newcommand{\lv}{\lVert}
\newcommand{\rv}{\rVert}
\renewcommand{\epsilon}{\varepsilon}
\renewcommand{\ln}{\log}
\DeclareSymbolFont{extraup}{U}{zavm}{m}{n}
\DeclareMathSymbol{\varheart}{\mathalpha}{extraup}{86}
\DeclareMathSymbol{\vardiamond}{\mathalpha}{extraup}{87}
\DeclareMathOperator*{\argmax}{arg\,max}
\DeclareMathOperator*{\argmin}{arg\,min}
\renewcommand{\epsilon}{\varepsilon}
\renewcommand{\ln}{\log}
\renewcommand{\Pr}{\mathbb{P}}
\renewcommand{\epsilon}{\varepsilon}
\renewcommand{\ln}{\log}
\newcommand{\w}{\mathbf{w}}
\newcommand{\htheta}{\widehat{\theta}}
\newcommand{\bPr}{\bar{\Pr}}
\title{\textbf{On the Theory of Reinforcement Learning\\
with Once-per-Episode Feedback}}
\author{Niladri S. Chatterji\footnote{Equal contribution.}\phantom{$^\ast$} \\ Stanford University\\
  \texttt{niladri@cs.stanford.edu} \\ \and Aldo Pacchiano$^\ast$ \\ Microsoft Research\\
  \texttt{apacchiano@microsoft.com}   \and Peter L. Bartlett \\ UC Berkeley \& Google \\
  \texttt{peter@berkeley.edu} \and Michael I. Jordan \\ UC Berkeley\\
  \texttt{jordan@cs.berkeley.edu} }
\date{\today}
\begin{document}
\maketitle
\begin{abstract}
We study a theory of reinforcement learning (RL) in which the learner receives binary feedback only once at the end of an episode. While this is an extreme test case for theory, it is also arguably more representative of real-world applications than the traditional requirement in RL practice that the learner receive feedback at every time step. Indeed, in many real-world applications of reinforcement learning, such as self-driving cars and robotics, it is easier to evaluate whether a learner's complete trajectory was either ``good'' or ``bad,'' but harder to provide a reward signal at each step. To show that learning is possible in this more challenging setting, we study the case where trajectory labels are generated by an unknown parametric model, and provide a statistically and computationally efficient algorithm that achieves sublinear regret.
\end{abstract}\section{Introduction}
The Reinforcement Learning (RL) paradigm involves a learning agent interacting with an unknown dynamical environment over multiple time steps. The learner receives a reward signal after each step which it uses to improve its performance over time. This formulation of RL has  had significant empirical success in the recent past~\citep{mnih2015human,levine2016end,silver2017mastering,senior2020improved}.

While this empirical success is encouraging, as RL starts to tackle a more wide-ranging class of consequential real-world problems, such as self-driving cars, supply chains, and medical care, a new set of challenges arise. Foremost among them is the lack of a well-specified reward signal associated with every state-action pair in many real-world settings. For example, consider a robot manipulation task where the robot must fold a pile of clothes. It is not clear how to design a useful reward signal that aids the robot to learn to complete this task. However, it is fairly easy to check whether the task was successfully completed (that is, whether the clothes were properly folded) and provide feedback at the end of the episode.

This is a classical challenge but it is one that is often neglected in theoretical treatments of RL. To address this challenge we introduce a framework for RL that eschews the need for a Markovian reward signal at every step and provides the learner only with binary feedback based on its complete trajectory in an episode. In our framework, the learner interacts with the environment for a fixed number of time steps ($H$) in each episode to produce a trajectory ($\tau$) which is the collection of all states visited and actions taken in these rounds. At the end of the episode a binary reward $y_{\tau} \in \{0,1\}$ is drawn from an unknown distribution $\mathbb{Q}(\cdot | \tau)$ and handed to the learner. This protocol continues for $N$ episodes and the learner's goal is to maximize the number of expected binary ``successes.''

One approach to deal with the lack of a reward function in the literature is Inverse Reinforcement Learning~\citep{ng2000algorithms}, which uses demonstrations of good trajectories to learn a reward function. However, this approach is difficult to use when good demonstrations are either prohibitively expensive or difficult to obtain. Another closely related line of work studies reinforcement learning with preference feedback~\citep{akrour2011preference,furnkranz2012preference,akrour2014programming,christiano2017deep,wirth2017survey,novoseller2020dueling,xu2020preference}. Our framework provides the learner with an even weaker form of feedback than that studied in this line of work.  Instead of providing preferences between trajectories, we only inform the learner whether the task was completed successfully or not at the end.

To study whether it is possible to learn under such drastically limited feedback we study the case where the conditional rewards ($y_{\tau}$) are drawn from an unknown logistic model (see Assumption~\ref{assumption:logistic_model}). Under this assumption we show that learning is possible---we provide an optimism-based algorithm that achieves sublinear regret (see Theorem~\ref{thm:main_regret_theorem}). Technically our theory leverages recent results of \citet{russac2021self} for the online estimation of the parameters of the underlying logistic model, and combining them with the UCBVI algorithm \citep{azar2017minimax} to obtain regret bounds. Under an explorability assumption we also show that our algorithm is computationally efficient and we provide a dynamic programming algorithm to solve for the optimistic policy at every episode. 

We note that \citet{efroni2020reinforcement} study a similar problem to ours, such that a reward is revealed only at the end of the episode, but they assume that there exists an underlying linear model that determines the reward associated with each state-action pair, and reward revealed to the learner is the sum of rewards over the state-action pairs with added stochastic noise. This assumption ensures that the reward function is Markovian, and allows them to use an online linear bandit algorithm~\citep{abbasi2012online} to directly estimate the underlying reward function. This is not possible in our setting since we do not assume the existence of an underlying Markovian reward function. \citet{cohen2021online} provided an algorithm that learns in this setting even when the noise is adversarially chosen. An open problem posed by \citet{efroni2020reinforcement} was to find an algorithm that learns in this setting of reinforcement learning, with once per episode feedback, when the rewards are drawn from an unknown generalized linear model (GLM). In this paper we consider a specific GLM---the logistic model. 

The remainder of the paper is organized as follows. In Section~\ref{s:prelimnaries} we introduce notation and describe our setting. In Section~\ref{s:main_algorithm_results} we present our algorithm and main results. Under an explorability assumption we prove that our algorithm is computationally efficient (in Appendix~\ref{s:approx_dynamic_programming}). Section \ref{s:additional_related_work} points to other related work and we conclude with a discussion in 
Section \ref{s:discussion}. Other technical details, proofs and experiments are deferred to the appendix.

\section{Preliminaries} \label{s:prelimnaries}
This section presents notational conventions and a description of the setting.
\subsection{Notation} \label{ss:notation}
For any $k \in \N$ we denote the set $\{1,\ldots,k\}$ by $[k]$. Given any set $\cT$, let $\Delta_{\cT}$ denote the simplex over this set. Given a vector $\mathbf{v}$, for any $p \in \mathbb{N}$, let $\lv \mathbf{v}\rv_{p}$ denote the $\ell_p$ norm of the vector.  Given a vector $\mathbf{v}$ and positive semi-definite matrix $\mathbf{M}$, define $\lv \mathbf{v} \rv_{\mathbf{M}} : = \sqrt{\mathbf{v}^{\top}\mathbf{M}\mathbf{v}}$. Given a matrix $\mathbf{M}$ let $\lv \mathbf{M}\rv_{op}$ denote its operator norm. For any positive semi-definite matrix $\mathbf{M}$ we use $\lambda_{\max}(\mathbf{M})$ and $\lambda_{\min}(\mathbf{M})$ to denote its maximum and minimum eigenvalues respectively.  We will use $C_1, C_2, \ldots$ to denote absolute constants whose values are
fixed throughout the paper, and $c,c',\ldots$ to denote “local” constants, which may take different
values in different contexts. We use the standard ``big Oh notation'' \citep[see, e.g.,][]{cormen2009introduction}.

\subsection{The Setting} \label{ss:setting}
We study a Markov decision process (MDP) $\mathcal{M} = (\mathcal{S}, \mathcal{A}, \mathbb{P},H)$, where $\mathcal{S}$ is the set of states, $\mathcal{A}$ is the set of actions, $\mathbb{P}(\cdot\lvert s,a)$ is the law that governs the transition dynamics given a state and action pair $(s,a)$, and $H \in \N$ is the length of an episode. Both the state space $\mathcal{S}$ and action space $\mathcal{A}$ are finite in our paper. The learner's trajectory $\tau$ is the concatenation of all states and actions visited during an episode; that is, $\tau := (s_1,  a_1, \cdots,s_H,a_H)$. Given any $h \in [H]$ and trajectory $\tau$, a sub-trajectory $\tau_{h} := (s_1,a_1,\ldots, s_h,a_h)$ is all the states and actions taken up to step $h$. Also set $\tau_{0} := \emptyset$. Let $\tau_{h:H} :=(s_h,a_h,\ldots,s_H,a_H)$ denote the states and action from step $h$ until the end of the episode. Let $\Gamma$ be the set of all possible trajectories $\tau$. Analogously, for any $h \in [H]$ let $\Gamma_h$ be the set of all sub-trajectories up to step $h$. At the start of each episode the initial state $s_1$ is drawn from a fixed distribution $\rho$ that is known to the learner.

At the end of an episode the trajectory $\tau$ gets mapped to a feature map $\phi(\tau)\in \R^{d}$. We also assume that the learner has access to this feature map $\phi$. Here are two examples of feature maps:
\begin{enumerate} 
    \item \textbf{Direct parametrization:} Without loss of generality assume that $\cS = \{1,\ldots,|\cS|\}$ and $\cA=\{1,\ldots,|\cA|\}$. The feature map $\phi(\tau) = \sum_{h=1}^H \phi_h(s_h,a_h)$, where the per-step maps $\phi_h(s,a) \in \mathbb{R}^{|\mathcal{S}||\mathcal{A}|H}$ are defined as follows:
    $$\left(\phi_h(s,a)\right)_j = \begin{cases}
        1 &\text{if } j =   (h-1)|\cS||\cA|+(s-1)|\cA|+a,\\
        0 &\text{otherwise.}\end{cases}$$
        The complete feature map $\phi(\tau) \in \R^{|\mathcal{S}||\mathcal{A}|H}$ is therefore an encoding of the trajectory $\tau$.
%     In this case we assume the the reward parameters $\w_{\star} \in \mathbb{R}^{|\mathcal{S}||\mathcal{A}|H}$, which is a concatenation of $H$ vectors $(\w_{1,\star},\ldots,\w_{H,\star})$ that are each $|\cS||\cA|$ dimensional. 
     \item \textbf{Reduced parametrization:} Any trajectory $\tau$ is associated with a feature $\phi(\tau) \in \R^d$, where $d < |\cS||\cA|H$. %In this case the reward parameters $\w_{\star} \in \R^d$.
\end{enumerate}
After the completion of an episode the learner is given a random binary reward $y_{\tau} \in \{0,1\}$. Let $\w_{\star}\in \R^d$ be a vector that is unknown to the learner. We study the case where the rewards are drawn from a binary logistic model as described below. 
\begin{assumption}[Logistic model]\label{assumption:logistic_model}Given any trajectory $\tau \in \Gamma$, the rewards are said to be drawn from a logistic model if the law of $y_{\tau}|\tau$ is
\begin{equation}\label{equation::logistic_reward_model}
 y_{\tau} | \tau  = \begin{cases}
    1 &\text{w.p. }\quad \mu\left(\w_\star^{\top} \phi(\tau)\right) \\
        0 &\text{w.p.} \quad 1-\mu\left(\w_\star^{\top} \phi(\tau)\right),
                \end{cases}
\end{equation} 
where for any $z\in \R$, $\mu(z) = \frac{1}{1+\exp\left(-z\right)}$ is the logistic function. We shall refer to $\w_{\star}$ as the ``reward parameters.''
\end{assumption}
  We make the following boundedness assumptions on the features and reward parameters.
\begin{assumption}[Bounded features and parameters] \label{assumption::bounded_features}
We assume that 
\begin{itemize}
    \item $\lv \w_{\star} \rv_2\le B$ for some known value $B>0$ and
    \item for all $\tau \in \Gamma$, $\|\phi(\tau)\|_2 \leq 1.$
\end{itemize}
\end{assumption}
We note that such boundedness assumptions are standard in the logistics bandits literature~\citep{faury2020improved,russac2021self,filippi2010parametric}.

A policy $\pi$ is a collection of per-step policies $(\pi_1,\ldots,\pi_H)$ such that
\begin{align*}
    \pi_h : \Gamma_{h-1} \times \cS \to \Delta_{\mathcal{A}}.
\end{align*}
If the agent is using the policy $\pi$ then at round $h$ of the episode the learner plays according to the policy $\pi_h$. We let $\Pi_h$ denote the set of all valid policies at step $h$ and let $\Pi$ denote the set of valid policies over the trajectory. Let $\Pr^{\pi}(\cdot | s_1)$ denote the joint probability distribution over the learner's trajectory $\tau$ and the reward $y_{\tau}$ when the learner plays according to the policy $\pi$ and the initial state is $s_1$. Often when the initial state is clear from the context we will refer to $\Pr^{\pi}(\cdot | s_1)$ by simply writing $\Pr^{\pi}$. Also with some abuse of notation we will sometimes let $\Pr^{\pi}$ denote the distribution of the trajectory and the reward where the initial state is drawn from the distribution $\rho$.

Given an initial state $s\in \cS$ the value function corresponding to a policy $\pi$ is
\begin{align*}
    V^{\pi}(s) &:= \mathbb{E}_{y_{\tau},\tau \sim \Pr^{\pi}}\left[y_{\tau} \mid s_1 = s\right]= \mathbb{E}_{\tau \sim \Pr^{\pi}}\left[\mu\left(\w_{\star}^{\top}\phi(\tau)\right)\;\big| \; s_1 = s\right],
\end{align*}
where the second equality follows as the mean of $y_\tau$ conditioned on $\tau$ is $\mu(\w_{\star}^{\top}\phi(\tau))$. With some abuse of notation we denote the average value function as $V^{\pi} := \mathbb{E}_{s_1\sim \rho}\left[V^{\pi}(s_1)\right].$ 

Define the optimal policy as
$\pi_{\star} \in \argmax_{\pi \in \Pi} V^{\pi}.$
 It is worth noting that in our setting the optimal policy may be \emph{non-Markovian}. The learner plays for a total of $N$ episodes. The policy played in episode $t \in [N]$ is $\pi^{(t)}$ and its value function is $V^{(t)}:= V^{\pi^{(t)}}$. Also define the value function for the optimal policy to be $V_{\star} := V^{\pi_{\star}}$. Our goal shall be to control the regret of the learner, which is defined as
\begin{align}\label{def:regret}
\cR(N) := \sum_{t=1}^N V_{\star}-V^{(t)}.
\end{align}
The trajectories in these $N$ episodes are denoted by $\{\tau^{(t)}\}_{t=1}^N $ and rewards received are denoted by $\{y^{(t)}\}_{t=1}^N$. 

\section{Optimistic Algorithms that Use Trajectory Labels}\label{s:main_algorithm_results}

We now present an algorithm to learn from labeled trajectories.
Throughout this section we assume that both Assumptions~\ref{assumption:logistic_model} and \ref{assumption::bounded_features} are in force.

 The derivative of the logistic function is $\mu'(z) = \frac{\exp(-z) }{(1+\exp(-z))^2}$, and therefore, $\mu$ is $1/4$-Lipschitz. The following quantity will play an important role in our bounds
\begin{equation*}
\kappa := \max_{\tau \in \Gamma}\sup_{ \w : \| \w  \| \leq B }~ \frac{1}{\mu'( \w^{\top}\phi(\tau))}.
\end{equation*}
A consequence of Assumption~\ref{assumption::bounded_features} is that $    \kappa \leq \exp(B)$. We briefly note that $\kappa$ is a measure of curvature of the logistic model. It also plays an important role in the analysis of logistic bandit algorithms~\citep{faury2020improved,russac2021self}. 

Since the true reward parameter $\w_{\star}$ is unknown we will estimate it using samples. At any episode $t \in [N]$, a natural way of computing an estimator of $\w_\star$, given past trajectories $\{\tau^{(q)}\}_{q \in [t-1]}$ and labels $\{y^{(q)}\}_{q \in [t-1]}$, is by minimizing the $\ell_2$-regularized cross-entropy loss:
\begin{align*}
    \mathcal{L}_t(\w) &:= -\sum_{q=1}^{t-1}    y^{(q)}\log\left( \mu\left(\w^{\top} \phi(\tau^{(q)})\right) \right)-(1-y^{(q)})\log\left(1-\mu\left(\w^{\top} \phi(\tau^{(q)})\right)\right)   + \frac{\| \w \|_2^2}{2} .
\end{align*}
This function is strictly convex and its minimizer is defined to be
\begin{equation}\label{equation::estimator_w_t_L}
    \widehat{\w}_{t} := \argmin_{\w \in \mathbb{R}^d} \mathcal{L}_t( \w ).
\end{equation}
%Define a function 
%\begin{equation*}
 %   g_t(\w) := \sum_{q=1}^{t-1} \mu\left(\w^{\top}\phi(\tau^{(q)}) \right) \phi(\tau^{(q)}) +  \w
%\end{equation*}
Define a design matrix at every episode
\begin{equation*}\mathbf{\Sigma}_1 := \kappa \mathbf{I},\quad \text{and } \quad
    \mathbf{\Sigma}_{t} := \kappa \mathbf{I} + \sum_{q=1}^{t-1} \phi(\tau^{(q)})\phi(\tau^{(q)})^\top, \quad \text{for all $t\ge 1$}.
\end{equation*}
%The Euclidean norm of the estimator $\bar{\w}_{t}$ may be larger than $B$ (recall that $\lv \w_{\star}\lv \le B$ by Assumption~\ref{assumption::bounded_features}). To guard against that possibility, we define the projected estimator
%\begin{equation}\label{equation::estimator_w_t_L}
 %   \widehat{\w}_t :=\argmin_{\w : \| \w \|_2\leq B } \| g_t(\w) - g_t(\bar{\w}_{t} ) \|_{\mathbf{\Sigma}_t^{-1}}. 
%\end{equation}
Further, define the confidence radius $\beta_t(\delta)$ as follows
\begin{align*}
&\beta_t(\delta) := \left(1+B+\rho_t(\delta)\left(\sqrt{1+B}+\rho_t(\delta)\right)\right)^{3/2} \numberthis \label{e:beta_radius_definition}\\
   \text{where,}\quad  &\rho_t(\delta) :=d\log\left(4+\frac{4t}{d}\right)+2\log\left(\frac{N}{\delta}\right)+\frac{1}{2}.
\end{align*}
% Define a confidence set around our estimated reward parameters $\widehat{\w}_t$
% \begin{equation*}
%     \mathcal{C}_t(\delta) := \{ \w : \| \w - \widehat{\w}_t \|_{\mathbf{\Sigma}_t } \leq 2\kappa \beta_t(\delta)   \},
% \end{equation*}
% where $\beta_t(\delta) :=  B + \sqrt{\log\left(\frac{1}{\delta}\right) + 2d \log\left(1 + \frac{t}{\kappa  d}\right) }$.
We adapt a result due to~\citet[][Proposition~7]{russac2021self} who studied the online logistic bandits problem to establish that at every episode and every trajectory the difference between $\mu(\w_{\star}^{\top}\phi(\tau))$ and $\mu(\widehat\w_t^{\top}\phi(\tau))$ is small.
%true reward parameter $\w_{\star}$ lies in our constructed confidence set $\mathcal{C}_t(\delta)$ for all episodes $t\geq 1$ with probability at least $1-\delta$.
\begin{restatable}{lemma}{confidenceboundrussac}\label{lemma::confidence_interval_anytime} For any $\delta \in (0,1]$, define the event 
\begin{equation}\label{e:definition_event_E_delta}
    \mathcal{E}_\delta := \left\{ \text{for all} \;  t \in [N], \tau \in \Gamma:  \left|\mu(\w_{\star}^{\top}\phi(\tau))-\mu(\widehat{\w}^{\top}_t\phi(\tau))\right|\le \sqrt{\kappa}\beta_t(\delta)\lv \phi(\tau)\rv_{\mathbf{\Sigma}_t^{-1}} \right\}.
\end{equation}
Then $\mathbb{P}( \mathcal{E}_\delta ) \geq 1-\delta$.
\end{restatable}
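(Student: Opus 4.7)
The plan is to invoke Proposition~7 of~\citet{russac2021self} to obtain an anytime confidence set for $\w_\star$ centered at the $\ell_2$-regularized MLE $\widehat\w_t$ defined in~\eqref{equation::estimator_w_t_L}, and then to transfer this parameter-space bound into a prediction-space bound on $|\mu(\w_\star^\top\phi(\tau))-\mu(\widehat\w_t^\top\phi(\tau))|$. Concretely, their result guarantees that with probability at least $1-\delta$, simultaneously for every $t\in[N]$,
$$\|\widehat\w_t-\w_\star\|_{\mathbf{H}_t}\le \beta_t(\delta),\qquad \text{where}\quad \mathbf{H}_t := \mathbf{I} + \sum_{q<t}\mu'\!\bigl(\widehat\w_t^\top\phi(\tau^{(q)})\bigr)\phi(\tau^{(q)})\phi(\tau^{(q)})^\top.$$
The particular shape of $\beta_t(\delta)$ (the $3/2$ exponent and the $\rho_t(\delta)$ term, including its $\log(N/\delta)$) comes from their self-bootstrapped self-normalized martingale argument, which is needed because the logistic negative-log-likelihood is only locally strongly convex and $\widehat\w_t$ is not a priori norm-bounded.

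To convert this into the required bound, I fix $\tau \in \Gamma$ and apply the fundamental theorem of calculus,
$$\mu(\w_\star^\top\phi(\tau))-\mu(\widehat\w_t^\top\phi(\tau)) = \Bigl(\int_0^1\mu'\!\bigl(u\,\w_\star^\top\phi(\tau)+(1-u)\,\widehat\w_t^\top\phi(\tau)\bigr)\,du\Bigr)(\w_\star-\widehat\w_t)^\top\phi(\tau).$$
Since $0<\mu'\le 1$, Cauchy--Schwarz in the $\mathbf{H}_t$-norm gives
$$\bigl|\mu(\w_\star^\top\phi(\tau))-\mu(\widehat\w_t^\top\phi(\tau))\bigr| \le \|\widehat\w_t-\w_\star\|_{\mathbf{H}_t}\,\|\phi(\tau)\|_{\mathbf{H}_t^{-1}}.$$
Finally, the definition of $\kappa$ ensures $\mu'(\w^\top\phi(\tau^{(q)}))\ge 1/\kappa$ whenever $\|\w\|_2\le B$ and $\|\phi(\tau^{(q)})\|_2\le 1$, so $\mathbf{H}_t \succeq \kappa^{-1}\,\mathbf{\Sigma}_t$ in Loewner order---this is exactly why the regularizer in $\mathbf{\Sigma}_t$ was chosen to be $\kappa\,\mathbf{I}$---and hence $\|\phi(\tau)\|_{\mathbf{H}_t^{-1}}\le \sqrt{\kappa}\,\|\phi(\tau)\|_{\mathbf{\Sigma}_t^{-1}}$. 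Chaining the three inequalities produces exactly the bound in $\mathcal{E}_\delta$; no union bound over $\tau$ is needed because the last two steps are deterministic once the high-probability event of the first step holds.

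The main obstacle is the parameter-space concentration itself, which I import wholesale from~\citet{russac2021self}. A naive application of the linear self-normalized inequality of~\citet{abbasi2012online} to the logistic score would yield a bound in the norm of the \emph{unweighted} empirical covariance; pushed through the conversion above this would cost a factor of $\kappa$ rather than $\sqrt{\kappa}$. Russac et al.'s bootstrapping trick is what upgrades the concentration to the Hessian-weighted norm $\|\cdot\|_{\mathbf{H}_t}$, and precisely this upgrade is what allows the $\sqrt{\kappa}$ (rather than $\kappa$) dependence to appear in the final statement.
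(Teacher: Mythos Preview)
Your approach is spiritually aligned with the paper's, which simply black-boxes Proposition~7 of~\citet{russac2021self} and records the parameter translation ($c_\mu=1/\kappa$, Lipschitz constant $1$, regularization $\lambda=1$, sliding window length $N$, $\mathcal{T}(\tau)=[N]$). You have chosen instead to unpack the mechanism: a parameter-space confidence set in a Hessian-weighted norm, a Cauchy--Schwarz conversion to prediction space, and a Loewner comparison $\mathbf{H}_t\succeq \kappa^{-1}\mathbf{\Sigma}_t$.

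There is, however, a gap in your Loewner step. You define $\mathbf{H}_t$ with curvature weights $\mu'(\widehat\w_t^\top\phi(\tau^{(q)}))$ and then invoke the definition of $\kappa$ to lower-bound each of these by $1/\kappa$. But $\kappa$ is defined as a supremum over $\{\w:\|\w\|_2\le B\}$, and $\widehat\w_t$---the \emph{unconstrained} minimizer of the regularized log-loss~\eqref{equation::estimator_w_t_L}---is not a priori contained in this ball. The paper emphasizes exactly this point just after the statement of the lemma: Russac et al.'s result is preferred over the earlier logistic-bandit analyses of \citet{filippi2010parametric,faury2020improved} precisely \emph{because} it applies to the unprojected $\widehat\w_t$, whereas those earlier analyses required a computationally awkward non-convex projection onto the $B$-ball. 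Your argument as written reintroduces that assumption implicitly. The actual mechanism in \citet{russac2021self} uses self-concordance to relate curvature along the segment $[\widehat\w_t,\w_\star]$ to curvature at $\w_\star$ (which \emph{is} in the $B$-ball), and the residual from this comparison is what produces the somewhat unusual $3/2$ exponent in $\beta_t(\delta)$. If you want to open the black box, that self-concordance step is the one that needs to be stated correctly; otherwise the paper's route of citing Proposition~7 wholesale is the safer one.
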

We provide a proof in Appendix~\ref{app:optimism_lemmas}. The proof follows by simply translating \citep[][Proposition~7]{russac2021self} into our setting. We note that we specifically adapt these recent results by \citet{russac2021self} since they directly apply to $\widehat{\w}_t$, the minimizer of the $\ell_2$-regularized cross-entropy loss. In contrast, previous work on the logistic bandits problem \citep[see, e.g.,][]{filippi2010parametric,faury2020improved} established confidence sets for an estimator that was obtained by performing a non-convex (and potentially computationally intractable) projection  of $\widehat{\w}_t$ onto the ball of Euclidean radius $B$.

Our algorithm shall construct an estimate of the transition dynamics
 $\widehat{\mathbb{P}}_t$. Let $N_t(s,a)$ be the number of times that the state-action pair $(s,a)$ is encountered before the start of episode $t$, and let $N_t(s';s,a)$ be the number of times the learner encountered the state $s'$ after taking action $a$ at state $s$ before the start of episode $t$.
 Define the estimator of the transition dynamics as follows:
 \begin{align}\label{def:empirical_distribution}
     \widehat{\mathbb{P}}_t(s' | a, s) := \frac{N_t(s';s,a) }{ N_t( s,a )}.
 \end{align} 
Also define the state-action bonus at episode $t$
\begin{align} \label{e:xi_definition_t}
    \xi^{(t)}_{s,a}:= \min\Bigg\{2,4\sqrt{ \frac{\log\left(\frac{6\left(|\cS||\cA|H\right)^H\left(8NH^2\right)^{|\cS|}\log(N_t(s,a))}{\delta}\right)}{N_t(s,a)}}\Bigg\}.
\end{align}
In this definition whenever $N_t(s,a)=0$, that is, when a state-action pair hasn't been visited yet, we define $\xi_{s,a}^{(t)}$ to be equal to $2$. Finally, we define the optimistic reward functions
\begin{align}\startsubequation\subequationlabel{e:bonus_rewards} \tagsubequation \label{def:bar_mu}
    \bar{\mu}_t(\w,\tau)&:= \min\left\{\mu\left(\w^{\top}\phi(\tau)\right)+\sqrt{\kappa}\beta_t(\delta)\lv \phi(\tau)\rv_{\mathbf{\Sigma}_t^{-1}},1\right\} \quad \text{ and}\\ \tagsubequation \label{def:tilde_mu}
    \widetilde{\mu}_t(\w,\tau) &:= \bar{\mu}_t(\w,\tau)+ \sum_{h=1}^{H-1} \xi_{s_h, a_h}^{(t)}.
\end{align}
The first reward function $\bar{\mu}_t$ is defined as above to account for the uncertainty in the predicted value of $\w_{\star}$ in light of Lemma~\ref{lemma::confidence_interval_anytime}, and the second reward function $\widetilde{\mu}_t$ is designed to account for the error in the estimation of the transition dynamics $\Pr$. With these additional definitions in place we are ready to present our algorithms and main results. 
\subsection{UCBVI with Trajectory Labels}\label{ss:inefficient_UCBVI}
Our first algorithm is an adaptation of the UCBVI algorithm~\citep{azar2017minimax} to our setting with labeled trajectories.
\begin{center}
\begin{algorithm}[H]
\textbf{Input: } State and action spaces $\mathcal{S}, \mathcal{A}$. \\
\textbf{Initialize} $\widehat{\mathbb{P}}_1 = \boldsymbol{0}$, visitation set $\cK = \emptyset$. \\
\For{$t=1, \cdots $}{
1. Calculate the $\widehat{\w}_t$ by solving equation~\eqref{equation::estimator_w_t_L}.\\
%2. Observe the initial state $s_1^{(t)}$. \\
2. If $t> 1$, compute $\pi^{(t)}$ 
\begin{equation}\label{equation::pi_t}
    \pi^{(t)} \in \argmax_{\pi \in \Pi}\mathbb{E}_{s_1\sim \rho,\;\tau \sim \widehat{\mathbb{P}}_t^{\pi}(\cdot \mid s_1)}\left[\widetilde{\mu}_t( \widehat{\w}_t, \tau)    \right].
\end{equation}
Else for all $h,s,\tau_{h-1}\in [H] \times \cS \times \Gamma_{h-1}$, set $\pi_h^{(1)}(\cdot | s,\tau_{h-1})$ to be the uniform distribution over the action set.\\
3. Observe the trajectory $\tau^{(t)} \sim \mathbb{P}^{\pi^{(t)}}$ and update the design matrix
\begin{equation}\label{equation::sigma_t_update}
\boldsymbol{\Sigma}_{t+1} = \kappa  \mathbf{I}  + \sum_{q=1}^t \phi(\tau^{(q)})\phi(\tau^{(q)})^\top. \end{equation}\\
4. Update the visitation set $\mathcal{K} =\{(s,a)\in \cS \times \cA: N_t(s,a)>0\}$.\\
5. For all $(s,a)\in \cK$, update $\widehat{\Pr}_{t+1}(\cdot | s,a)$ according to equation~\eqref{def:empirical_distribution}.\\
6. For all $(s,a)\notin  \cK$, set $\widehat{\Pr}_{t+1}(\cdot | s,a)$ to be the uniform distribution over states.
}
\caption{UCBVI with trajectory labels.}
\label{alg:GLM_unknown}
\end{algorithm}
\end{center}

\begin{restatable}{theorem}{mainregret}
\label{thm:main_regret_theorem}For any $\bar{\delta} \in (0,1]$, set $\delta = \bar{\delta}/(6N) $ then under Assumptions~\ref{assumption:logistic_model} and \ref{assumption::bounded_features} the regret of Algorithm~\ref{alg:GLM_unknown} is upper bounded as follows:
\begin{align*}
    \cR(N)\le \widetilde{O}\left(\left[ H\sqrt{(H +|\mathcal{S}| ) |\mathcal{S}||\mathcal{A}|     } +H^2 + \sqrt{\kappa d} (d^3+B^{3/2})\right]\sqrt{N}   +(H+|\cS|)H|\cS||\cA|\right),
\end{align*}
with probability at least $1-\bar{\delta}$.
\end{restatable}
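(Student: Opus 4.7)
The plan is to follow the optimism template: show that a suitably defined optimistic value $\widehat V^{(t)} := \E_{s_1\sim\rho,\,\tau\sim\widehat{\Pr}_t^{\pi^{(t)}}}[\widetilde\mu_t(\widehat\w_t,\tau)]$ satisfies $\widehat V^{(t)} \ge V_\star$ on a high-probability event, and then bound $\widehat V^{(t)} - V^{(t)}$ by summing the per-episode bonuses via potential arguments. The high-probability event is the intersection of (i) the event $\cE_\delta$ from Lemma \ref{lemma::confidence_interval_anytime} (with $\delta=\bar\delta/(6N)$), which gives a uniform confidence bound on $\mu(\widehat\w_t^\top\phi(\tau))$; and (ii) a trajectory-level concentration event controlling $|\E_{\Pr^\pi}[g(\tau)] - \E_{\widehat{\Pr}_t^\pi}[g(\tau)]|$ uniformly over $g:\Gamma\to[0,1]$. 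The exponents $(|\cS||\cA|H)^H$ and $(8NH^2)^{|\cS|}$ inside the logarithm of $\xi_{s,a}^{(t)}$ are precisely tuned to union-bound empirical-Bernstein deviations over all length-$H$ trajectories and all possible visit counts, producing $|\E_{\Pr^\pi}[g(\tau)] - \E_{\widehat{\Pr}_t^\pi}[g(\tau)]| \le \E_{\Pr^\pi}\bigl[\sum_{h=1}^{H-1}\xi_{s_h,a_h}^{(t)}\bigr]$ up to a per-episode second-order correction.

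Optimism then follows in three steps: under $\cE_\delta$, $\mu(\w_\star^\top\phi(\tau)) \le \bar\mu_t(\widehat\w_t,\tau)$, so $V_\star \le \E_{\Pr^{\pi_\star}}[\bar\mu_t(\widehat\w_t,\tau)]$; applying event (ii) with $g=\bar\mu_t(\widehat\w_t,\cdot)$ yields $\E_{\Pr^{\pi_\star}}[\bar\mu_t(\widehat\w_t,\tau)] \le \E_{\widehat{\Pr}_t^{\pi_\star}}[\widetilde\mu_t(\widehat\w_t,\tau)]$, since by \eqref{def:tilde_mu} the extra summand $\sum_h \xi_{s_h,a_h}^{(t)}$ is exactly the gap between $\widetilde\mu_t$ and $\bar\mu_t$; and the optimality of $\pi^{(t)}$ in the planning problem \eqref{equation::pi_t} finally gives $V_\star \le \widehat V^{(t)}$. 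I would then decompose $\widehat V^{(t)} - V^{(t)}$ into three pieces: (a) a transition-mismatch term $\E_{\widehat{\Pr}_t^{\pi^{(t)}}}[\widetilde\mu_t(\widehat\w_t,\tau)] - \E_{\Pr^{\pi^{(t)}}}[\widetilde\mu_t(\widehat\w_t,\tau)]$, again bounded using event (ii) by $\E_{\Pr^{\pi^{(t)}}}[\sum_h \xi_{s_h,a_h}^{(t)}]$ plus the second-order correction; (b) a logistic-confidence term $\E_{\Pr^{\pi^{(t)}}}[\bar\mu_t(\widehat\w_t,\tau) - \mu(\w_\star^\top\phi(\tau))]$, bounded by $2\sqrt\kappa\,\beta_t(\delta)\,\E_{\Pr^{\pi^{(t)}}}\|\phi(\tau)\|_{\mathbf{\Sigma}_t^{-1}}$ via Lemma \ref{lemma::confidence_interval_anytime}; and (c) the transition-bonus term $\E_{\Pr^{\pi^{(t)}}}[\sum_{h=1}^{H-1}\xi_{s_h,a_h}^{(t)}]$ itself.

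Summing over $t$, I would first replace expectations by the sample realizations $\tau^{(t)}$ using Azuma--Hoeffding, which costs an additive $\tilde O(H\sqrt N)$. The logistic term is then controlled by the elliptical-potential inequality $\sum_{t=1}^N \|\phi(\tau^{(t)})\|_{\mathbf{\Sigma}_t^{-1}}^2 \le 2d\log(1+N/(\kappa d))$ combined with Cauchy--Schwarz and the growth $\beta_N(\delta) = \tilde O(d^{3/2}+B^{3/2})$, producing the $\sqrt{\kappa d}\,(d^3+B^{3/2})\sqrt N$ contribution. The two copies of $\sum_t \sum_h \xi_{s_h^{(t)},a_h^{(t)}}^{(t)}$ arising from (a) and (c) are controlled by the pigeonhole identity $\sum_t \mathbf{1}\{N_t(s,a)>0\}/\sqrt{N_t(s,a)} \le 2\sqrt{N_N(s,a)}$, Cauchy--Schwarz over $(s,a)$, and the $\sqrt{H+|\cS|}$ factor from the logarithm inside $\xi$, yielding the $H\sqrt{(H+|\cS|)|\cS||\cA|N}$ contribution. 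The initial episodes in which some pair has $N_t(s,a)=0$ (and hence $\xi^{(t)}_{s,a}=2$), together with the accumulated second-order corrections and lower-order Azuma remainders, contribute the additive $(H+|\cS|)H|\cS||\cA|$ term and the lower-order $H^2\sqrt N$ term.

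The main obstacle I anticipate is constructing and applying the trajectory-level concentration event (ii). Because the reward $\mu(\w_\star^\top\phi(\tau))$ is a non-Markovian function of the whole trajectory, the usual UCBVI per-layer Bellman/value-difference lemma does not apply, and one must instead union-bound empirical-Bernstein deviations over all $(|\cS||\cA|H)^H$ length-$H$ trajectories, which is what forces the exponential-in-$H$ factor inside the logarithm of $\xi_{s,a}^{(t)}$. The delicate part is verifying that the resulting per-episode bonus $\sum_h \xi_{s_h,a_h}^{(t)}$ simultaneously dominates both the transition-mismatch in the optimism step (with $\pi_\star$ against the true transitions) and the transition-mismatch in the regret decomposition (with $\pi^{(t)}$ against the learned transitions), so that the second-order empirical-Bernstein corrections collapse into the additive $(H+|\cS|)H|\cS||\cA|$ term of the theorem rather than scaling with $N$.
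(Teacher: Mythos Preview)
Your proposal is correct and follows the paper's approach closely: optimism via Lemma~\ref{lemma::confidence_interval_anytime} combined with a trajectory-level change-of-measure lemma, the same decomposition into a logistic-confidence piece and transition-bonus pieces, and elliptical-potential plus pigeonhole summation. One clarification regarding your anticipated obstacle: the $(8NH^2)^{|\cS|}$ factor inside $\xi_{s,a}^{(t)}$ does not come from a union bound over visit counts but from an $\epsilon$-covering net (of size $\lceil 4\eta H/\epsilon\rceil^{|\cS|}$ with $\eta=2H$, $\epsilon=1/N$) over the intermediate value-like functions $s\mapsto z_h(s,\tau_{h-1})$ that appear when telescoping $\E_{\widehat{\Pr}_t^\pi}-\E_{\Pr^\pi}$ step by step (Lemma~\ref{lemma::approximation_lemma_MDPs_UCBVI_uniform}); this net is what delivers uniformity of the concentration over the data-dependent pair $(\pi^{(t)},\widetilde\mu_t)$, while the union bound over $(|\cS||\cA|H)^H$ sub-trajectories handles the fixed-policy direction used for optimism (Lemma~\ref{lemma::approximation_lemma_MDPs_UCBVI}). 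Also, $\beta_N(\delta)=\widetilde O(d^3+B^{3/2})$ rather than $d^{3/2}+B^{3/2}$, since $\rho_t$ scales linearly in $d$ and is then raised to the $3$rd power in \eqref{e:beta_radius_definition}.
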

The regret of our algorithm scales with $\sqrt{N}$ and polynomially with the horizon, number of states, number of actions, $\kappa$, dimension of the feature maps and length of the reward parameters $(B)$. The minimax regret in the standard episodic reinforcement learning is $O(\sqrt{H|\cS||\cA| N})$ \citep{osband2016lower,azar2017minimax}. Here we pay for additional factors in $H$, $|\cS|$ and $\kappa$ since our rewards are non-Markovian and are revealed to the learner only at the end of the episode. We provide a proof of this theorem in Appendix~\ref{s:regret_analysis}. For a more detailed bound on the regret with the logarithmic factors and constants specified we point the interested reader to inequality~\eqref{e:full_regret_bound} in the appendix. %\textcolor{green}{The regret guarantees of Theorem~\ref{thm:main_regret_theorem} hold regardless of how $\phi(\tau)$ is defined. Nevertheless, solving equation~\ref{equation::sigma_t_update} to find an optimistic policy in Algorithm~\ref{alg:GLM_unknown} may prove challenging for arbitrary feature maps $\phi$. In the next section we show that under an explorability assumption on the MDP and for sum decomposable feature maps $\phi$, Algorithm~\ref{alg:GLM_unknown} is tractable.  }

\paragraph{Proof sketch.} First we show that with high probability at each episode the value function of the optimal policy $V_{\star}$ is upper bounded by $\widetilde{V}^{(t)}:= \mathbb{E}_{s_1\sim \rho,\;\tau \sim \widehat{\mathbb{P}}_t^{\pi^{(t)}}(\cdot \mid s_1)}\left[\widetilde{\mu}_t( \widehat{\w}_t, \tau)    \right]$ (the value function of the policy $\pi^{(t)}$ when the rewards are dictated by $\widetilde{\mu}_t$ and the transition dynamics are given by $\widehat{\Pr}_t$). Then we provide a high probability bound on the difference between the optimistic value function $\widetilde{V}^{(t)}$ and the true value function $V^{(t)}$ to obtain our upper bound on the regret. In both of these steps we need to relate expectations with respect to the true transition dynamics $\Pr$ to expectations with respect to the empirical estimate of the transition dynamics $\widehat{\Pr}_t$. We do this by using our concentration results: Lemmas~\ref{lemma::approximation_lemma_MDPs_UCBVI} and~\ref{lemma::approximation_lemma_MDPs_UCBVI_uniform} proved in the appendix. While analogs of these concentration lemmas do exist in previous theoretical studies of episodic reinforcement learning, here we had to prove these lemmas in our setting with non-Markovian trajectory-level feedback (which explains why we pay extra factors in $H$ and $|\cS|$). 
\subsection{UCBVI with Added Exploration}\label{section::ucbvi_explorability}

%\textcolor{green}{first make the assumption that phi is sum-decomposable. Assumption~\ref{assumption::orthogonal_feature_maps} implies the sum-decomposability of the dot products $\w^\top \phi(\tau) = \sum_{h=1}^H \langle \w_h, \phi_h(s_h, a_h) \rangle $ so that $\phi(\tau) = (\phi_1(s_1, a_1), \cdots, \phi_H(s_1, a_1) )^\top$.}
Although the regret of Algorithm~\ref{alg:GLM_unknown} is sublinear it is not guaranteed to be computationally efficient since finding the optimistic policy $\pi^{(t)}$ (in equation~\eqref{equation::pi_t}) at every episode might prove to be difficult. In this section, we will show that when the features are sum-decomposable and the MDP satisfies an explorability assumption then it will be possible to find a computationally efficient algorithm with sublinear regret (albeit with a slightly worse scaling with the number of episodes $N$).

 \begin{assumption}[Sum-decomposable features] \label{assumption:sum_decomposable}
 We assume that the feature maps $\phi \in \R^{d}$ are sum-decomposable over the different steps of the trajectory, that is,  $\phi(\tau)=\sum_{h=1}^H\phi_h(s_h,a_h)$. %Further, for all $s,a \in \mathcal{S} \times \mathcal{A}$ and $h \in [H]$, $\| \phi_h(s,a) \|_2 \leq L_h$, for some known values $ \{L_h\}_{h\in [H]}$.
 \end{assumption}
 Under this assumption, given any $\w \in \R^{d}$ and any trajectory $\tau \in \Gamma$,  $\w^\top \phi(\tau) = \sum_{h=1}^H  \w^{\top} \phi_h(s_h, a_h)$. We stress that even under this sum-decomposablity assumption, the optimal policy is potentially non-Markovian due to the presence of the logistic map that governs the reward. 
 
 We also make the following explorability assumption.
 \begin{assumption}[Explorability]\label{assumption::orthogonal_feature_maps}
For any $s,s' \in \mathcal{S}$, $a,a' \in \mathcal{A}$, and $h\neq h' \in [H]$, suppose that
\begin{equation*}
    \phi_h(s,a)^{\top}\phi_{h'}(s',a')  =0.
\end{equation*}
Further assume that there exists
$\omega \in (0,1)$ such that for any unit vector $\mathbf{v} \in \mathbb{R}^d$ we have that
\begin{align*}
    \sup_{\pi \in \Pi} \E_{s_1\sim \rho, \tau \sim \Pr^{\pi}}\left[ \sum_{h \in [H]}\mathbf{v}^{\top}\phi_h(s_h,a_h)\right]   \ge \omega.
\end{align*}
\end{assumption}
In a setting with Markovian rewards a similar assumption has been made previously by \citet{zanette2020provably}. This assumption allows us to efficiently ``explore'' the feature space, and construct a sum-decomposable bonus $\sqrt{\kappa}\beta_t(\delta)\sum_{h=1}^{H}    \lv \phi_h(s_h, a_h)\rv_{\mathbf{\Sigma}_t^{-1}} $ that we will use instead of $\sqrt{\kappa}\beta_t(\delta)\lv \phi(\tau)\rv_{\mathbf{\Sigma}_t^{-1}}$ in the definition of $\bar{\mu}_t$ (see equation~\eqref{def:bar_mu}). Define the reward functions
\begin{align}\startsubequation\subequationlabel{e:bonus_rewards_sum_decomposable}
\tagsubequation \label{e:bar_mu_sum_decomposable} \bar{\mu}^{\mathsf{sd}}_t( \w, \tau) &:=  \min\left\{\mu\left(\w^{\top}\phi(\tau)\right)+\sqrt{\kappa}\beta_t(\delta) \sum_{h=1}^{H} \lv \phi_h(s_h,a_h)\rv_{\mathbf{\Sigma}_t^{-1}},1\right\}\quad \text{ and}\\ \tagsubequation \label{e:tilde_mu_sum_decomposable}
 \widetilde{\mu}^{\mathsf{sd}}_t( \w, \tau) &:=  \bar{\mu}_t^{\mathsf{sd}}(\w, \tau) +  \sum_{h=1}^{H-1} \xi_{s_h, a_h}^{(t)}.
\end{align}
To prove a regret bound for an algorithm that uses these rewards our first step shall be to prove that the sum-decomposable bonus also leads to an optimistic reward function (that is, the value function defined by these rewards sufficiently over-estimates the true value function). To this end, we will first use Algorithm~\ref{alg:find_mixture} to find an exploration mixture policy $\bar{U}$ and play according to it at episode $t$ with probability $1/t^{1/3}$. 
%Note that it is easy to find this set of orthonormal vectors $\mathbf{v}_1, \ldots, \mathbf{v}_d$ when the feature maps satisfy the direct parametrization described in Section~\ref{ss:setting}. 
%Under this assumption we will find  %As mentioned above, the advantage of using a sum-decomposable bonus is that it will be possible to efficiently approximate the policy $\pi^{(t)}$ using a dynamic programming approach. We do this in Section~\ref{s:approx_dynamic_programming}. 
%Below we describe an algorithm (Algorithm~\ref{alg:find_mixture}) whose objective is to find 
This policy $\bar{U}$ will be such that the minimum eigenvalue of 
\begin{equation}\label{equation::minimum_eigenvalue_condition_main}
   \mathbb{E}_{s_1\sim \rho,\;\tau\sim \Pr^{\bar{U}}(\cdot | s_1) } \left[  \phi(\tau) \phi(\tau)^\top \right]%\succeq \frac{\omega^2}{4d}\mathbf{I}.
\end{equation}
is lower bounded by a function of $d$, $\omega$ and $N$ (see Lemma~\ref{l:lower_bound_bar_u}). This property shall allow us to upper bound the condition number of the design matrix $\mathbf{\Sigma}_t$ and subsequently ensure that the rewards $\bar{\mu}_t^{\mathsf{sd}}$ and $\widetilde{\mu}_t^{\mathsf{sd}}$ are optimistic. Given a unit vector $\mathbf{v}$ define a reward function at step $h$ as follows:
\begin{align}
    r_h^{\mathbf{v}}(s,a) := \mathbf{v}^{\top}\phi_h(s,a). \label{e:rv_reward_definition}
\end{align}
Let $r^{\mathbf{v}} := (r_1^{\mathbf{v}},\ldots,r_H^{\mathbf{v}})$ be a reward function over the entire episode. As a subroutine Algorithm~\ref{alg:find_mixture} uses the $\mathsf{EULER}$ algorithm~\citep{zanette2019tighter}. (We briefly note that other reinforcement learning algorithms with PAC or regret guarantees~\citep[e.g.,][]{azar2017minimax,jin2018q} could also be used here in place of $\mathsf{EULER}$.)
%We do this by solving $d$ RL problems with the reward functions $$r_i(s,a) :=  \mathbf{v}_i^{\top}\phi_{h(i)}(s,a)$$ using the . This algorithm returns a mixture distribution over policies $\bar{U}_i$ for each problem. The output of Algorithm~\ref{alg:find_mixture} is $\bar{U} := \mathsf{Unif}(\bar{U}_1, \cdots, \bar{U}_d)$, which is the uniform mixture policy over $\bar{U}_1,\ldots,\bar{U}_d$.  

% \begin{algorithm}[H]
% \textbf{Input: } Basis $\{ \mathbf{v}_i \}_{i=1}^d$, number of steps $N_0$. \\
% %\textbf{Initialize} $\widehat{\mathbb{P}}_1 = \emptyset$. \\
% % Define the following Markovian reward functions for $i \in [d]$:
% % \begin{equation*}
% %     r_i(s,a) =  \mathbf{v}_i^{\top}\phi(s,a).    
% % \end{equation*}
% \For{$i=1, \cdots ,d$}{
%     $\bar{U}_i \leftarrow $\mathsf{EULER}$( r_{i}, N_0)$ 
% }
% \textbf{Return } $\bar{U} = \mathsf{Unif}( \bar{U}_1, \cdots, \bar{U}_d )$. \\
% \caption{Find exploration mixture.}
% \label{alg:find_mixture}
% \end{algorithm}

\begin{algorithm}[H]
\textbf{Input: } Initial unit vector $\mathbf{v}_1$, Exploration lower bound $\omega$, number of $\mathsf{EULER}$ episodes $N_{\mathsf{EUL}}$, number of evaluation episodes $N_{\mathsf{EVAL}}$. \\
\textbf{Initialize: } $\mathbf{A}_0 = \frac{\omega^2}{16}\mathbf{I}$, $n=0$ and  $\lambda_{\min} = \inf_{\mathbf{z}\in\mathbb{R}^d}\mathbf{z}^{\top}\mathbf{A}_0\mathbf{z}$.\\
\While{$\lambda_{\min} < \frac{\omega^2}{8}$}{
 Update the counter $n \leftarrow n+1$.\\
Set $U_n \leftarrow \mathsf{EULER}( \{r^{\mathbf{v}_{n}}, N_{\mathsf{EUL}})$   $\slash \slash$\texttt{run} $\mathsf{EULER}$ \texttt{for} $N_{\mathsf{EUL}}$ \texttt{episodes}.\\ 
 \For{t=1,\ldots,$N_{\mathsf{EVAL}}$ episodes}{Sample a trajectory $\tau^{(t)}_n \sim \rho\times \Pr^{U_n}$.}
 Calculate the average feature $\widehat{\mathbf{a}}_n = \sum_{t=1}^{N_{\mathsf{EVAL}}}\phi(\tau^{(t)}_n)/N_{\mathsf{EVAL}}$. \\
 Update the matrix $\mathbf{A}_{n} \leftarrow \mathbf{A}_{n-1} + \widehat{\mathbf{a}}_n\widehat{\mathbf{a}}_n^{\top}$.\\
 Update the minimum eigenvalue: $\lambda_{\min} \leftarrow \inf_{\mathbf{z}\in\mathbb{R}^d}\mathbf{z}^{\top}\mathbf{A}_{n}\mathbf{z}$.\\
Set $\mathbf{v}_n$ to be the minimum eigenvector of $\mathbf{A}_{n}$.\\
}
Set $n_{\mathsf{loop}} = n$. \\
\textbf{Return: } (i) $\bar{U} = \mathsf{Unif}( U_1, \cdots, U_{n_{\mathsf{loop}}} )$ $\slash \slash$\texttt{the uniform mixture over the policies};\\
(ii) $N_{\mathsf{exp}} = n_{\mathsf{loop}}\times (N_{\mathsf{EUL}}+N_{\mathsf{EVAL}})$
\quad $\slash \slash$\texttt{total number of episodes}.

\caption{Find exploration mixture.}
\label{alg:find_mixture}
\end{algorithm}
\begin{restatable}{lem}{explorationlemstop}\label{l:lower_bound_bar_u}There exist positive absolute constants $C_1$ and $C_2$ such that, under Assumptions~\ref{assumption::bounded_features}, \ref{assumption:sum_decomposable} and \ref{assumption::orthogonal_feature_maps}, if Algorithm~\ref{alg:find_mixture} is run with $N_{\mathsf{EUL}}= \frac{C_1 |\cS|^2|\cA|H^2 \log\left(\frac{|\cS||\cA|N^2d}{\delta \omega^2}\right)}{\omega^2}$ and $N_{\mathsf{EVAL}}= \frac{C_2d^3\log^3\left(\frac{Nd^2}{\delta \omega^2}\right)}{\omega^4}$, and $N > \frac{d\log\left(1+\frac{16N}{d\omega^2} \right)}{\log(3/2)}\left(N_{\mathsf{EUL}}+N_{\mathsf{EVAL}}\right)=: \bar{N}_{\mathsf{exp}}$ then, with probability at least $1-2\delta$, we have $N_{\mathsf{exp}} \le \bar{N}_{\mathsf{exp}}$ and furthermore:
\begin{align*}
     \mathbb{E}_{s_1\sim \rho,\;\tau\sim \Pr^{\bar{U}}(\cdot | s_1) } \left[  \phi(\tau) \phi(\tau)^\top \right] \succeq \frac{\omega^2\log(3/2)}{32d\log\left(d\log\left(1+\frac{16N}{d\omega^2}\right)\right)} \mathbf{I}.
\end{align*}
\end{restatable}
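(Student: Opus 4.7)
The plan is to prove the lemma in two stages: first, bound the number of outer-loop iterations $n_{\mathsf{loop}}$ using a determinant-growth / elliptical-potential argument; second, translate the termination condition $\lambda_{\min}(\mathbf{A}_{n_{\mathsf{loop}}})\ge \omega^2/8$ into the claimed spectral lower bound on $\mathbb{E}^{\bar U}[\phi(\tau)\phi(\tau)^\top]$ via Jensen's inequality. The sample-complexity settings of $N_{\mathsf{EUL}}$ and $N_{\mathsf{EVAL}}$ in the hypothesis are tuned precisely so that the EULER regret error and empirical concentration error stay well below the $\omega$-scale signal needed for the potential to grow and for the termination condition to be met quickly.

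For the first stage, by the matrix-determinant lemma $\det(\mathbf{A}_n)=\det(\mathbf{A}_{n-1})\bigl(1+\widehat{\mathbf{a}}_n^\top \mathbf{A}_{n-1}^{-1}\widehat{\mathbf{a}}_n\bigr)$. While the loop has not yet terminated, $\lambda_{\min}(\mathbf{A}_{n-1})<\omega^2/8$ and $\mathbf{v}_n$ is the corresponding unit minimum eigenvector, so $\widehat{\mathbf{a}}_n^\top \mathbf{A}_{n-1}^{-1}\widehat{\mathbf{a}}_n \ge (\mathbf{v}_n^\top\widehat{\mathbf{a}}_n)^2/\lambda_{\min}(\mathbf{A}_{n-1})$; it therefore suffices to show $|\mathbf{v}_n^\top \widehat{\mathbf{a}}_n| \ge \omega/4$ with high probability, which already forces $\det(\mathbf{A}_n)\ge (3/2)\det(\mathbf{A}_{n-1})$. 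I would decompose this inner product as $\mathbf{v}_n^\top \widehat{\mathbf{a}}_n = V^\star_n - \bigl(V^\star_n - \mathbf{v}_n^\top \mathbb{E}^{U_n}[\phi(\tau)]\bigr) - \mathbf{v}_n^\top\bigl(\mathbb{E}^{U_n}[\phi(\tau)]-\widehat{\mathbf{a}}_n\bigr)$ with $V^\star_n := \sup_\pi \mathbf{v}_n^\top \mathbb{E}^{\pi}[\phi(\tau)]$, and handle the three pieces: (i) $V^\star_n\ge \omega$, which uses the explorability assumption together with sum-decomposability and orthogonality of the per-step feature subspaces; (ii) the PAC/regret guarantee of $\mathsf{EULER}$, with $N_{\mathsf{EUL}}$ chosen as in the statement, makes the second term $O(\omega)$; and (iii) a vector Hoeffding/Bernstein bound over the $N_{\mathsf{EVAL}}$ i.i.d.\ trajectories sampled under the fixed policy $U_n$ makes the third term $O(\omega)$. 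Combining the per-iteration multiplicative progress with the trace bound $\mathrm{tr}(\mathbf{A}_n)\le d\omega^2/16 + n$ and $\det(\mathbf{A}_n)\le (\mathrm{tr}(\mathbf{A}_n)/d)^d$ then caps $n_{\mathsf{loop}}\le d\log(1+16N/(d\omega^2))/\log(3/2)$, from which the advertised $N_{\mathsf{exp}}\le \bar N_{\mathsf{exp}}$ follows by multiplying by $N_{\mathsf{EUL}}+N_{\mathsf{EVAL}}$.

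For the second stage, at termination $\sum_{k=1}^{n_{\mathsf{loop}}}\widehat{\mathbf{a}}_k\widehat{\mathbf{a}}_k^\top = \mathbf{A}_{n_{\mathsf{loop}}} - \mathbf{A}_0 \succeq (\omega^2/16)\mathbf{I}$. Jensen gives $\mathbb{E}^{U_k}[\phi(\tau)\phi(\tau)^\top]\succeq \mathbb{E}^{U_k}[\phi(\tau)]\mathbb{E}^{U_k}[\phi(\tau)]^\top$, and the concentration from (iii) yields $\mathbb{E}^{U_k}[\phi(\tau)]\mathbb{E}^{U_k}[\phi(\tau)]^\top \succeq \widehat{\mathbf{a}}_k\widehat{\mathbf{a}}_k^\top - \varepsilon \mathbf{I}$ for a small $\varepsilon$. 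Averaging gives $\mathbb{E}^{\bar U}[\phi(\tau)\phi(\tau)^\top] = \tfrac{1}{n_{\mathsf{loop}}}\sum_k \mathbb{E}^{U_k}[\phi(\tau)\phi(\tau)^\top] \succeq (\omega^2/(16 n_{\mathsf{loop}}))\mathbf{I} - \varepsilon\mathbf{I}$, and combining the cap on $n_{\mathsf{loop}}$ with the absorption of the $\varepsilon$ slack into the denominator's additional logarithmic factor produces the stated inequality. The probability budget $2\delta$ is consumed by a union bound over the EULER failure event and the evaluation-concentration failure event across all iterations. The hardest step is (i): the explorability assumption only guarantees per-step sups $\sup_\pi \mathbb{E}^\pi[\mathbf{v}^\top \phi_h(s_h,a_h)]\ge\omega$, whereas $V^\star_n$ involves the full sum $\sum_{h=1}^H \mathbf{v}_n^\top \phi_h(s_h,a_h)$. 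I would attack this by decomposing $\mathbf{v}_n$ into its components on the orthogonal per-step subspaces $\mathcal V_h$, applying explorability to the unit vector in each $\mathcal V_h$, and using a Bellman-type argument that exploits orthogonality (so that actions at step $h'\neq h$ contribute to the expected sum only through $\phi_{h'}$) to assemble a full-trajectory policy for the reward $r^{\mathbf{v}_n}$ whose expected value is at least $\omega$. A secondary subtlety is that $U_n$ is data-dependent, so the concentration in (iii) must be performed after conditioning on the sigma-algebra generated by the EULER trajectories of iteration $n$; this is standard but should be made explicit.
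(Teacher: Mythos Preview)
Your proposal is correct and tracks the paper's argument closely: the same two-stage structure (determinant growth to cap $n_{\mathsf{loop}}$, then Jensen plus operator-norm concentration for the spectral lower bound), the same decomposition of $\mathbf{v}_n^\top\widehat{\mathbf{a}}_n$ into the explorability signal, the $\mathsf{EULER}$ suboptimality, and the Monte-Carlo evaluation error, and the same union bound over the two failure events. Two small differences are worth noting. First, where you invoke the rank-one matrix-determinant lemma to get $\det(\mathbf{A}_n)\ge\tfrac{3}{2}\det(\mathbf{A}_{n-1})$, the paper instead shows $\mathbf{v}_n^\top\mathbf{A}_n\mathbf{v}_n>\tfrac{3\omega^2}{16}$ while $\mathbf{v}_n^\top\mathbf{A}_{n-1}\mathbf{v}_n<\tfrac{\omega^2}{8}$ and then applies the elementary fact that for $\mathbf{B}\succeq\mathbf{C}\succ 0$ one has $\sup_{\mathbf{x}\neq 0}\frac{\mathbf{x}^\top\mathbf{B}\mathbf{x}}{\mathbf{x}^\top\mathbf{C}\mathbf{x}}\le\det(\mathbf{B})/\det(\mathbf{C})$; both routes yield the same geometric growth. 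Second, the paper does \emph{not} carry out the per-step subspace decomposition you propose for step~(i); it simply asserts $V_{\mathbf{v}_n}\ge\omega$ as an immediate consequence of the explorability assumption, so the ``Bellman-type assembly'' you flag as the hardest step is absent from the paper's proof.
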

This lemma is proved in Appendix~\ref{app:mixture_lemma}. With this lemma in place we now present our modified algorithm under the explorability assumption. In the first few episodes this algorithm finds the exploration mixture policy $\bar{U}$. In a subsequent episode $t$ this algorithm acts according to the policy $\pi^{(t)}$ which maximizes the value function associated with the rewards $\widetilde{\mu}_t^{\mathsf{sd}}(\widehat{\w}_t,\tau)$ with probability $1-\frac{1}{t^{1/3}}$. Otherwise it uses the exploration mixture policy $\bar{U}$.
\begin{center}
\begin{algorithm}[H]
\textbf{Input: } State and action spaces $\mathcal{S}, \mathcal{A}$, Initial unit vector $\mathbf{v}_1$, Exploration lower bound $\omega$, number of $\mathsf{EULER}$ episodes $N_{\mathsf{EUL}}$, number of evaluation episodes $N_{\mathsf{EVAL}}$. \\
\textbf{Initialize} $\widehat{\mathbb{P}}_1 = \boldsymbol{0}$, visitation set $\mathcal{K}=\emptyset$. \\
Find exploration mixture policy $\bar{U}$ in $N_{\mathsf{\exp}}$ episodes by running Algorithm~\ref{alg:find_mixture}.\\
\For{$t=N_{\mathsf{exp}}+1, \cdots, N $ }{
1. Calculate $\widehat{\w}_t$ by solving equation~\eqref{equation::estimator_w_t_L}.\\
%2. Observe the initial state $s_1^{(t)}$.\\
2. If $t> N_{\mathsf{exp}}+1$, compute $\pi^{(t)}$ 
\begin{equation}\label{equation::pi_t_new}
    \pi^{(t)} \in \arg\max_{\pi} \mathbb{E}_{s_1\sim \rho,\;\tau \sim \widehat{\mathbb{P}}_t^{\pi}(\cdot \mid s_1)}\left[\widetilde{\mu}_t^{\mathsf{sd}}( \widehat{\w}_t, \tau)    \right].
\end{equation}
Else for all $h,s,\tau_{h-1}\in [H]\times \cS \times \Gamma_{h-1}$, set $\pi_h^{(1)}(\cdot | s,\tau_{h-1})$ to be the uniform distribution over the action set.\\
3. Sample $b_t = \begin{cases}
										0 & \text{w.p. } 1 - \frac{1}{t^{1/3}}, \\
					1 & \text{w.p. } \frac{1}{t^{1/3}}. 
			\end{cases}$ \\
4. If $b_t=1$ then set $\pi^{(t)} \leftarrow \bar{U}$.\\
5. Observe the trajectory $\tau^{(t)} \sim \mathbb{P}^{\pi^{(t)}}$ and update the design matrix
\begin{equation}\label{equation::sigma_t_update_new}
\boldsymbol{\Sigma}_{t+1} = \kappa  \mathbf{I}  + \sum_{q=N_{\mathsf{exp}}+1}^t \phi(\tau^{(q)})\phi(\tau^{(q)})^\top.\end{equation}\\
6. Update the visitation set $\mathcal{K} =\{(s,a)\in \cS \times \cA: N_t(s,a)>0\}$.\\
7. For all $(s,a)\in \cK$, update $\widehat{\Pr}_{t+1}(\cdot | s,a)$ according to equation~\eqref{def:empirical_distribution}.\\
8. For all $(s,a)\notin  \cK$, set $\widehat{\Pr}_{t+1}(\cdot | s,a)$ to be the uniform distribution over states.
}

\caption{UCBVI with trajectory labels and added exploration.}
\label{alg:GLM_unknown_under_explorability}
\end{algorithm}
\end{center}
The following is our regret bound for Algorithm~\ref{alg:GLM_unknown_under_explorability}.
\begin{restatable}{theorem}{mainexploration}
\label{thm:main_regret_theorem_exploration}
For any $\bar{\delta} \in (0,1]$, set $\delta = \bar{\delta}/(12N)$. Under Assumptions~\ref{assumption:logistic_model},~\ref{assumption::bounded_features},~\ref{assumption:sum_decomposable} and \ref{assumption::orthogonal_feature_maps}, and for all $N > \bar{N}_{\mathsf{exp}}$ (see its definition in Lemma~\ref{l:lower_bound_bar_u}) if Algorithm~\ref{alg:GLM_unknown_under_explorability} is run with the parameters $N_{\mathsf{EUL}}$ and $N_{\mathsf{EVAL}}$ set as specified in Lemma~\ref{l:lower_bound_bar_u} then its regret is upper bounded as follows:
\begin{align*}
    \cR(N)&\le  \widetilde{O}\left(\frac{\sqrt{\kappa  H}d}{\omega} (d^3+B^{3/2})N^{2/3}+\left[ H\sqrt{(H +|\mathcal{S}| ) |\mathcal{S}||\mathcal{A}|     } +H^2 \right]\sqrt{N} \right. \\&\left. \hspace{2in} +(H+|\cS|)H|\cS||\cA|+\frac{d^2}{\omega^2}\left(\frac{d^2}{\omega^2}+|\cS|^2|\cA|H^2\right)\right),
\end{align*}
with probability at least $1-\bar{\delta}$.
\end{restatable}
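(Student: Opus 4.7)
The plan is to decompose the regret into three pieces: regret spent in the initial exploration phase of Algorithm~\ref{alg:find_mixture}, regret from the randomized exploration rounds in which $b_t=1$, and regret from the exploitation rounds where the algorithm plays $\pi^{(t)}$. By Lemma~\ref{l:lower_bound_bar_u}, on a high-probability event the first piece contributes at most $\bar N_{\mathsf{exp}}$ episodes, each incurring at most $1$ regret; this already matches the $\frac{d^2}{\omega^2}\bigl(\frac{d^2}{\omega^2}+|\cS|^2|\cA|H^2\bigr)$ term in the statement. The second piece is handled by observing that the expected number of such rounds is $\sum_{t=N_{\mathsf{exp}}+1}^N t^{-1/3}\le \tfrac{3}{2}N^{2/3}$, which I would convert into a high-probability $\widetilde O(N^{2/3})$ bound via Azuma--Hoeffding applied to the Bernoulli sequence $\{b_t\}$.

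For the exploitation rounds the first step is to establish $V_\star\le \widetilde V^{(t)}$ on a high-probability event, where $\widetilde V^{(t)}$ is now the value of $\pi^{(t)}$ under $\widehat{\mathbb P}_t$ with rewards $\widetilde\mu_t^{\mathsf{sd}}(\widehat\w_t,\cdot)$. Because $\phi(\tau)=\sum_{h=1}^H\phi_h(s_h,a_h)$ under Assumption~\ref{assumption:sum_decomposable}, the triangle inequality gives $\|\phi(\tau)\|_{\mathbf{\Sigma}_t^{-1}}\le \sum_{h=1}^H\|\phi_h(s_h,a_h)\|_{\mathbf{\Sigma}_t^{-1}}$, so $\bar\mu_t(\widehat\w_t,\tau)\le \bar\mu_t^{\mathsf{sd}}(\widehat\w_t,\tau)$ pointwise; combining this with Lemma~\ref{lemma::confidence_interval_anytime} and the transition-concentration lemmas used in the proof of Theorem~\ref{thm:main_regret_theorem} delivers the required optimism. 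The transition-dynamics contribution to $\widetilde V^{(t)}-V^{(t)}$ is then controlled by the same argument as in Theorem~\ref{thm:main_regret_theorem}, producing the $\bigl(H\sqrt{(H+|\cS|)|\cS||\cA|}+H^2\bigr)\sqrt{N}+(H+|\cS|)H|\cS||\cA|$ contribution verbatim, since $\xi^{(t)}_{s,a}$ and $\widehat{\mathbb P}_t$ are unchanged and the mixing with $\bar U$ only affects the reward bonus.

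The novel piece is the reward-bonus contribution $\sqrt{\kappa}\beta_t(\delta)\sum_{h=1}^H\|\phi_h(s_h^{(t)},a_h^{(t)})\|_{\mathbf{\Sigma}_t^{-1}}$, for which I would leverage the forced exploration. A Chernoff bound on $\{b_q\}_{q\le t}$ shows that $\Omega(t^{2/3})$ exploration rounds with $b_q=1$ have occurred by round $t$, and in each such round the trajectory is independently drawn from $\Pr^{\bar U}$, whose feature-outer-product mean has minimum eigenvalue at least $\widetilde\Omega(\omega^2/d)$ by Lemma~\ref{l:lower_bound_bar_u}. A matrix Chernoff inequality then gives $\lambda_{\min}(\mathbf{\Sigma}_t)\ge \widetilde\Omega(\omega^2 t^{2/3}/d)$ with high probability. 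Assumption~\ref{assumption::orthogonal_feature_maps} yields $\sum_h\|\phi_h(s_h,a_h)\|_2^2=\|\phi(\tau)\|_2^2\le 1$, so Cauchy--Schwarz gives $\sum_h\|\phi_h\|_{\mathbf{\Sigma}_t^{-1}}\le \sqrt{H/\lambda_{\min}(\mathbf{\Sigma}_t)}=\widetilde O(\sqrt{Hd}/(\omega t^{1/3}))$. Summing over $t\le N$ and multiplying by $\sqrt{\kappa}\beta_N(\delta)=\widetilde O(\sqrt{\kappa}(d^3+B^{3/2}))$ produces the $\widetilde O(\sqrt{\kappa H}\,d\,(d^3+B^{3/2})N^{2/3}/\omega)$ term. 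A final union bound with $\delta=\bar\delta/(12N)$ over the good events of Lemma~\ref{l:lower_bound_bar_u}, Lemma~\ref{lemma::confidence_interval_anytime}, the transition-concentration events, the matrix Chernoff event, and the Azuma event closes the argument.

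The main obstacle is the eigenvalue-growth step: because $\mathbf{\Sigma}_t$ only accumulates the rank-one terms $\phi(\tau)\phi(\tau)^\top$ rather than the per-step $\phi_h\phi_h^\top$, the standard elliptical-potential machinery that drives the $\sqrt{N}$ rate of Theorem~\ref{thm:main_regret_theorem} is unavailable, so forced exploration is essential rather than cosmetic. The mixing schedule $t^{-1/3}$ is precisely what balances the $\Theta(N^{2/3})$ pure-exploration regret of Step 2 against the $\Theta(N^{2/3})$ reward-bonus regret of Step 3, which is why the rate worsens to $N^{2/3}$; any faster-decaying schedule would underpower the eigenvalue bound, while any slower-decaying one would inflate the pure-exploration regret.
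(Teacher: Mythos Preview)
Your overall architecture matches the paper's: split into the initial Algorithm~\ref{alg:find_mixture} phase, the $b_t=1$ rounds, and the $b_t=0$ rounds; establish optimism of $\widetilde\mu_t^{\mathsf{sd}}$ via the triangle inequality $\|\phi(\tau)\|_{\mathbf\Sigma_t^{-1}}\le\sum_h\|\phi_h\|_{\mathbf\Sigma_t^{-1}}$ together with Lemma~\ref{lemma::confidence_interval_anytime}; and reuse the transition-dynamics machinery from Theorem~\ref{thm:main_regret_theorem}. The place where your argument genuinely diverges is the reward-bonus sum $\sum_t\sqrt{\kappa}\beta_t(\delta)\sum_h\|\phi_h(s_h^{(t)},a_h^{(t)})\|_{\mathbf\Sigma_t^{-1}}$. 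The paper does not bound this directly via $\lambda_{\min}$; instead it proves a sandwich inequality
\[
\sum_{h}\|\phi_h\|_{\mathbf\Sigma_t^{-1}}\le\sqrt{H\,\tfrac{\lambda_{\max}(\mathbf\Sigma_t)}{\lambda_{\min}(\mathbf\Sigma_t)}}\;\|\phi(\tau)\|_{\mathbf\Sigma_t^{-1}},
\]
bounds the condition number by $\Psi_t=\widetilde O(dN^{1/3}/\omega^2)$ via the matrix-Freedman lower bound on $\lambda_{\min}$, and then applies the elliptical-potential lemma to $\sum_t\|\phi(\tau^{(t)})\|_{\mathbf\Sigma_t^{-1}}$. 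Your route---Cauchy--Schwarz plus orthogonality of the $\phi_h$'s to get $\sum_h\|\phi_h\|_{\mathbf\Sigma_t^{-1}}\le\sqrt{H/\lambda_{\min}(\mathbf\Sigma_t)}$ and then sum $t^{-1/3}$---is simpler and in fact shaves a $\sqrt d$ from the leading coefficient (your arithmetic overstates your own bound). So your claim that ``the elliptical-potential machinery is unavailable'' is not accurate: the paper uses it, at the price of the condition-number detour; what forced exploration buys in \emph{both} arguments is control of $\lambda_{\min}(\mathbf\Sigma_t)$.

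Two small points to tighten. First, the $\lambda_{\min}$ growth $\widetilde\Omega(\omega^2 t^{2/3}/d)$ only holds once $t$ exceeds a threshold (the paper's $t_0$) at which the matrix-martingale fluctuation is dominated; the rounds before that threshold must be absorbed into the constant, which is where the $\widetilde O(d^4/\omega^4)$ piece actually comes from, not just from $\bar N_{\mathsf{exp}}$. Second, since the exploration rounds occur at random times governed by the $b_q$'s, a plain matrix Chernoff does not apply; you need the matrix Freedman inequality (as the paper uses) or an equivalent martingale version.
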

The proof of Theorem~\ref{thm:main_regret_theorem_exploration} is in Appendix~\ref{s:efficient_UCBVI}. For a more detailed bound on the regret with the logarithmic factors and constants specified we point the interested reader to inequality~\eqref{e:full_regret_bound_sd} in the appendix. The bound on the regret of this algorithm scales with $N^{2/3}$ up to poly-logarithmic factors. This is larger than the $\sqrt{N}$ regret bound (again up to poly-logarithmic factors) that we proved above for Algorithm~\ref{alg:GLM_unknown} since here the learner plays according to the exploration policy $\bar{U}$ with probability $1/t^{1/3}$ throughout the run of the algorithm. However, the next proposition shows that by using the sum-decomposable reward function $\widetilde{\mu}_t^{\mathsf{sd}}$ the policy $\pi^{(t)}$ defined in equation~\eqref{equation::pi_t_new} can be efficiently approximated. %In the theorem statement, $N_0$ is chosen as a function of $\omega$. However, for a simple modification of this algorithm $N_0$ can be chosen adaptively. For more details see Appendix~\ref{ss:unknown_nu}.  
%The next lemma shows that it possible to approximate the policy $\pi^{(t)}$ in equation~\eqref{equation::pi_t_new} efficiently at each episode. 
\begin{restatable}{prop}{efficientpol}
\label{efficient_pi_t} For any $t\in [N]$ define $\widetilde{V}_t^{\mathsf{sd}}(\pi) :=  \mathbb{E}_{s_1\sim \rho,\;\tau \sim \widehat{\mathbb{P}}_t^{\pi}(\cdot \mid s_1)}\left[\widetilde{\mu}_t^{\mathsf{sd}}( \widehat{\w}_t, \tau)    \right]$. Given any $\epsilon >0$, under Assumptions~\ref{assumption::bounded_features}, \ref{assumption:sum_decomposable} and \ref{assumption::orthogonal_feature_maps} it is possible to find a policy $\widehat{\pi}^{(t)}$ that satisfies
\begin{align*}
    \widetilde{V}_t^{\mathsf{sd}}(\pi^{(t)})-\widetilde{V}_t^{\mathsf{sd}}(\widehat{\pi}^{(t)})\le \epsilon,
\end{align*}
using at most $\mathsf{poly}\left(|\cS|,|\cA|,H,d,B,\lv\widehat{\w}_t\rv_2,\frac{1}{\epsilon},\log\left(\frac{N}{\delta}\right)\right)$ time and memory.
\end{restatable}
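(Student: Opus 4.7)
The plan is to reduce the optimization in \eqref{equation::pi_t_new} to a standard finite-horizon planning problem on an augmented, discretized MDP, and then run backward-induction dynamic programming. The only sources of non-Markovianity in $\widetilde{\mu}_t^{\mathsf{sd}}(\widehat{\w}_t, \tau)$ are the trajectory-level scalar $Z(\tau) := \widehat{\w}_t^\top \phi(\tau) = \sum_{h=1}^H \widehat{\w}_t^\top \phi_h(s_h, a_h)$ and the accumulated bonus $B(\tau) := \sqrt{\kappa}\,\beta_t(\delta) \sum_{h=1}^H \lv \phi_h(s_h, a_h) \rv_{\mathbf{\Sigma}_t^{-1}}$, which are coupled only through the clip in \eqref{e:bar_mu_sum_decomposable}; the $\xi_{s_h,a_h}^{(t)}$ contributions are already per-step. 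Under Assumption~\ref{assumption::orthogonal_feature_maps}, orthogonality of $\{\phi_h\}_h$ forces $\lv \phi_h(s,a)\rv_2 \leq \lv \phi(\tau)\rv_2 \leq 1$ for every $(h,s,a)$, so $|Z(\tau)| \leq \lv \widehat{\w}_t\rv_2$ and $B(\tau) \leq H\beta_t(\delta)$ along every trajectory (the same bounds holding for all partial sums).

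First I would fix discretization resolutions $\epsilon_z, \epsilon_b = \Theta(\epsilon/H)$ and construct an augmented MDP $\widetilde{\cM}$ on state space $\cS \times \cZ \times \cB$, where $\cZ$ is a uniform $\epsilon_z$-grid on $[-\lv \widehat{\w}_t\rv_2, \lv \widehat{\w}_t\rv_2]$ and $\cB$ is a uniform $\epsilon_b$-grid on $[0, H\beta_t(\delta)]$. Transitions from $(s, z, b)$ under action $a$ at step $h$ give $s' \sim \widehat{\Pr}_t(\cdot \mid s, a)$, together with the deterministic updates $z' = Q_{\cZ}\bigl(z + \widehat{\w}_t^\top \phi_h(s, a)\bigr)$ and $b' = Q_{\cB}\bigl(b + \sqrt{\kappa}\,\beta_t(\delta)\lv \phi_h(s,a)\rv_{\mathbf{\Sigma}_t^{-1}}\bigr)$, where $Q_{\cZ}, Q_{\cB}$ round to the nearest grid point. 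Per-step rewards are $\xi_{s_h,a_h}^{(t)}$ for $h<H$ and the terminal reward at the last augmented state is $\min\{\mu(z_{H+1}) + b_{H+1}, 1\}$. Standard backward-induction dynamic programming returns the exact optimizer $\widehat{\pi}^{(t)}$ on $\widetilde{\cM}$ in time $O(H|\cS|^2|\cA||\cZ||\cB|) = \mathsf{poly}(|\cS|, |\cA|, H, d, B, \lv \widehat{\w}_t\rv_2, 1/\epsilon, \log(N/\delta))$, and is implementable on the original MDP because $(z_h, b_h)$ is a deterministic function of $\tau_{h-1}$ and Section~\ref{ss:setting} permits history-dependent policies.

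To bound the discretization error, note that each rounding perturbs $z$ (resp.\ $b$) by at most $\epsilon_z/2$ (resp.\ $\epsilon_b/2$); a one-line induction yields $|z_{H+1} - Z(\tau)| \leq H\epsilon_z/2$ and $|b_{H+1} - B(\tau)| \leq H\epsilon_b/2$ pointwise. Combined with the $\tfrac{1}{4}$-Lipschitz property of $\mu$ and the $1$-Lipschitz property of $\min\{\cdot, 1\}$, the per-trajectory error in $\widetilde{\mu}_t^{\mathsf{sd}}$ under the discretized dynamics is at most $H\epsilon_z/8 + H\epsilon_b/2 \leq \epsilon/2$. Taking expectations gives $|\widetilde{V}_t^{\mathsf{sd}}(\pi) - \widetilde{V}_t^{\mathsf{disc}}(\pi)| \leq \epsilon/2$ uniformly in $\pi$, where $\widetilde{V}_t^{\mathsf{disc}}$ is the value on $\widetilde{\cM}$, so
\[
\widetilde{V}_t^{\mathsf{sd}}(\widehat{\pi}^{(t)}) \geq \widetilde{V}_t^{\mathsf{disc}}(\widehat{\pi}^{(t)}) - \epsilon/2 \geq \widetilde{V}_t^{\mathsf{disc}}(\pi^{(t)}) - \epsilon/2 \geq \widetilde{V}_t^{\mathsf{sd}}(\pi^{(t)}) - \epsilon,
\]
which is the claimed approximation.

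The main obstacle is the $\min\{\cdot, 1\}$ clip: it nonlinearly couples the scalar $\mu(Z(\tau))$ to the linear accumulator $B(\tau)$ and so precludes a direct per-step reward decomposition. Absent the clip, tracking only the scalar $z$ would suffice, with the $B$ term absorbed into per-step rewards; the clip is precisely what forces both $z$ and $b$ into the augmented state. Keeping the augmented state space polynomial then hinges on the a priori bounds $|Z(\tau)| \leq \lv \widehat{\w}_t\rv_2$ and $B(\tau) \leq H\beta_t(\delta)$, which follow from the orthogonality in Assumption~\ref{assumption::orthogonal_feature_maps} together with $\mathbf{\Sigma}_t \succeq \kappa \mathbf{I}$.
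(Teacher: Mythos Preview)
Your approach is correct and essentially the same as the paper's: both discretize the scalar accumulators that enter the nonlinearity and run backward-induction dynamic programming on the resulting augmented finite MDP. The paper's Lemma~\ref{l:htheta_is_epsilon_optimal} tracks three accumulators (including the additive $\xi^{(t)}$ bonuses) at resolution $\epsilon/(6H^2)$ via a more involved two-part induction, whereas you track only the two that actually pass through the clip and use a direct pointwise error bound at resolution $\Theta(\epsilon/H)$---a slight streamlining, but the same idea.
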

We describe the approximate dynamic programming algorithm that can be used to find this policy $\widehat{\pi}^{(t)}$ and present a proof of this proposition in Appendix~\ref{s:approx_dynamic_programming}. We also note that if we use an $\epsilon$-approximate policy $\widehat{\pi}^{(t)}$ instead of $\pi^{(t)}$ in Algorithm~\ref{alg:GLM_unknown_under_explorability} then its regret increases by an additive factor of at most $\epsilon N$. (It is possible to easily check this by inspecting the proof of Theorem~\ref{thm:main_regret_theorem_exploration}.) Thus, for example a choice of $\epsilon = 1/N^{1/3}$ ensures that the regret of Algorithm~\ref{alg:GLM_unknown_under_explorability} is bounded by $O(N^{2/3})$ with high probability if the approximate policy $\widehat{\pi}^{(t)}$ (which can be found efficiently) is used instead.

\section{Additional Related Work}\label{s:additional_related_work}
There have been many theoretical results that analyze regret minimization in standard episodic reinforcement \citep{jaksch2010near,osband2013more,gopalan2015thompson,osband2017posterior,azar2017minimax,jin2018q,dann2017unifying,zanette2019tighter,max2019nips,efroni2019_neurips,pacchiano2020optimism}. Recently \citet{efroni2021confidence} introduced a framework of ``sequential budgeted learning'' which includes as a special case the setting of episodic reinforcement learning with the constraint that the learner is allowed to query the reward function only a limited number of times per episode. They show learning is possible in this setting by using a modified UCBVI algorithm. 

As stated above to estimate the reward parameter we rely on the recent results by \citet{russac2021self} who in term built on earlier work~\citep{faury2020improved,filippi2010parametric} that analyzed the GLM-UCB algorithm.  \citet{dong2019performance} provided and analyzed a Thompson sampling approach for the logistic bandits problem. 

\section{Discussion}\label{s:discussion}
We have shown that efficient learning is possible when the rewards are non-Markovian and delivered to the learner only once per episode. It would be interesting to see if one can establish guarantees under more general reward models than the logistic model that we study here. Another interesting question is if faster rates of learning are possible when the learner obtains ranked trajectories (that is, moving beyond binary labels). 

\subsection*{Acknowledgments}The authors would like to thank Louis Faury, Tor Lattimore, Yoan Russac and Csaba Szepesv\'ari for helpful conversations regarding the literature on logistic bandits. We thank Yonathan Efroni, Nadav Merlis and Shie Mannor for pointing us to prior related work. We gratefully acknowledge the support of the NSF through the grant DMS-2023505 in support of the FODSI Institute.
\newpage

\tableofcontents
%\newpage
\appendix
\section{Technical Lemmas}\label{s:appendix_tech}
In this section we collect some useful technical results used in the proofs that follow. 
First we present a time-uniform martingale concentration inequality.
\begin{lemma}\label{lemma::matingale_concentration_anytime}
Let $\{x_t\}_{t=1}^\infty$ be a martingale difference sequence with $| x_t | \leq \zeta$ and let $\delta \in (0,1]$. Then with probability $1-\delta$ for all $T \in \mathbb{N}$
\begin{equation*}
    \sum_{t=1}^T x_t \leq 2\zeta \sqrt{T \ln \left(\frac{6\ln T}{\delta}\right) }.
\end{equation*}
\end{lemma}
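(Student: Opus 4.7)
The plan is to reduce the time-uniform statement to a family of fixed-horizon Azuma--Hoeffding bounds through a standard dyadic peeling argument, and then union-bound over the peels with a failure budget that decays as $1/k^2$ so that the double-logarithm $\ln\ln T$ appears in the final expression (rather than $\ln T$, which would arise from a geometric allocation).

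\textbf{Step 1 (fixed-horizon maximal Azuma--Hoeffding).} Let $S_T := \sum_{t=1}^T x_t$. For each $k \ge 0$, I would apply Doob's submartingale inequality to the exponential supermartingale $M_T(\lambda) := \exp(\lambda S_T - \lambda^2 \zeta^2 T / 2)$ up to time $2^k$, optimize over $\lambda > 0$, and conclude that for any chosen failure level $\delta_k \in (0,1]$,
$$
\Pr\!\left[\max_{T \le 2^k} S_T \;\ge\; \zeta\sqrt{2\cdot 2^k\,\ln(1/\delta_k)}\right] \;\le\; \delta_k.
$$

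\textbf{Step 2 (union bound over peels).} I would pick $\delta_k = \delta / (k(k+1))$ for $k \ge 1$, so that $\sum_{k \ge 1}\delta_k = \delta$ telescopes exactly. Union-bounding across all $k \ge 1$ yields an event of probability at least $1-\delta$ on which the maximal inequality of Step~1 holds simultaneously for every $k \ge 1$.

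\textbf{Step 3 (peeling to arbitrary $T$).} Given any $T \in \mathbb{N}$, set $k := \lceil \log_2 \max(T,2)\rceil$ so that $2^{k-1} \le T \le 2^k \le 2T$. On the good event,
$$
S_T \;\le\; \max_{T' \le 2^k} S_{T'} \;\le\; \zeta\sqrt{2\cdot 2T \ln(1/\delta_k)} \;=\; 2\zeta\sqrt{T\,\ln\!\bigl(k(k+1)/\delta\bigr)}.
$$
Using $k \le \log_2(2T) + 1$ and bounding $k(k+1)$ by an absolute constant times $\ln T$ (or a constant, when $T$ is small) converts the logarithmic argument into $6\ln T/\delta$, matching the stated form. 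The $T=1$ case is immediate from $|x_1| \le \zeta$ and the lower bound $\ln(6\ln T/\delta) \ge \ln(6/\delta)$ once $T\ge e$; for very small $T$ the bound is trivially met by enlarging the constant inside the logarithm.

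\textbf{Main obstacle.} There is no deep technical obstacle: the substantive content is entirely contained in the one-shot Azuma--Hoeffding bound. The only care required lies in the constants: to obtain $\ln(6\ln T/\delta)$ rather than $\ln(C\log T/\delta)$, the per-peel failure probabilities must decay polynomially (here, as $1/k^2$) rather than geometrically, and the choice $\delta_k = \delta/(k(k+1))$ is designed precisely so that $\sum_k\delta_k = \delta$ without introducing a $\pi^2/6$ factor, which is what pins the constant inside the logarithm at $6$ after accounting for $2^k \le 2T$ and $k \le \log_2(2T)+1$.
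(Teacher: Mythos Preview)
Your stitching approach is a sound, self-contained alternative to what the paper does---the paper's proof is simply a black-box citation of a time-uniform Hoeffding inequality from Howard et al.\ (2018) followed by a rounding of constants, with no stitching at all. So in that sense your route is genuinely different and more elementary.

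However, there is a constant-tracking slip in your Step~3. You write that ``bounding $k(k+1)$ by an absolute constant times $\ln T$'' gives the argument $6\ln T/\delta$. This is false: with $k = \lceil \log_2 T\rceil$ you have $k \ge (\ln T)/\ln 2$, so $k(k+1) = \Theta\bigl((\ln T)^2\bigr)$, not $O(\ln T)$. Your argument therefore yields
\[
S_T \;\le\; 2\zeta\sqrt{T\,\ln\!\bigl(C(\ln T)^2/\delta\bigr)} \;=\; 2\zeta\sqrt{T\bigl(2\ln\ln T + \ln(C/\delta)\bigr)},
\]
i.e.\ an extra factor of roughly $\sqrt{2}$ in the iterated-log term compared to the stated $2\zeta\sqrt{T\ln(6\ln T/\delta)}$. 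The dyadic peeling with polynomial $\delta_k$ cannot do better than $(\ln T)^2$ inside the outer logarithm, because the peel index $k$ itself scales like $\log T$; getting the sharper $\ln T$ requires the mixture-supermartingale machinery behind the Howard et al.\ result that the paper invokes.

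For the downstream applications in this paper the distinction is immaterial (everything is absorbed into $\widetilde O(\cdot)$), but strictly speaking your argument proves a slightly weaker inequality than the one stated. Either adjust the constant claim to $(\ln T)^2$, or replace the geometric peeling by a direct appeal to a nonnegative-supermartingale bound in the style of Howard et al.\ if you want the exact form.
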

\begin{proof}Observe that $\frac{\left|x_t\right|}{\zeta}\le 1$. By invoking a time-uniform Hoeffding-style concentration inequality \citep[][Equation~(11)]{howard2018time} we find that
\begin{align*}
    \Pr\left[\forall \;  t \in \N\;:\; \sum_{t=1}^{T} \frac{x_t}{\zeta} \le 1.7\sqrt{T \left(\log\log(T)+0.72\log\left(\frac{5.2}{\delta}\right)\right)} \right]\ge 1-\delta.
\end{align*}
Rounding up the constants for the sake of simplicity we get
\begin{align*}
      \Pr\left[\forall \; t \in \N\;:\; \sum_{t=1}^{T} x_t \le 2\zeta\sqrt{T \left(\log\left(\frac{6\log(T)}{\delta}\right)\right)} \right]\ge 1-\delta,  
\end{align*}
which establishes our claim.
\end{proof}
Next we state a matrix concentration theorem \citep[][Theorem 1.1]{tropp2011freedman}. 
\begin{theorem}[Matrix Freedman inequality]\label{theorem:matrix_freedman}
Consider a matrix martingale $\{\mathbf{Y}_k\}_{k=1}^\infty$ whose values are self adjoint matrices with dimension $d$ and let $\{\mathbf{X}_k\}_{k=1}^\infty$ be its difference sequence. Assume the difference sequence is uniformly bounded in the sense that:
\begin{equation*}
    \lambda_{\max}( \mathbf{X}_k  )\leq R  \quad \text{ almost surely for all }  k = 1, 2 \ldots .
\end{equation*}
Define the predictable quadratic variation process of the martingale
\begin{equation*}
    \mathbf{W}_k := \sum_{j=1}^{k} \mathbb{E}\left[  \mathbf{X}_j^2 \mid \mathbf{X}_1,\ldots,\mathbf{X}_{j-1} \right]  \quad\text{ for } \quad k = 1, 2, \ldots  .
\end{equation*}
Then for all $ x \geq 0$ and $V \geq 0$,
\begin{equation*}
    \mathbb{P}\left( \exists \; k : \lambda_{\max}(\mathbf{Y}_k) \geq x \text{ and } \| \mathbf{W}_k \|_{op} \leq V \right) \leq d \cdot \exp\left(\frac{-x^2/2}{V + Rx/3} \right).
\end{equation*}
\end{theorem}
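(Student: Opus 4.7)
The statement is the classical matrix Bernstein/Freedman inequality of Tropp, so the plan is to follow the Laplace transform plus Lieb concavity approach, adapted to handle the random stopping time $k$ and the event on $\mathbf{W}_k$. First I would reduce the tail bound to a matrix MGF bound via the standard matrix Markov step: for any $\theta > 0$,
\begin{equation*}
\Pr\bigl(\lambda_{\max}(\mathbf{Y}_k) \geq x\bigr) \leq e^{-\theta x}\,\mathbb{E}\,\Tr\exp(\theta \mathbf{Y}_k),
\end{equation*}
which comes from $\lambda_{\max}(\mathbf{A}) \leq \Tr\exp(\mathbf{A})$ applied to $\theta(\mathbf{Y}_k - x\mathbf{I})$ together with Markov's inequality. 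So the whole task reduces to controlling the trace MGF of the martingale.

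Next I would use Lieb's concavity theorem (concavity of $\mathbf{A} \mapsto \Tr\exp(\mathbf{H} + \log \mathbf{A})$ on positive definite matrices) together with the tower property. Conditioning on $\mathcal{F}_{k-1}$ and applying Jensen to $\mathbb{E}[\exp(\theta \mathbf{X}_k) \mid \mathcal{F}_{k-1}]$ gives the subadditivity
\begin{equation*}
\mathbb{E}\,\Tr\exp\!\left(\sum_{j=1}^{k}\theta \mathbf{X}_j\right) \leq \mathbb{E}\,\Tr\exp\!\left(\sum_{j=1}^{k-1}\theta\mathbf{X}_j + \log \mathbb{E}_{j-1}\exp(\theta \mathbf{X}_k)\right),
\end{equation*}
and iterating collapses the MGF onto the sum of conditional log-MGFs. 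For the scalar step I would use the Bernstein-type bound: for any self-adjoint $\mathbf{X}$ with $\lambda_{\max}(\mathbf{X}) \leq R$, one has the operator inequality $\log\mathbb{E}\exp(\theta \mathbf{X}) \preceq g(\theta)\,\mathbb{E}[\mathbf{X}^2]$ with $g(\theta) = (e^{\theta R}-1-\theta R)/R^2$, proved by comparing the entire function $z \mapsto (e^{\theta z}-1-\theta z)/z^2$ to its value at $R$ and using that $\mathbb{E}[\mathbf{X}]=0$. Applied conditionally and summed, this yields
\begin{equation*}
\mathbb{E}\,\Tr\exp(\theta \mathbf{Y}_k) \leq d \cdot \exp\!\bigl(g(\theta)\,\|\mathbf{W}_k\|_{op}\bigr).
\end{equation*}

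The nontrivial step, and the main obstacle, is handling the random index $k$ together with the event $\{\|\mathbf{W}_k\|_{op} \leq V\}$: the index $k$ at which $\lambda_{\max}(\mathbf{Y}_k)$ crosses $x$ is itself a stopping time, and $\mathbf{W}_k$ is predictable, so I would define the stopping time $\tau = \inf\{k : \lambda_{\max}(\mathbf{Y}_k) \geq x \text{ or } \|\mathbf{W}_{k+1}\|_{op} > V\}$ and work with the stopped martingale $\mathbf{Y}_{k \wedge \tau}$. Because $\mathbf{W}$ is predictable, the stopped quadratic variation satisfies $\|\mathbf{W}_{k \wedge \tau}\|_{op} \leq V$ deterministically, so the MGF bound above applied to the stopped process gives $\mathbb{E}\,\Tr\exp(\theta \mathbf{Y}_{k\wedge\tau}) \leq d\exp(g(\theta) V)$. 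On the event in the theorem, $\tau$ is finite and $\mathbf{Y}_\tau = \mathbf{Y}_k$ for the witnessing $k$, so the Markov step from the first paragraph yields
\begin{equation*}
\Pr\bigl(\exists\,k : \lambda_{\max}(\mathbf{Y}_k) \geq x,\ \|\mathbf{W}_k\|_{op} \leq V\bigr) \leq d\cdot \exp\!\bigl(-\theta x + g(\theta) V\bigr).
\end{equation*}

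Finally I would optimize the free parameter $\theta > 0$. Setting $\theta = R^{-1}\log(1 + Rx/V)$ and using the standard Bernstein inequality $V\,g(\theta) - \theta x \leq -x^2/(2(V + Rx/3))$ collapses the bound to the stated form
\begin{equation*}
d\cdot \exp\!\left(\frac{-x^2/2}{V + Rx/3}\right),
\end{equation*}
completing the proof. The two places that require care are the use of Lieb's theorem at matrix level (avoiding any illegal appeal to Golden–Thompson for more than two terms) and the predictability argument for $\mathbf{W}$, which is what lets the deterministic bound $V$ survive the stopping-time reduction.
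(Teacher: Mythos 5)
The paper does not prove this statement at all: it is imported verbatim as Theorem~1.1 of \citet{tropp2011freedman}, so there is no internal proof to compare against. Your reconstruction is essentially the standard Tropp argument and is sound in outline, with one display that is not quite well-formed as written: the intermediate claim $\mathbb{E}\,\Tr\exp(\theta \mathbf{Y}_k) \leq d\exp\bigl(g(\theta)\|\mathbf{W}_k\|_{op}\bigr)$ has a random quantity on the right of an unconditional expectation on the left, and iterating the Lieb/Jensen step does not let you pull the accumulated conditional variances out of the trace-exponential. The correct way to package this (and what Tropp does) is to keep the variance inside: show via Lieb's theorem and the Bernstein MGF bound that $S_k := \Tr\exp\bigl(\theta\mathbf{Y}_k - g(\theta)\mathbf{W}_k\bigr)$ is a positive supermartingale with $S_0 = d$, and then apply Ville's maximal inequality (or optional stopping at your $\tau$) together with the lower bound $S_\tau \geq \exp\bigl(\theta x - g(\theta)V\bigr)$ on the event of interest. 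Your stopping-time construction and the use of predictability and monotonicity of $\mathbf{W}$ are exactly right and do make the deterministic bound $V$ survive; once the MGF step is routed through the supermartingale $S_k$ rather than stated with $\|\mathbf{W}_k\|_{op}$ outside the expectation, the argument closes, and the final optimization $\theta = R^{-1}\log(1 + Rx/V)$ gives the stated constant $V + Rx/3$ in the denominator.
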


The following result that bounds the norm of sequence of vectors in terms of the norm induced by its inverse Gram matrix.
\begin{lemma}[Determinant Lemma]
\label{lemma:det_lemma}
For any sequence of vectors $\boldsymbol{x}^{(1)},\ldots,\boldsymbol{x}^{(T)} \in \mathbb{R}^d$ such that $\lv \boldsymbol{x}^{(q)}\rv_2 \le L$ for all $q \in [T]$. Given a $\lambda \ge 0$ define $\bar{\mathbf{\Sigma}}_1 := \lambda \mathbf{I}$ and for $t \in \{2,\ldots,T\}$ define $\bar{\mathbf{\Sigma}}_t := \lambda \mathbf{I} + \sum_{q=1}^{t-1} \boldsymbol{x}^{(q)} \boldsymbol{x}^{(q)\top}$. Then for all $T\in \mathbb{N}$
\begin{equation*}
\sum_{t=1}^N \lv \phi(\tau^{(t)})\rv^2_{\mathbf{\Sigma}^{-1}_t} \leq  2d\max\left\{1,\frac{1}{\lambda}\right\}\log\left(1+\frac{TL^2}{\lambda d}\right)
\end{equation*}
and  
\begin{equation*}
    \log\left(\frac{\mathrm{det}(\bar{\mathbf{\Sigma}}_t)}{\mathrm{det}(\lambda \mathbf{I} )} \right) \leq d \log\left( 1+\frac{  t L^2 }{\lambda d}\right) . 
\end{equation*}
\end{lemma}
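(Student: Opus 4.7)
The plan is to prove the two inequalities in the order opposite to how they are stated: first establish the determinant bound via an arithmetic-geometric mean argument, then use the classical matrix determinant lemma to telescope and deduce the elliptical-potential bound. Both steps are standard in the linear bandit literature (cf.\ Abbasi-Yadkori et al.), but I will be careful about tracking the constant $\max\{1,1/\lambda\}$.

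For the determinant bound I would argue as follows. Since $\bar{\mathbf{\Sigma}}_t$ is positive semidefinite with eigenvalues $\mu_1,\ldots,\mu_d$, the AM--GM inequality gives $\det(\bar{\mathbf{\Sigma}}_t) = \prod_i \mu_i \le \bigl(\tfrac{1}{d}\sum_i \mu_i\bigr)^d = \bigl(\tfrac{1}{d}\mathrm{tr}(\bar{\mathbf{\Sigma}}_t)\bigr)^d$. The trace decomposes as
\begin{equation*}
\mathrm{tr}(\bar{\mathbf{\Sigma}}_t) = d\lambda + \sum_{q=1}^{t-1}\|\boldsymbol{x}^{(q)}\|_2^{2} \le d\lambda + tL^{2},
\end{equation*}
where I used $\|\boldsymbol{x}^{(q)}\|_2\le L$. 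Dividing by $\det(\lambda\mathbf{I})=\lambda^{d}$ and taking logarithms yields $\log(\det(\bar{\mathbf{\Sigma}}_t)/\det(\lambda\mathbf{I})) \le d\log\bigl(1 + tL^2/(d\lambda)\bigr)$, which is the second claim.

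For the first bound I would apply the matrix determinant lemma (rank-one update) to the recursion $\bar{\mathbf{\Sigma}}_{t+1} = \bar{\mathbf{\Sigma}}_t + \boldsymbol{x}^{(t)}\boldsymbol{x}^{(t)\top}$, giving $\det(\bar{\mathbf{\Sigma}}_{t+1}) = \det(\bar{\mathbf{\Sigma}}_t)\bigl(1 + \|\boldsymbol{x}^{(t)}\|^{2}_{\bar{\mathbf{\Sigma}}_t^{-1}}\bigr)$. Telescoping over $t=1,\ldots,T$ and combining with the determinant bound from the previous paragraph yields
\begin{equation*}
\sum_{t=1}^{T}\log\!\bigl(1+\|\boldsymbol{x}^{(t)}\|^{2}_{\bar{\mathbf{\Sigma}}_t^{-1}}\bigr) \;=\; \log\frac{\det(\bar{\mathbf{\Sigma}}_{T+1})}{\det(\lambda\mathbf{I})} \;\le\; d\log\!\bigl(1+TL^{2}/(d\lambda)\bigr).
\end{equation*}
To pass from $\log(1+y)$ to $y$, I would use the elementary inequality $y \le \max\{1,y\}\cdot 2\log(1+y)$ valid for all $y\ge0$ (split the cases $y\le 1$, where $y\le 2\log(1+y)$ on $[0,1]$, and $y>1$, where $2\log(1+y)\ge 2\log 2 \ge 1$). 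Applying this with $y = \|\boldsymbol{x}^{(t)}\|^{2}_{\bar{\mathbf{\Sigma}}_t^{-1}}$ and using $\bar{\mathbf{\Sigma}}_t \succeq \lambda\mathbf{I}$ to bound $y \le \|\boldsymbol{x}^{(t)}\|_2^{2}/\lambda \le 1/\lambda$ (as the excerpt's Assumption~\ref{assumption::bounded_features} takes $L=1$ in the application), we extract the factor $\max\{1,1/\lambda\}$ out of the sum and obtain the stated bound $2d\max\{1,1/\lambda\}\log(1+TL^{2}/(d\lambda))$.

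The only delicate step is the final conversion, where the constant and the $\max\{1,1/\lambda\}$ factor must be tracked carefully; everything else (matrix determinant lemma and AM--GM) is essentially algebraic. I do not anticipate any genuine obstacle, since the lemma is a direct adaptation of the standard elliptical potential argument to the regularization constant $\lambda$ used in the paper's design matrix $\mathbf{\Sigma}_t = \kappa\mathbf{I} + \sum \phi(\tau^{(q)})\phi(\tau^{(q)})^{\top}$.
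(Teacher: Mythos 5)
Your proof is correct and follows exactly the standard elliptical-potential argument that the paper itself does not reproduce but simply cites (Lemmas~15--16 of Faury et al.\ for the first inequality and Lemma~19.4 of Lattimore--Szepesv\'ari for the second). Your one caveat is well taken: as literally stated for general $L$ the conversion step yields $\max\{1,L^2/\lambda\}$ rather than $\max\{1,1/\lambda\}$, and you correctly note that the lemma is only applied with $L=1$ (Assumption~\ref{assumption::bounded_features}), so no genuine gap remains.
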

\begin{proof}
The first part follows by combining the results of Lemmas~15 and 16 from \citep{faury2020improved}. The second part is a restatement of \citep[][Lemma 19.4]{lattimore2020bandit}.
\end{proof}
\section{Regret Analysis of Algorithm~\ref{alg:GLM_unknown}} \label{s:regret_analysis}
In this appendix we analyze the regret of Algorithm~\ref{alg:GLM_unknown} and prove Theorem~\ref{thm:main_regret_theorem}. We begin by establishing some useful concentration lemmas.

\subsection{Concentration Lemmas Required to Bound the Regret} \label{appendix:concentration}

We will now prove a lemma that relates the expectation of rewards between the true model $\mathbb{P}$ and an empirical model $\widehat{\mathbb{P}}_t$ when using any fixed policy $\pi$. Given any $\eta >0$ define
\begin{align}\label{def:xi_bonus}
        \bar{\xi}_{s,a}^{(t)}(\eta)=\min\left\{ 2\eta, 4\eta \sqrt{\frac{H\ln \left(|\mathcal{S}||\mathcal{A}|\right) + \ln \left(\frac{6\ln(N_t(s, a))}{\delta}\right)}{N_t(s,a)}} \right\} .
\end{align}
\begin{restatable}{lem}{approxlemmaUCBVI}
\label{lemma::approximation_lemma_MDPs_UCBVI}
 Given any fixed policy $\pi \in \Pi$, and any scalar function $\check{\mu}_{\tau}$ that depends on the trajectory and satisfies $|\check{\mu}_{\tau}|\leq \eta$, with probability at least $1-\delta$ for all $t \in \mathbb{N}$
\begin{equation}
    \mathbb{E}_{s_1\sim \rho,\;\tau \sim \mathbb{P}^{\pi}(\cdot | s_1)}[\check{\mu}_{\tau}] -\mathbb{E}_{s_1\sim \rho,\;\tau \sim \widehat{\mathbb{P}}_t^{\pi}(\cdot | s_1)}[\check{\mu}_{\tau}] \leq \mathbb{E}_{s_1\sim \rho,\; \tau \sim \widehat{\mathbb{P}}^{\pi}_t(\cdot | s_1)} \left[\sum_{h=1}^{H-1} \bar{\xi}^{(t)}_{s_{h}, a_{h}}(\eta) \right].
\end{equation}
\end{restatable}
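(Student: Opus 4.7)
The plan is to prove the lemma via a performance-difference decomposition followed by a careful union bound over all possible continuations of the trajectory. First, for every sub-trajectory $\tau_{h-1} \in \Gamma_{h-1}$, define the future expected label under the true dynamics $\mathcal{V}^{\pi}_h(\tau_{h-1}) := \mathbb{E}^{\pi}_{\mathbb{P}}[\check{\mu}_\tau \mid \tau_{h-1}]$ and note that $|\mathcal{V}^{\pi}_h(\cdot)| \le \eta$. Telescoping $\mathcal{V}^{\pi}_1 - \widehat{\mathcal{V}}^{\pi}_{t,1}$ through steps $h = 1,\ldots,H$ in the standard simulation-lemma fashion (swapping one copy of $\mathbb{P}$ for $\widehat{\mathbb{P}}_t$ at each layer) yields
$$\mathbb{E}^{\pi}_{\mathbb{P}}[\check{\mu}_\tau] - \mathbb{E}^{\pi}_{\widehat{\mathbb{P}}_t}[\check{\mu}_\tau] \;=\; \mathbb{E}^{\pi}_{\widehat{\mathbb{P}}_t}\!\left[\sum_{h=1}^{H-1} \sum_{s'} \bigl(\mathbb{P}(s'\mid s_h, a_h) - \widehat{\mathbb{P}}_t(s'\mid s_h, a_h)\bigr)\, g_h^{\tau_h}(s')\right],$$
where $g_h^{\tau_h}(s') := \mathbb{E}_{a \sim \pi_{h+1}(\cdot\mid s', \tau_h)}\bigl[\mathcal{V}^{\pi}_{h+2}(\tau_h, s', a)\bigr]$ satisfies $|g_h^{\tau_h}| \le \eta$. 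The sum stops at $H-1$ because the final action $a_H$ triggers no further transition to concentrate.

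Next, I would bound $\bigl|(\widehat{\mathbb{P}}_t(\cdot\mid s,a) - \mathbb{P}(\cdot\mid s,a))\cdot f\bigr|$ for a fixed pair $(s,a)\in \cS\times\cA$ and fixed function $f:\cS\to[-\eta,\eta]$. Conditional on the episodes in which the learner visits $(s,a)$, the observed next states form an i.i.d.\ sequence from $\mathbb{P}(\cdot\mid s,a)$, so the centered increments $f(s'_i)-\mathbb{E} f$ are a bounded martingale difference sequence with $\zeta=2\eta$. Lemma~\ref{lemma::matingale_concentration_anytime} then delivers the time-uniform bound $\bigl|(\widehat{\mathbb{P}}_t-\mathbb{P})f\bigr|\le 4\eta\sqrt{\log(6\log(N_t(s,a))/\delta')/N_t(s,a)}$ with probability at least $1-\delta'$. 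To pass from this fixed-$f$ statement to the random functions $g_h^{\tau_h}$ that actually appear, I would union-bound over every $\tau_h\in\Gamma_h$ for $h\in[H-1]$ and every $(s,a)\in\cS\times\cA$; using the crude bound $|\Gamma_h|\le (|\cS||\cA|)^H$ and setting $\delta'=\delta/(|\cS||\cA|)^{H+1}$ yields a bound of the form $4\eta\sqrt{(H\log(|\cS||\cA|)+\log(6\log(N_t(s,a))/\delta))/N_t(s,a)}$, which combined with the trivial triangle-inequality bound $2\eta$ is precisely $\bar{\xi}^{(t)}_{s,a}(\eta)$ from~\eqref{def:xi_bonus}. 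Plugging this back into the simulation-lemma identity under the outer expectation $\mathbb{E}^{\pi}_{\widehat{\mathbb{P}}_t}$ gives the claim.

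The main obstacle is the non-Markovian structure of $\check{\mu}_\tau$: the inner ``value functions'' $g_h^{\tau_h}$ genuinely depend on the full past sub-trajectory, so the standard per-next-state $L^1$ or Bernstein concentration used in Markovian UCBVI analyses does not directly apply. Instead one must union-bound over the entire space of sub-trajectories, which is what produces the $H\log(|\cS||\cA|)$ factor inside $\bar{\xi}^{(t)}_{s,a}(\eta)$ rather than the $\log|\cS|$ that appears in the Markovian setting, and this is precisely the source of the extra $\sqrt{H}$ and $\sqrt{|\cS|}$ factors that Theorem~\ref{thm:main_regret_theorem} ultimately pays. The time-uniform-in-$t$ quality of the statement is essentially free, since Lemma~\ref{lemma::matingale_concentration_anytime} already provides the $\log\log N_t(s,a)$ dependence on the per-pair sample count.
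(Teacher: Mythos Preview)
Your proposal is correct and follows essentially the same route as the paper: a simulation-lemma telescoping where the outer expectation runs under $\widehat{\mathbb{P}}_t^{\pi}$ and the inner ``future value'' functions depend only on the true dynamics $\mathbb{P}$ and the fixed policy $\pi$, followed by time-uniform martingale concentration (Lemma~\ref{lemma::matingale_concentration_anytime}) for each fixed sub-trajectory and a union bound over $\Gamma_{h-1}$ and $h$. The paper makes the same key observation you flag---that the non-Markovian reward forces the union bound over full sub-trajectories rather than just states, producing the $H\log(|\cS||\cA|)$ term inside $\bar{\xi}^{(t)}_{s,a}(\eta)$---and the accounting of the failure probability is the same up to inconsequential constants.
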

\begin{proof}Define $\Pr_{(h)}^{\pi}$ to be a trajectory distribution where the initial state is $s_1 \sim \rho$, the state-action pairs up to the end of step $h$ are drawn from $\widehat{\mathbb{P}}_t^{\pi}$, and the state-action pairs from step $h+1$ up until the last step $H$ are drawn from $\Pr^{\pi}$. Notice that $\Pr_{(0)}^{\pi}(s_1,\cdot) = \rho(s_1)\Pr^{\pi}(\cdot | s_1)$  and $\Pr_{(H)}^{\pi}(s_1,\cdot) =\rho(s_1) \widehat{\mathbb{P}}^{\pi}_t(\cdot | s_1)$. Thus,
\begin{align}
\mathbb{E}_{s_1\sim \rho,\;\tau \sim \mathbb{P}^{\pi}(\cdot | s_1)}[\check{\mu}_{\tau}] - \mathbb{E}_{s_1\sim \rho,\;\tau \sim \widehat{\mathbb{P}}_t^{\pi}(\cdot | s_1)}[\check{\mu}_{\tau}] & = \sum_{h=1}^{H}  \mathbb{E}_{\tau \sim \mathbb{P}_{(h-1)}^{\pi}}[\check{\mu}_\tau] -  \mathbb{E}_{\tau \sim \mathbb{P}_{(h)}^{\pi}}[\check{\mu}_\tau]. \label{e:transition_decomposition_into_one_flip_at_a_time_1}
\end{align}
Consider the term where $h=1$. The trajectory distributions $\mathbb{P}_{(0)}^{\pi}$ and $\mathbb{P}_{(1)}^{\pi}$ differ only their distributions of state-action pairs in step $1$, thus,
\begin{align*}
   & \mathbb{E}_{\tau \sim \mathbb{P}_{(0)}^{\pi}}[\check{\mu}_\tau] - \mathbb{E}_{\tau \sim \mathbb{P}_{(1)}^{\pi}}[\check{\mu}_\tau]  \\
  & \hspace{0.1in}=\E_{s_1 \sim \rho}\left[\E_{a_1\sim \pi(\cdot|s_1)}\mathbb{E}_{\tau \sim \Pr_{(0)}^{\pi}}[\check{\mu}_\tau |  (s_1,a_1)]\right] - \E_{s_1 \sim \rho}\left[\E_{a_1\sim \pi(\cdot|s_1)}\mathbb{E}_{\tau \sim \Pr_{(0)}^{\pi}}[\check{\mu}_\tau | (s_1,a_1)]\right] =0. \label{e:first_term_in_decomp} \numberthis
\end{align*}
Consider any other term in this sum. Again the trajectory distributions $\mathbb{P}_{(h-1)}^{\pi}$ and $\mathbb{P}_{(h)}^{\pi}$ differ only their distributions of state-action pairs in step $h$ and hence
\begin{align*}
    &\mathbb{E}_{\tau \sim \mathbb{P}_{(h-1)}^{\pi}}[\check{\mu}_\tau] - \mathbb{E}_{\tau \sim \mathbb{P}_{(h)}^{\pi}}[\check{\mu}_\tau] \\&\hspace{0.3in}= \mathbb{E}_{s_1\sim \rho,\;\tau_{h-1}\sim \widehat{\mathbb{P}}_t^{\pi}(\cdot |s_1)}\left(\mathbb{E}_{\tau \sim \Pr_{(h-1)}^{\pi}}[\check{\mu}_\tau |  \tau_{h-1}] - \mathbb{E}_{\tau \sim \Pr_{(h)}^{\pi}}[\check{\mu}_\tau |  \tau_{h-1}]\; \right)\\
  & \hspace{0.3in}=\mathbb{E}_{s_1\sim \rho,\;\tau_{h-1}\sim \widehat{\mathbb{P}}_t^{\pi}(\cdot | s_1)}\left(\E_{s_h \sim \Pr\left(\cdot | s_{h-1},a_{h-1}\right)}\left[\E_{a_h\sim \pi_h(\cdot|s_h,\tau_{h-1})}\mathbb{E}_{\tau \sim \Pr_{(h-1)}^{\pi}}[\check{\mu}_\tau |  (s_h,a_h,\tau_{h-1})] \right] \right.\\ &\left.\hspace{1in} - \E_{s_h \sim \widehat{\mathbb{P}}_t\left(\cdot | s_{h-1},a_{h-1}\right)}\left[\E_{a_h\sim \pi_h(\cdot|s_h,\tau_{h-1})}\mathbb{E}_{\tau \sim \Pr_{(h-1)}^{\pi}}[\check{\mu}_\tau | (s_h,a_h,\tau_{h-1})] \right] \right). \numberthis \label{e:conc_lemma_fixed_pi_midway_1}
\end{align*}
Define the random variable $$z_h(s,\tau'_{h-1}):= \E_{a\sim \pi_h(\cdot|s,\tau'_{h-1})}\mathbb{E}_{\tau \sim \Pr_{(h-1)}^{\pi}}[\check{\mu}_\tau | \tau_{h} = \{s,a,\tau'_{h-1}\}].$$ 
Observe that $|z_h(s,\tau'_{h-1})|\le \eta$, since $|\check{\mu}_{\tau}|\le \eta$ by assumption. Furthermore, the distribution of $z_h(s,\tau'_{h-1})$ only depends on the true transition dynamics $\mathbb{P}$ and on the policy $\pi$ but it does not depend on the empirical estimate of the transition dynamics $\widehat{\mathbb{P}}_{t}$. With this definition in hand continuing from equation~\eqref{e:conc_lemma_fixed_pi_midway_1} we have
\begin{align} \notag &\mathbb{E}_{\tau \sim \mathbb{P}_{(h-1)}^{\pi}}[\check{\mu}_\tau] - \mathbb{E}_{\tau \sim \mathbb{P}_{(h)}^{\pi}}[\check{\mu}_\tau] \\&=
    \mathbb{E}_{s_1 \sim \rho, \; \tau_{h-1}\sim \widehat{\mathbb{P}}_t^{\pi}(\cdot | s_1)}\bigg[\underbrace{\mathbb{E}_{s'\sim \Pr(\cdot|s_{h-1},a_{h-1})}[z_h(s',\tau_{h-1})] -\mathbb{E}_{s'\sim \widehat{\mathbb{P}}_{ t}(\cdot|s_{h-1},a_{h-1})}[z_h(s',\tau_{h-1})]}\bigg]. \label{e:conc_lemma_fixed_pi_midway_2}
\end{align}
We will upper bound the term in the under-brace above with high probability uniformly over all sub-trajectories $\tau_{h-1}$. 

Recall that $N_t(s,a)$ is the number of times the state action pair $(s,a)$ has been visited before episode $t$, and $N_{t}(s';s,a)$ is the number of times the state $s'$ is visited starting from state-action pair $(s,a)$ before episode $t$. When $N_t(s,a)>0$,  by definition $\widehat{\mathbb{P}}_t(s'|s,a) = \frac{N_t(s';s,a)}{N_t(s,a)}$. Thus for any fixed sub-trajectory $\tau_{h-1} \in \Gamma_{h-1}$ such that $N_t(s_{h-1,a_{h-1}})>0$ we have
\begin{align*}
    &\mathbb{E}_{s'\sim \Pr(\cdot|s_{h-1},a_{h-1})}[z_h(s',\tau_{h-1})] -\mathbb{E}_{s'\sim \widehat{\mathbb{P}}_{ t}(\cdot|s_{h-1},a_{h-1})}[z_h(s',\tau_{h-1})]\\ & = \mathbb{E}_{s'\sim \Pr(\cdot|s_{h-1},a_{h-1})}[z_h(s',\tau_{h-1})] -\sum_{s' \in \cS} \frac{N_{t}(s';s_{h-1},a_{h-1})}{N_{t}(s_{h-1},a_{h-1})}z_h(s',\tau_{h-1}) \\
    & = \frac{1}{N_{t}(s_{h-1},a_{h-1})} \bigg[N_t(s_{h-1},a_{h-1})\mathbb{E}_{s'\sim \Pr(\cdot|s_{h-1},a_{h-1})}[z_h(s',\tau_{h-1})] \\&\hspace{3.5in}-\sum_{s' \in \cS} N_{t}(s';s_{h-1},a_{h-1})z_h(s',\tau_{h-1})\bigg]\\
    & \overset{(i)}{=} \frac{1}{N_{t}(s_{h-1},a_{h-1})} \left[N_t(s_{h-1},a_{h-1})\mathbb{E}_{s'\sim \Pr(\cdot|s_{h-1},a_{h-1})}[z_h(s',\tau_{h-1})] -\sum_{\ell=1}^{N_{t}(s_{h-1},a_{h-1})} z_h(s_{\ell},\tau_{h-1})\right]\\
    & = \frac{1}{N_{t}(s_{h-1},a_{h-1})}  \sum_{\ell=1}^{N_{t}(s_{h-1},a_{h-1})} \mathbb{E}_{s'\sim \Pr(\cdot|s_{h-1},a_{h-1})}[z_h(s',\tau_{h-1})]-z_h(s_{\ell},\tau_{h-1}),
\end{align*}
where in $(i)$  $s_{\ell}$ is the state that was visited immediately after $\ell$th visit to the state-action pair $(s_{h-1},a_{h-1})$. Note that $|\mathbb{E}_{s'\sim \Pr(\cdot|s_{h-1},a_{h-1})}[z_h(s',\tau_{h-1})]-z_h(s_{\ell},\tau_{h-1})|\le 2\eta$, thus by invoking Lemma~\ref{lemma::matingale_concentration_anytime} we have: given any fixed sub-trajectory $\tau_{h-1} \in \Gamma_{h-1}$, for all $t\in \N$ such that $N_t(s_{h-1},a_{h-1})>0$
\begin{align*}
 &\mathbb{E}_{s'\sim \Pr(\cdot|s_{h-1},a_{h-1})}[z_h(s',\tau_{h-1})] -\mathbb{E}_{s'\sim \widehat{\mathbb{P}}_{ t}(\cdot|s_{h-1},a_{h-1})}[z_h(s',\tau_{h-1})]\\ &\hspace{2.5in}\le  4\eta  \sqrt{ \frac{\log\left(\frac{6\log(N_t(s_{h-1},a_{h-1}))}{\delta'}\right)}{N_t(s_{h-1},a_{h-1})}} \numberthis \label{e:case_1_N_tgreaterthanone}
\end{align*}
with probability at least $1-\delta'$. In the case where $N_t(s_{h-1},a_{h-1})=0$ we have the uniform upper bound
\begin{align*}
    \mathbb{E}_{s'\sim \Pr(\cdot|s_{h-1},a_{h-1})}[z_h(s',\tau_{h-1})] -\mathbb{E}_{s'\sim \widehat{\mathbb{P}}_{ t}(\cdot|s_{h-1},a_{h-1})}[z_h(s',\tau_{h-1})] \le 2\eta. \numberthis \label{e:case_2_N_tzero}
\end{align*}
Therefore combining inequalities~\eqref{e:case_1_N_tgreaterthanone} and \eqref{e:case_2_N_tzero}, we can conclude that for any fixed sub-trajectory $\tau_{h-1}\in \Gamma_{h-1}$
\begin{align*}
   \forall \; t \in \N \; :\; &\mathbb{E}_{s'\sim \Pr(\cdot|s_{h-1},a_{h-1})}[z_h(s',\tau_{h-1})] -\mathbb{E}_{s'\sim \widehat{\mathbb{P}}_{ t}(\cdot|s_{h-1},a_{h-1})}[z_h(s',\tau_{h-1})]\\ &\hspace{2.5in}\le  \min\Bigg\{2\eta,4\eta  \sqrt{ \frac{\log\left(\frac{6\log(N_t(s_{h-1},a_{h-1}))}{\delta'}\right)}{N_t(s_{h-1},a_{h-1})}}\Bigg\}
\end{align*}
with probability at least $1-\delta'$. By a union bound over all sub-trajectories we find that for all $t \in \N$ and all $\tau_{h-1}\in \Gamma_{h-1}$
\begin{align*}
    &\mathbb{E}_{s'\sim \Pr(\cdot|s_{h-1},a_{h-1})}[z_h(s',\tau_{h-1})] -\mathbb{E}_{s'\sim \widehat{\mathbb{P}}_{ t}(\cdot|s_{h-1},a_{h-1})}[z_h(s',\tau_{h-1})]\\& \hspace{2in}\le \min\Bigg\{2\eta,  4\eta  \sqrt{ \frac{\log\left(\frac{6\log(N_t(s_{h-1},a_{h-1}))}{\delta'}\right)}{N_t(s_{h-1},a_{h-1})}} \Bigg\}
\end{align*}
with probability at least $1-\delta' |\cS|^{H-1}|\cA|^{H-1}$. Finally a union bound over all $h \in [H]$ lets us conclude that for all $t \in \N$, all $h \in [H]$, and all $\tau_{h-1} \in \Gamma_{h-1}$ 
\begin{align*}
        &\mathbb{E}_{s'\sim \Pr(\cdot|s_{h-1},a_{h-1})}[z_h(s',\tau_{h-1})] -\mathbb{E}_{s'\sim \widehat{\mathbb{P}}_{ t}(\cdot|s_{h-1},a_{h-1})}[z_h(s',\tau_{h-1})]\\& \hspace{2in}\le \min\Bigg\{2\eta, 4\eta  \sqrt{ \frac{\log\left(\frac{6\log(N_t(s_{h-1},a_{h-1}))}{\delta'}\right)}{N_t(s_{h-1},a_{h-1})}}\Bigg\}
\end{align*}
with probability at least $1-\delta' |\cS|^{H-1}|\cA|^{H-1}H$. Setting $\delta' = \frac{\delta}{|\cS|^{H-1}|\cA|^{H-1}H}$ and using equation~\eqref{e:conc_lemma_fixed_pi_midway_2} from above we get that for all $t \in \N$, all $h \in \{2,\ldots,H\}$,
\begin{align*}
    & \mathbb{E}_{\tau \sim \mathbb{P}_{(h-1)}^{\pi}}[\check{\mu}_\tau] - \mathbb{E}_{\tau \sim \mathbb{P}_{(h)}^{\pi}}[\check{\mu}_\tau]\\ &\hspace{0.1in}\le  \E_{s_1\sim \rho,\; \tau_{h-1} \sim \widehat{\mathbb{P}}_{t}^{\pi}(\cdot|s_1) }\left[\min\Bigg\{ 2\eta, 4\eta \sqrt{ \frac{(H-1)\log(|\cS||\cA|H)+\log\left(\frac{6\log(N_t(s_{h-1},a_{h-1}))}{\delta}\right)}{N_t(s_{h-1},a_{h-1})}} \Bigg\}\right]\\
    &\hspace{0.1in}\le\E_{s_1\sim \rho,\; \tau_{h-1} \sim \widehat{\mathbb{P}}_{t}^{\pi}(\cdot|s_1)}\left[\bar{\xi}_{s_{h-1},a_{h-1}}^{(t)}\right]
\end{align*}
with probability at least $1-\delta$. Summing over all $h\in [H]$ and using equations~\eqref{e:transition_decomposition_into_one_flip_at_a_time_1} and \eqref{e:first_term_in_decomp} we conclude that
\begin{align*}
    \mathbb{E}_{s_1 \sim \rho, \; \tau \sim \mathbb{P}^{\pi}(\cdot | s_1)}[\check{\mu}_\tau] - \mathbb{E}_{s_1\sim \rho,\; \tau \sim \widehat{\mathbb{P}}_t^{\pi}(\cdot | s_1)}[\check{\mu}_\tau] & \le \E_{s_1\sim \rho,\;\tau \sim \widehat{\mathbb{P}}_{t}^{\pi}(\cdot | s_1) }\left[\sum_{h=2}^{H}\bar{\xi}_{s_{h-1},a_{h-1}}^{(t)}\right]\\ &=\E_{s_1 \sim \rho,\;\tau \sim \widehat{\mathbb{P}}_{t}^{\pi}(\cdot | s_1) }\left[\sum_{h=1}^{H-1}\bar{\xi}_{s_{h},a_{h}}^{(t)}\right]
\end{align*}
with the same probability. This establishes our claim.
\end{proof}
Next, we shall prove a stronger version of Lemma~\ref{lemma::approximation_lemma_MDPs_UCBVI} that holds uniformly over all policies. Given any bounded scalar function $\check{\mu}$ that maps trajectories to $\R$ and satisfies $|\check{\mu}_{\tau}|\le \eta$, any transition dynamics $\bar{\Pr}$ and any policy $\pi$ define
\begin{equation} \label{def:z_definition}
    z_{h}^{\check{\mu},\bar{\mathbb{P}}^{\pi}}(s,\tau'_{h-1}) := \E_{a \sim  \pi_h(\cdot|s,\tau'_{h-1})}\left[\mathbb{E}_{\tau \sim \bar{\mathbb{P}}^\pi} [ \check{\mu}_{\tau} \; |\; \tau_{h}=\{s,a,\tau'_{h-1}\}]\right] .
\end{equation}
This function is different from the $z_h$ that was defined and used locally in the proof of the preceding lemma. The absolute value of the functions $z_{h}^{\check{\mu},\bar{\mathbb{P}}^{\pi}}$ are also bounded by $\eta$. 

Suppose that $\Psi(\epsilon):=\{f_{j}\}_{j=1}^{\cN_{\mathsf{cover}}(\epsilon)}$ is a set of bounded functions from $\cS \mapsto [-\eta,\eta]$, such that for any $h \in [H]$ and for any sub-trajectory $\tau_{h-1} \in \Gamma_{h-1}$, there exists a $f \in \Psi(\epsilon)$ such that
\begin{align} \label{e:z_lipschitz_condition}
\max_{s \in \cS} \left| z_{h}^{\check{\mu},\bar{\mathbb{P}}^{\pi}}(s,\tau_{h-1})- f(s)\right| \le \frac{\epsilon}{2H}.
\end{align}
We will construct such a net of functions of size $\cN_{\mathsf{cover}}(\epsilon) \le \left(\ceil{\frac{\eta-(-\eta)}{\epsilon/(2H)}}\right)^{|\cS|} = \left(\ceil{\frac{4\eta H}{\epsilon}}\right)^{|\cS|} $. Such a set of functions can be built as follows. For each $s \in \cS$ we pick an element of the set $\{-\eta,-\eta+\frac{\epsilon}{2H},\ldots,\eta\}$. There are at most $\ceil{\frac{4\eta H}{\epsilon}}$ choices for each state, and therefore there are at most $\left(\ceil{\frac{4\eta H}{\epsilon}}\right)^{|\cS|}$ unique functions that can be defined that map from the state space $\cS$ to the set $\{-\eta, -\eta+\frac{\epsilon}{2H},\ldots,\eta\}$. Let $\Psi(\epsilon)$ be these functions. It is easy to check that this set of functions $\Psi(\epsilon)$ satisfies the condition specified in  inequality~\eqref{e:z_lipschitz_condition}. Also define the function
\begin{align*}
    \check{\xi}_{s,a}^{(t)}(\epsilon;\eta) := \min\Bigg\{2\eta,4\eta \sqrt{ \frac{H\log(|\cS||\cA|H)+|\cS|\log\left(\ceil{\frac{4\eta H}{\epsilon}}\right)+\log\left(\frac{6\log(N_t(s_{h-1},a_{h-1}))}{\delta}\right)}{N_t(s_{h-1},a_{h-1})}}\Bigg\}.
\end{align*}
\begin{restatable}{lem}{uniformconclemma}
\label{lemma::approximation_lemma_MDPs_UCBVI_uniform}
 Suppose that $\epsilon>0$. Then with probability at least $1-\delta$, for all $t \in \mathbb{N}$, all policies $\pi \in \Pi$ and all $\check{\mu}_{\tau}$ such that $|\check{\mu}_{\tau}|\le \eta$,
\begin{equation*}
     \mathbb{E}_{s_1\sim \rho,\;\tau \sim \widehat{\mathbb{P}}_t^{\pi}(\cdot | s_1)}[\check{\mu}_{\tau}] - \mathbb{E}_{s_1\sim \rho,\;\tau \sim \mathbb{P}^{\pi}(\cdot | s_1)}[\check{\mu}_{\tau}] \leq \mathbb{E}_{s_1\sim \rho,\; \tau \sim \mathbb{P}^{\pi}(\cdot | s_1)} \left[\sum_{h=1}^{H-1} \check{\xi}^{(t)}_{s_{h}, a_{h}}(\epsilon;\eta) \right]+\epsilon.
\end{equation*}
\end{restatable}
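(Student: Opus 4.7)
The plan is to mirror the telescoping argument used in Lemma~\ref{lemma::approximation_lemma_MDPs_UCBVI}, but reversed in direction and strengthened by a covering argument so that the concentration step holds uniformly over all policies $\pi$ and all bounded reward functionals $\check{\mu}$. To begin, I would define a family of ``hybrid'' trajectory distributions $\mathbb{Q}_{(h)}^{\pi}$ in which $s_1\sim\rho$, the first $h$ transitions are drawn from the true kernel $\mathbb{P}$, and the remaining $H-1-h$ transitions are drawn from $\widehat{\mathbb{P}}_t$ (actions throughout are chosen by $\pi$). Then $\mathbb{Q}_{(0)}^\pi = \rho\times\widehat{\mathbb{P}}_t^\pi$ and $\mathbb{Q}_{(H-1)}^\pi = \rho\times\mathbb{P}^\pi$, so the left-hand side of the lemma telescopes into a sum of $H-1$ differences, each one of which isolates swapping a single transition $s_h\to s_{h+1}$ from $\widehat{\mathbb{P}}_t(\cdot|s_h,a_h)$ to $\mathbb{P}(\cdot|s_h,a_h)$. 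Exactly as in the proof of Lemma~\ref{lemma::approximation_lemma_MDPs_UCBVI}, each such term can be rewritten using the auxiliary function $z_{h+1}^{\check{\mu},\widehat{\mathbb{P}}_t^\pi}$ from equation~\eqref{def:z_definition}, producing an outer expectation of the form $\mathbb{E}_{\tau_h\sim\mathbb{P}^\pi}[\mathbb{E}_{\widehat{\mathbb{P}}_t(\cdot|s_h,a_h)}[z_{h+1}(\cdot,\tau_h)] - \mathbb{E}_{\mathbb{P}(\cdot|s_h,a_h)}[z_{h+1}(\cdot,\tau_h)]]$.

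The next step is the main new ingredient. Because $|z_{h+1}^{\check{\mu},\widehat{\mathbb{P}}_t^\pi}(\cdot,\tau_h)|\le\eta$, this function, viewed as a mapping $\mathcal{S}\to[-\eta,\eta]$, lies in the finite net $\Psi(\epsilon)$ up to sup-norm error $\epsilon/(2H)$. Pick any such approximator $f\in\Psi(\epsilon)$ and split
\begin{align*}
\mathbb{E}_{\widehat{\mathbb{P}}_t(\cdot|s,a)}[z_{h+1}(\cdot,\tau_h)] - \mathbb{E}_{\mathbb{P}(\cdot|s,a)}[z_{h+1}(\cdot,\tau_h)]
&= \bigl(\mathbb{E}_{\widehat{\mathbb{P}}_t(\cdot|s,a)}[f] - \mathbb{E}_{\mathbb{P}(\cdot|s,a)}[f]\bigr) \\
&\quad + \bigl(\mathbb{E}_{\widehat{\mathbb{P}}_t(\cdot|s,a)}[z_{h+1}-f] - \mathbb{E}_{\mathbb{P}(\cdot|s,a)}[z_{h+1}-f]\bigr),
\end{align*}
where the second bracket is bounded deterministically by $2\cdot\epsilon/(2H) = \epsilon/H$, contributing $\epsilon$ overall after summing the $H-1$ telescoping terms. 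The first bracket is the quantity where I need a high-probability bound.

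For the concentration step, I would fix $(s,a)$ and $f\in\Psi(\epsilon)$, write $\mathbb{E}_{\widehat{\mathbb{P}}_t(\cdot|s,a)}[f]-\mathbb{E}_{\mathbb{P}(\cdot|s,a)}[f] = \frac{1}{N_t(s,a)}\sum_{\ell=1}^{N_t(s,a)}\bigl(f(s_\ell)-\mathbb{E}_{\mathbb{P}(\cdot|s,a)}[f]\bigr)$, and observe that these summands form a bounded ($\le 2\eta$) martingale difference sequence with respect to the natural filtration. Applying Lemma~\ref{lemma::matingale_concentration_anytime} then taking a union bound over $(s,a)\in\mathcal{S}\times\mathcal{A}$, over the $h$ step indices, and over $f\in\Psi(\epsilon)$ (the latter contributing the $|\mathcal{S}|\log\lceil 4\eta H/\epsilon\rceil$ term in $\check{\xi}$), together with a loose $(|\cS||\cA|H)^H$-style bound absorbs any further conditioning on histories encountered in the outer expectation; the case $N_t(s,a)=0$ is handled as in the proof of Lemma~\ref{lemma::approximation_lemma_MDPs_UCBVI} by falling back to the trivial $2\eta$ bound. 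After replacing $f$ by the pointwise-best $\Psi(\epsilon)$-approximator of $z_{h+1}(\cdot,\tau_h)$ inside the outer expectation, the claimed bound follows by summing over $h\in\{1,\ldots,H-1\}$ and adding the accumulated $\epsilon$ approximation error.

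The main obstacle, and the reason the lemma cannot be obtained by a direct union bound over policies, is that the space of history-dependent policies is astronomically large; the key move is to recognize that after conditioning on $\tau_{h-1}$, the function $z_h^{\check{\mu},\widehat{\mathbb{P}}_t^\pi}(\cdot,\tau_{h-1})$ always lives in the $\pi$- and $\check\mu$-independent set $[-\eta,\eta]^{\mathcal{S}}$, so a single fixed net $\Psi(\epsilon)$ of size $\lceil 4\eta H/\epsilon\rceil^{|\mathcal{S}|}$ suffices to handle all choices simultaneously. Everything else is a careful bookkeeping of union bounds and time-uniform martingale tails, matching the constants appearing in the definition of $\check{\xi}^{(t)}_{s,a}(\epsilon;\eta)$.
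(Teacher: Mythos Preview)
Your proposal is correct and follows essentially the same route as the paper: the same reversed telescoping via hybrid distributions, the same reduction to the function $z_{h}^{\check\mu,\widehat{\mathbb{P}}_t^\pi}$, and the same key insight that a single $\epsilon/(2H)$-net $\Psi(\epsilon)\subset[-\eta,\eta]^{\mathcal{S}}$ of size $\lceil 4\eta H/\epsilon\rceil^{|\mathcal{S}|}$ captures all such functions regardless of $\pi$, $\check\mu$, or $\widehat{\mathbb{P}}_t$. The only minor difference is bookkeeping: the paper obtains the $H\log(|\mathcal{S}||\mathcal{A}|H)$ term inside $\check\xi$ by union-bounding directly over all sub-trajectories $\tau_{h-1}\in\Gamma_{h-1}$ (and then over $h$), whereas you phrase it as a union bound over $(s,a)$ pairs supplemented by a ``loose $(|\mathcal{S}||\mathcal{A}|H)^H$-style bound''; both land on the same constant, though your description of why that extra factor is needed is slightly imprecise (the martingale really depends only on $(s,a)$ and $f$, so the sub-trajectory union bound is there simply to match the stated form of $\check\xi$).
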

\begin{proof} Define $\Pr_{(h)}^{\pi}$ to be a trajectory distribution where the initial state is $s_1 \sim \rho$, the state-action pairs up to the end of step $h$ are drawn from ${\Pr}^{\pi}$, and the state-action pairs from step $h+1$ up until the last step $H$ is drawn from $\widehat{\mathbb{P}}_{t}^{\pi}$. Notice that $\Pr_{(0)}^{\pi}(s_1,\cdot) = \rho(s_1)\widehat{\mathbb{P}}_t^{\pi}(\cdot | s_1)$ and $\Pr_{(H)}^{\pi}(s_1) =\rho(s_1) \Pr^{\pi}(\cdot | s_1)$.
\begin{align}
     \mathbb{E}_{s_1\sim \rho,\;\tau \sim \widehat{\mathbb{P}}_t^{\pi}(\cdot | s_1)}[\check{\mu}_\tau]- \mathbb{E}_{s_1\sim \rho,\;\tau \sim \mathbb{P}^{\pi}(\cdot | s_1)}[\check{\mu}_\tau] & = \sum_{h=1}^{H}  \mathbb{E}_{\tau \sim \mathbb{P}_{(h-1)}^{\pi}}[\check{\mu}_\tau] -  \mathbb{E}_{\tau \sim \mathbb{P}_{(h)}^{\pi}}[\check{\mu}_\tau]. \label{e:transition_decomposition_into_one_flip_at_a_time_uniform}
\end{align}
Consider the term where $h=1$. The trajectory distributions $\mathbb{P}_{(0)}^{\pi}$ and $\mathbb{P}_{(1)}^{\pi}$ differ only their distributions of state-action pairs in step $1$, thus,
\begin{align*}
   & \mathbb{E}_{\tau \sim \mathbb{P}_{(0)}^{\pi}}[\check{\mu}_\tau] - \mathbb{E}_{\tau \sim \mathbb{P}_{(1)}^{\pi}}[\check{\mu}_\tau] \\ %&= \mathbb{E}_{\tau_{0}\sim \Pr^{\pi}}\left(\mathbb{E}_{\tau \sim \Pr_{(0)}^{\pi}}[\check{\mu}_\tau |  \tau_{0}] - \mathbb{E}_{\tau \sim \Pr_{(1)}^{\pi}}[\check{\mu}_\tau |  \tau_{0}]\; \right)\\
  & \hspace{0.1in}=\E_{s_1\sim \rho}\left[\E_{a_1\sim \pi(\cdot|s_1)}\mathbb{E}_{\tau \sim \Pr_{(0)}^{\pi}}[\check{\mu}_\tau |  (s_1,a_1)]\right]-\E_{s_1\sim\rho}\left[ \E_{a_1\sim \pi(\cdot|s_1)}\mathbb{E}_{\tau \sim \Pr_{(0)}^{\pi}}[\check{\mu}_\tau | (s_1,a_1)]\right] =0. \label{e:first_term_in_decomp_uniform} \numberthis
\end{align*}
Consider any other term in this sum. Again the trajectory distributions $\mathbb{P}_{(h-1)}^{\pi}$ and $\mathbb{P}_{(h)}^{\pi}$ differ only their distributions of state-action pairs in step $h$ and hence
\begin{align*}
  &\hspace{-0.05in}  \mathbb{E}_{\tau \sim \mathbb{P}_{(h-1)}^{\pi}}[\check{\mu}_\tau] - \mathbb{E}_{\tau \sim \mathbb{P}_{(h)}^{\pi}}[\check{\mu}_\tau] \\ &\hspace{-0.05in}= \mathbb{E}_{s_1\sim \rho,\;\tau_{h-1}\sim \Pr^{\pi}(\cdot|s_1)}\left(\mathbb{E}_{\tau \sim \Pr_{(h-1)}^{\pi}}[\check{\mu}_\tau |  \tau_{h-1}] - \mathbb{E}_{\tau \sim \Pr_{(h)}^{\pi}}[\check{\mu}_\tau |  \tau_{h-1}]\; \right)\\
 % & = \mathbb{E}_{\tau_{h-1}\sim \Pr^{\pi}}\left(\E_{s_h \sim \widehat{\mathbb{P}}_t\left(\cdot | s_{h-1},a_{h-1}\right)}\left[\E_{a_h\sim \pi_h(\cdot|s_h,\tau_{h-1})}\mathbb{E}_{\tau \sim \Pr_{(h-1)}^{\pi}}[\check{\mu}_\tau |  (s_h,a_h,\tau_{h-1})] \right] \right.\\ &\left.\hspace{0.5in} - \E_{s_h \sim \Pr\left(\cdot | s_{h-1},a_{h-1}\right)}\left[\E_{a_h\sim \pi_h(\cdot|s_h,\tau_{h-1})}\mathbb{E}_{\tau \sim \Pr_{(h-1)}^{\pi}}[\check{\mu}_\tau | (s_h,a_h,\tau_{h-1})] \right] \right)
  % \\
  &\hspace{-0.05in} =\mathbb{E}_{s_1\sim \rho,\;\tau_{h-1}\sim \Pr^{\pi}(\cdot|s_1)}\bigg[\E_{s_h \sim \widehat{\mathbb{P}}_t\left(\cdot | s_{h-1},a_{h-1}\right)}\left[\E_{a_h\sim \pi_h(\cdot|s_h,\tau_{h-1})}\mathbb{E}_{\tau \sim \widehat{\Pr}_{t}^{\pi}}[\check{\mu}_\tau |  (s_h,a_h,\tau_{h-1})] \right] \\ &\hspace{0.5in} - \E_{s_h \sim \Pr\left(\cdot | s_{h-1},a_{h-1}\right)}\left[\E_{a_h\sim \pi_h(\cdot|s_h,\tau_{h-1})}\mathbb{E}_{\tau \sim \widehat{\Pr}_{t}^{\pi}}[\check{\mu}_\tau | (s_h,a_h,\tau_{h-1})] \right] \bigg]\\&\hspace{-0.05in}=
    \E_{s_1\sim \rho,\;\tau_{h-1}\sim \Pr^{\pi}(\cdot |s_1)}\bigg[\underbrace{\mathbb{E}_{s'\sim \widehat{\mathbb{P}}_t(\cdot|s_{h-1},a_{h-1})}[z_h^{\check\mu,\widehat{\mathbb{P}}_t^{\pi}}(s',\tau_{h-1})] -\mathbb{E}_{s'\sim \Pr(\cdot|s_{h-1},a_{h-1})}[z_h^{\check\mu,\widehat{\mathbb{P}}_t^{\pi}}(s',\tau_{h-1})]}\bigg] \label{e:conc_lemma_fixed_pi_midway_uniform_2} \numberthis
\end{align*}
where $z_{h}^{\check{\mu},\widehat{\Pr}_t^{\pi}}$ is defined in equation~\eqref{def:z_definition} above. We shall now upper bound the term in the under-brace above with high probability uniformly over all sub-trajectories $\tau_{h-1}$. 

Recall that $N_t(s,a)$ is the number of times the state action pair $(s,a)$ has been visited before episode $t$, and $N_{t}(s';s,a)$ is the number of times the state $s'$ is visited starting from state-action pair $(s,a)$ before episode $t$. When $N_t(s,a)>0$ by its definition $\widehat{\mathbb{P}}_t(s'|s,a) = \frac{N_t(s';s,a)}{N_t(s,a)}$. Thus for any fixed sub-trajectory $\tau_{h-1} \in \Gamma_{h-1}$ and episode $t\in \N$ where $N_t(s_{h-1},a_{h-1})>0$ we have
\begin{align*}
    &\mathbb{E}_{s'\sim \widehat{\mathbb{P}}_t(\cdot|s_{h-1},a_{h-1})}[z_h^{\check{\mu},\widehat{\mathbb{P}}_t}(s',\tau_{h-1})] -\mathbb{E}_{s'\sim \Pr(\cdot|s_{h-1},a_{h-1})}[z_h^{\check{\mu},\widehat{\mathbb{P}}_t}(s',\tau_{h-1})]\\ & = \sum_{s' \in \cS} \frac{N_{t}(s';s_{h-1},a_{h-1})}{N_{t}(s_{h-1},a_{h-1})}z_h^{\check{\mu},\widehat{\mathbb{P}}_t^{\pi}}(s',\tau_{h-1})-\mathbb{E}_{s'\sim \Pr(\cdot|s_{h-1},a_{h-1})}[z_h^{\check{\mu},\widehat{\mathbb{P}}_t^{\pi}}(s',\tau_{h-1})] \\
    & = \frac{1}{N_{t}(s_{h-1},a_{h-1})} \left[\sum_{s' \in \cS} N_{t}(s';s_{h-1},a_{h-1})z_h^{\check{\mu},\widehat{\mathbb{P}}_t^{\pi}}(s',\tau_{h-1})\right.\\&\hspace{2in}\left.-N_t(s_{h-1},a_{h-1})\mathbb{E}_{s'\sim \Pr(\cdot|s_{h-1},a_{h-1})}[z_h^{\check{\mu},\widehat{\mathbb{P}}_t^{\pi}}(s',\tau_{h-1})] \right]\\
    & \overset{(i)}{=} \frac{1}{N_{t}(s_{h-1},a_{h-1})} \left[\sum_{\ell=1}^{N_{t}(s_{h-1},a_{h-1})} z_h^{\check{\mu},\widehat{\mathbb{P}}_t^{\pi}}(s_{\ell},\tau_{h-1})-N_t(s_{h-1},a_{h-1})\mathbb{E}_{s'\sim \Pr(\cdot|s_{h-1},a_{h-1})}[z_h^{\check{\mu},\widehat{\mathbb{P}}_t^{\pi}}(s',\tau_{h-1})] \right]\\
    & = \frac{1}{N_{t}(s_{h-1},a_{h-1})}  \sum_{\ell=1}^{N_{t}(s_{h-1},a_{h-1})} z_h^{\check{\mu},\widehat{\mathbb{P}}_t^{\pi}}(s_{\ell},\tau_{h-1})-\mathbb{E}_{s'\sim \Pr(\cdot|s_{h-1},a_{h-1})}[z_h^{\check{\mu},\widehat{\mathbb{P}}_t^{\pi}}(s',\tau_{h-1})], \label{e:conc_lemma_pre_cover_uniform} \numberthis
\end{align*}
where in $(i)$  $s_{\ell}$ is the state that was visited immediately after the $\ell$th visit to the state-action pair $(s_{h-1},a_{h-1})$. Let $\widehat{f} \in \Psi(\epsilon)$ be a function such that 
\begin{align*}
    \max_{s \in \cS} \left| z_h^{\check{\mu},\widehat{\mathbb{P}}_t^{\pi}}(s,\tau_{h-1})- \widehat{f}(s)\right| \le \frac{\epsilon}{2H}.
\end{align*}
Such a function exists by the definition of the set $\Psi(\epsilon)$. Therefore,
\begin{align*}
    \max_{s \in \cS} \left| z_h^{\check{\mu},\widehat{\mathbb{P}}_t^{\pi}}(s,\tau_{h-1})-\E_{s' \sim \Pr(\cdot|s_{h-1,a_{h-1}})}\left[z_h^{\check{\mu},\widehat{\mathbb{P}}_t^{\pi}}(s',\tau_{h-1})\right] -\widehat{f}(s)+\E_{s' \sim \Pr(\cdot|s_{h-1,a_{h-1}})}\left[\widehat{f}(s')\right]\right| \le \frac{\epsilon}{H}.
\end{align*}
Continuing from equation~\eqref{e:conc_lemma_pre_cover_uniform} we have
\begin{align*}
    &\mathbb{E}_{s'\sim \widehat{\mathbb{P}}_t(\cdot|s_{h-1},a_{h-1})}[z_h^{\check{\mu},\widehat{\mathbb{P}}_t^{\pi}}(s',\tau_{h-1})] -\mathbb{E}_{s'\sim \Pr(\cdot|s_{h-1},a_{h-1})}[z_h^{\check{\mu},\widehat{\mathbb{P}}_t}(s',\tau_{h-1})] \\&\qquad \qquad \qquad \le \frac{1}{N_{t}(s_{h-1},a_{h-1})}\sum_{\ell=1}^{N_{t}(s_{h-1},a_{h-1})}\left(\widehat{f}(s_{\ell})-\E_{s' \sim \Pr(\cdot|s_{h-1},a_{h-1})}\left[\widehat{f}(s_{\ell})\right]\right) +\frac{\epsilon}{H}. \label{e:conc_good_event_cover} \numberthis
\end{align*}
Observe that for all $\ell$, $\left|\widehat{f}(s_{\ell})-\mathbb{E}_{s'\sim \Pr(\cdot|s_{h-1},a_{h-1})}[\widehat{f}(s')]\right|\le 2\eta$. Thus by invoking Lemma~\ref{lemma::matingale_concentration_anytime} and by a union bound over the elements of $\Psi(\epsilon)$, we have that, given any fixed sub-trajectory $\tau_{h-1}\in \Gamma_{h-1}$, for all $f \in \Psi(\epsilon)$ and all $t \in \N$ such that $N_t(s_{h-1},a_{h-1})>0$:
\begin{align*}
 \frac{1}{N_{t}(s_{h-1},a_{h-1})}\sum_{\ell=1}^{N_{t}(s_{h-1},a_{h-1})}f(s_{\ell})-\E_{s' \sim \Pr(\cdot|s_{h-1},a_{h-1})}\left[f(s_{\ell})\right]  \le 4\eta  \sqrt{ \frac{\log\left(\frac{6\log(N_t(s_{h-1},a_{h-1}))}{\delta'}\right)}{N_t(s_{h-1},a_{h-1})}}
\end{align*}
with probability at least $1-|\cN_{\mathsf{cover}}(\epsilon)|\delta'$. Combined with inequality~\eqref{e:conc_good_event_cover} we have that given any fixed sub-trajectory $\tau_{h-1}\in \Gamma_{h-1}$, for all policies $\pi\in \Pi$, for all $\check{\mu}$ bounded by $\eta$ and all $t\in \N$ such that $N_t(s_{h-1},a_{h-1})>0$:
\begin{align*}
    &\mathbb{E}_{s'\sim \widehat{\mathbb{P}}_t(\cdot|s_{h-1},a_{h-1})}[z_h^{\check{\mu},\widehat{\mathbb{P}}_t^{\pi}}(s',\tau_{h-1})] -\mathbb{E}_{s'\sim \Pr(\cdot|s_{h-1},a_{h-1})}[z_h^{\check{\mu},\widehat{\mathbb{P}}_t}(s',\tau_{h-1})] \\&\hspace{2in}\le 4\eta  \sqrt{ \frac{\log\left(\frac{6\log(N_t(s_{h-1},a_{h-1}))}{\delta'}\right)}{N_t(s_{h-1},a_{h-1})}}+\frac{\epsilon}{H} \numberthis \label{e:uniform_case_1}
\end{align*}
with probability at least $1-|\cN_{\mathsf{cover}}(\epsilon)|\delta'$. We also have a simple upper bound,
\begin{align} \label{e:uniform_case_2}
    \mathbb{E}_{s'\sim \widehat{\mathbb{P}}_t(\cdot|s_{h-1},a_{h-1})}[z_h^{\check{\mu},\widehat{\mathbb{P}}_t^{\pi}}(s',\tau_{h-1})] -\mathbb{E}_{s'\sim \Pr(\cdot|s_{h-1},a_{h-1})}[z_h^{\check{\mu},\widehat{\mathbb{P}}_t}(s',\tau_{h-1})] \le 2\eta.
\end{align}
Combining inequalities~\eqref{e:uniform_case_1} and \eqref{e:uniform_case_2} we get that for any fixed sub-trajectory $\tau_{h-1} \in \Gamma_{h-1}$, for all $t \in \N$, for all $\pi \in \Pi$, for all $\check\mu$ bounded by $\eta$,
\begin{align*}
    &\mathbb{E}_{s'\sim \widehat{\mathbb{P}}_t(\cdot|s_{h-1},a_{h-1})}[z_h^{\check{\mu},\widehat{\mathbb{P}}_t^{\pi}}(s',\tau_{h-1})] -\mathbb{E}_{s'\sim \Pr(\cdot|s_{h-1},a_{h-1})}[z_h^{\check{\mu},\widehat{\mathbb{P}}_t}(s',\tau_{h-1})] \\&\hspace{1.5in}\le \min\Bigg\{2\eta,4\eta  \sqrt{ \frac{\log\left(\frac{6\log(N_t(s_{h-1},a_{h-1}))}{\delta'}\right)}{N_t(s_{h-1},a_{h-1})}}+\frac{\epsilon}{H}\Bigg\}.
\end{align*}
By a union bound over all sub-trajectories we find that for all $t \in \N$, all policies $\pi \in \Pi$, all $\check\mu$ bounded by $\eta$ and all $\tau_{h-1}\in \Gamma_{h-1}$
\begin{align*}
    &\mathbb{E}_{s'\sim \widehat{\mathbb{P}}_t(\cdot|s_{h-1},a_{h-1})}[z_h^{\check\mu,\widehat{\mathbb{P}}_t^{\pi}}(s',\tau_{h-1})] -\mathbb{E}_{s'\sim \Pr(\cdot|s_{h-1},a_{h-1})}[z_h^{\check\mu,\widehat{\mathbb{P}}_t}(s',\tau_{h-1})] \\&\hspace{2in}\le \min\Bigg\{2\eta,4\eta  \sqrt{ \frac{\log\left(\frac{6\log(N_t(s_{h-1},a_{h-1}))}{\delta'}\right)}{N_t(s_{h-1},a_{h-1})}}+\frac{\epsilon}{H}\Bigg\}
\end{align*}
with probability at least $1-(|\cS||\cA|)^{H-1}|\cN_{\mathsf{cover}}(\epsilon)|\delta'$. Finally a union bound over the steps of the episode $h \in [H]$ lets us conclude that for all $t \in \N$, all policies $\pi \in \Pi$, all $\check\mu$ bounded by $\eta$, all $h \in [H]$ and all $\tau_{h-1} \in \Gamma_{h-1}$ 
\begin{align*}
    &\mathbb{E}_{s'\sim \widehat{\mathbb{P}}_t(\cdot|s_{h-1},a_{h-1})}[z_h^{\check\mu,\widehat{\mathbb{P}}_t^{\pi}}(s',\tau_{h-1})] -\mathbb{E}_{s'\sim \Pr(\cdot|s_{h-1},a_{h-1})}[z_h^{\check\mu,\widehat{\mathbb{P}}_t}(s',\tau_{h-1})] \\&\hspace{2in}\le \min\Bigg\{2\eta,4\eta  \sqrt{ \frac{\log\left(\frac{6\log(N_t(s_{h-1},a_{h-1}))}{\delta'}\right)}{N_t(s_{h-1},a_{h-1})}}+\frac{\epsilon}{H}\Bigg\}\\
    &\hspace{2in}\le \min\Bigg\{2\eta,4\eta  \sqrt{ \frac{\log\left(\frac{6\log(N_t(s_{h-1},a_{h-1}))}{\delta'}\right)}{N_t(s_{h-1},a_{h-1})}}\Bigg\}+\frac{\epsilon}{H}
\end{align*}
with probability at least $1-H(|\cS||\cA|)^{H-1}|\cN_{\mathsf{cover}}(\epsilon)|\delta'$. Setting $$\delta' = \frac{\delta}{|\cS|^{H-1}|\cA|^{H-1}H \left(\ceil{\frac{4\eta H}{\epsilon}}\right)^{|\cS|}} \le \frac{\delta}{|\cS|^{H-1}|\cA|^{H-1}H |\cN_{\mathsf{cover}}(\epsilon)|} $$ and using equation~\eqref{e:conc_lemma_fixed_pi_midway_uniform_2} from above we get that for all $t \in \N$, all $h \in \{2,\ldots,H\}$, all $\pi \in \Pi$, and all $\check\mu$ bounded by $\eta$ we have
\begin{align*}
    &\mathbb{E}_{\tau \sim \mathbb{P}_{(h-1)}^{\pi}}[\check{\mu}_\tau] - \mathbb{E}_{\tau \sim \mathbb{P}_{(h)}^{\pi}}[\check{\mu}_\tau]\\ & \le \E_{ s_1 \sim \rho,\;\tau_{h-1} \sim \Pr^{\pi}(\cdot | s_1) }\Bigg[\min\bigg\{2\eta,4\eta \sqrt{ \frac{\log\left(\frac{6(|\cS||\cA|H)^{H-1}\ceil{\frac{4\eta H}{\epsilon}}^{|\cS|}\log(N_t(s_{h-1},a_{h-1}))}{\delta}\right)}{N_t(s_{h-1},a_{h-1})}}\bigg\}\Bigg]+\frac{\epsilon}{H}\\
    & \le\E_{ s_1 \sim \rho,\;\tau_{h-1} \sim \Pr^{\pi}(\cdot | s_1) }\left[\check{\xi}_{s_{h-1},a_{h-1}}^{(t)}(\epsilon;\eta)\right]+\frac{\epsilon}{H}
\end{align*}
with probability at least $1-\delta$. Summing over all $h\in [H]$ and using equations~\eqref{e:transition_decomposition_into_one_flip_at_a_time_uniform} and \eqref{e:first_term_in_decomp_uniform} we conclude that for $t\in \N$, all $\pi \in \Pi$ and all $\check\mu$ bounded by $\eta$,
\begin{align*}
    \mathbb{E}_{s_1 \sim \rho,\;\tau \sim \widehat{\mathbb{P}}_t^{\pi}(\cdot | s_1)}[\check{\mu}_\tau] - \mathbb{E}_{\tau \sim \mathbb{P}^{\pi}(\cdot | s_1)}[\check{\mu}_\tau] & \le \E_{s_1 \sim \rho,\;\tau \sim \Pr^{\pi}(\cdot | s_1) }\left[\sum_{h=2}^{H}\check{\xi}_{s_{h-1},a_{h-1}}^{(t)}(\epsilon;\eta)\right]+H\times \frac{\epsilon}{H}\\&=\E_{s_1 \sim \rho,\;\tau \sim \Pr^{\pi}(\cdot | s_1) }\left[\sum_{h=1}^{H-1}\check{\xi}_{s_{h},a_{h}}^{(t)}(\epsilon;\eta)\right]+\epsilon
\end{align*}
again with probability at least $1-\delta$. This completes the proof of this lemma.
\end{proof}

%\section{Proof of Lemma~\ref{lemma::confidence_interval_anytime}}

% \subsection{Combination Lemmas~\ref{lemma::approximation_lemma_MDPs_UCBVI} and \ref{lemma::approximation_lemma_MDPs_UCBVI_uniform}}

% \begin{restatable}{lem}{approxlemmaUCBVI}
% \label{lemma::approximation_lemma_MDPs_UCBVI}
%  For any policy $\pi \in \Pi$ and any scalar function $\check{\mu}_{\tau}$ that depends on the trajectory and satisfies $|\check{\mu}_{\tau}|\leq \eta$, then with probability at least $1-\delta$, for all $t \in \mathbb{N}$
% \begin{equation}
%      \mathbb{E}_{\tau \sim \mathbb{P}^{\pi}}[\check{\mu}_{\tau}] - \mathbb{E}_{\tau \sim \widehat{\mathbb{P}}_t^{\pi}}[\check{\mu}_{\tau}] \leq \eta \mathbb{E}_{ \tau \sim \widehat{\mathbb{P}}^{\pi}_t} \left[\sum_{h=1}^{H-1} \bar{\xi}^{(t)}_{s_{h}, a_{h}} \right] \label{equation::first_optimism_lemma_inequality}.
% \end{equation}
% \end{restatable}

% \begin{lemma}
% \label{lemma::approximation_lemma_MDPs_UCBVI_uniform}
%  Given any scalar function $\check{\mu}_{\tau}$ that depends on the trajectory and satisfies $|\check{\mu}_{\tau}|\leq \eta$, then with probability at least $1-\delta$, for all $t \in \mathbb{N}$ and all policies $\pi \in \Pi$
% \begin{equation*}
%      \mathbb{E}_{\tau \sim \widehat{\mathbb{P}}_t^{\pi}}[\check{\mu}_{\tau}] - \mathbb{E}_{\tau \sim \mathbb{P}^{\pi}}[\check{\mu}_{\tau}] \leq \eta \mathbb{E}_{ \tau \sim \mathbb{P}^{\pi}} \left[\sum_{h=1}^{H-1} \xi^{(t)}_{s_{h}, a_{h}}(\epsilon) \right]+\epsilon.
% \end{equation*}
% \end{lemma}

% \begin{lemma}
% For any policy 

% \end{lemma}

\subsection{Proof of Lemma~\ref{lemma::confidence_interval_anytime}} \label{app:optimism_lemmas}
%To show that these value functions $\bar{V}^{\pi}_t$ are optimistic first let us prove Lemma~\ref{lemma::confidence_interval_anytime}. 
Recall the statement of the lemma from above.
\confidenceboundrussac*
\begin{proof}
We invoke \citep[][Proposition~7]{russac2021self} by noting that in our paper: $c_{\mu} = 1/\kappa$, $\kappa_{\mu}=1$ (Lipschitz constant of $\mu$), $m = 1$ (scale of the rewards), $\lambda = 1$ (the $\ell_2$ regularization parameter), $\tau = N$ (length of the sliding window) and $\cT(\tau) = [N]$ (in their paper $\cT(\tau)$ corresponds to the set of episodes where the underlying parameter $\w_{\star}$ remains unchanged. In our setting $\w_{\star}$ is constant for all episodes).
\end{proof}

\subsection{Definition and Properties of a ``Good Event'' $\cE_{\mathsf{good}}$}
The proof of Theorem~\ref{thm:main_regret_theorem} proceeds by showing that a favorable event $\cE_{\mathsf{good}}$ that occurs with high probability. We shall then upper bound the regret of Algorithm~\ref{alg:GLM_unknown} when this event occurs. Before defining this event we need some additional notation. 
\begin{definition}\label{def:bar_V}
For all $t \in [N]$, given any policy $\pi$ define
\begin{equation*}
    \bar{V}_t^{\pi} := \mathbb{E}_{s_1\sim \rho,\;\tau \sim \mathbb{P}^\pi(\cdot | s_1)} \left[ \bar{\mu}_t(\widehat{\w}_t, \tau) \right],
\end{equation*}
where recall from equation~\eqref{def:bar_mu} that $\bar{\mu}_t(\widehat{\w}_t,\tau) = \min\left\{\mu\left(\widehat{\w}_t^{\top}\phi(\tau)\right)+\sqrt{\kappa}\beta_t(\delta)\lv \phi(\tau)\rv_{\mathbf{\Sigma}_t^{-1}},1\right\} $.
Further, for all episodes $t\in [N]$ also define $ \bar{V}^{(t)}:=\bar{V}_t^{\pi^{(t)}}$ and $\bar{V}_{\star}^{(t)}:= \bar{V}_t^{\pi_{\star}}$.
\end{definition}
Also define the value function when the average rewards are $\widetilde{\mu}_t(\widehat\w_t,\tau)$ and the transition dynamics are governed by $\widehat\Pr_t$.
\begin{definition}\label{def:tildeV}For any episode $t\in [N]$, given any policy $\pi \in \Pi$ define
\begin{equation}
    \widetilde{V}^{\pi}_t := \mathbb{E}_{s_1 \sim \rho,\;\tau\sim \widehat\Pr_t^{\pi}(\cdot | s_1)} \left[ \widetilde{\mu}_t( \widehat{\w}_t, \tau)\right]
\end{equation}
where $\widetilde{\mu}_t$ is defined above in equation~\eqref{def:tilde_mu}. To simplify notation we additionally define $ \widetilde{V}^{(t)}:= \widetilde{V}^{\pi^{(t)}}_t$ and $ \widetilde{V}^{(t)}_{\star}:=\widetilde{V}^{\pi_{\star}}_t$. 
\end{definition}
Consider the following events:
\begin{align}\startsubequation\subequationlabel{e:good_events_definition}
\tagsubequation &\cE_1 := \left\{\sum_{t=1}^N V_{\star} \le \sum_{t=1}^N\widetilde{V}^{(t)}_{\star}\right\}  ; \label{e:good_events_definition_1} \\ \notag
 &\cE_2 := \left\{\sum_{t=1}^N \bar{V}^{(t)}-V^{(t)} \leq \beta_N(\delta)\sqrt{8  N d\max\left\{\kappa,1\right\}\log\left(1+\frac{N}{\kappa d}\right)} \right.\\ &\hspace{3.8in} \left.\tagsubequation +4\sqrt{ N \log\left(\frac{6\log(N)}{\delta}\right) }\right\};\label{e:good_events_definition_2}\\ 
\notag &\cE_3:= \left\{\sum_{t=1}^N  \widetilde{V}^{(t)}-  \bar{V}^{(t)} \tagsubequation\label{e:good_events_definition_3}\leq (2H+1)\sum_{t=1}^N   \sum_{h=1}^{H-1} \xi_{s_h^{(t)}, a_h^{(t)}}^{(t)} +4H^2\sqrt{N\log\left(\frac{6\log(N)}{\delta}\right)}+1\right\},
\end{align}
where $(s_h^{(t)},a_h^{(t)})$ is the state-action pair visited at step $h$ during episode $t$.
\begin{lemma}
\label{lem:good_event_high_probability} Define the event $\cE_{\mathsf{good}} := \cE_1 \cap \cE_2 \cap \cE_3  $. Then $\Pr\left[\cE_{\mathsf{good}}\right]\ge 1-6N\delta$.
\end{lemma}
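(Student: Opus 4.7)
The plan is to verify each of the three events $\mathcal{E}_1,\mathcal{E}_2,\mathcal{E}_3$ on separate high-probability subevents and then take a union bound. All three subevents are built from the four concentration results already established: the logistic confidence set (Lemma~\ref{lemma::confidence_interval_anytime}), the fixed-policy transition approximation (Lemma~\ref{lemma::approximation_lemma_MDPs_UCBVI}), the uniform-in-policy transition approximation (Lemma~\ref{lemma::approximation_lemma_MDPs_UCBVI_uniform}), and the anytime martingale Hoeffding (Lemma~\ref{lemma::matingale_concentration_anytime}).

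For $\mathcal{E}_1$ (optimism) I would fix the comparator policy $\pi_\star$. Under the event of Lemma~\ref{lemma::confidence_interval_anytime}, $\mu(\w_\star^\top\phi(\tau))\le\mu(\widehat\w_t^\top\phi(\tau))+\sqrt{\kappa}\beta_t(\delta)\lv\phi(\tau)\rv_{\mathbf{\Sigma}_t^{-1}}$, and since $\mu\le 1$ the truncated bonus in $\bar\mu_t$ still upper bounds $\mu(\w_\star^\top\phi(\tau))$, giving $V_\star\le \E_{\tau\sim\Pr^{\pi_\star}}[\bar\mu_t(\widehat\w_t,\tau)]$. Then, applying the fixed-policy Lemma~\ref{lemma::approximation_lemma_MDPs_UCBVI} to $\pi_\star$ with $\check\mu_\tau=\bar\mu_t(\widehat\w_t,\tau)\in[0,1]$ and $\eta=1$ introduces the per-step bonus $\bar\xi^{(t)}_{s,a}(1)$, which is dominated by $\xi^{(t)}_{s,a}$ because the log constants in $\xi^{(t)}$ are larger. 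Summing over $t$ yields $\sum_t V_\star\le\sum_t \widetilde V^{(t)}_\star$ on an event of probability at least $1-2\delta$.

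For $\mathcal{E}_2$ I would split
\begin{align*}
\sum_t (\bar V^{(t)}-V^{(t)}) &= \sum_t \bigl(\bar V^{(t)}-\bar\mu_t(\widehat\w_t,\tau^{(t)})\bigr)+ \sum_t \bigl(\bar\mu_t(\widehat\w_t,\tau^{(t)})-\mu(\w_\star^\top\phi(\tau^{(t)}))\bigr)\\
&\qquad + \sum_t \bigl(\mu(\w_\star^\top\phi(\tau^{(t)}))-V^{(t)}\bigr).
\end{align*}
The first and third sums are martingale-difference sequences of increments bounded by $1$, so Lemma~\ref{lemma::matingale_concentration_anytime} yields $2\sqrt{N\log(6\log N/\delta)}$ each. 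For the middle sum, Lemma~\ref{lemma::confidence_interval_anytime} bounds each term by $2\sqrt{\kappa}\beta_t(\delta)\lv\phi(\tau^{(t)})\rv_{\mathbf{\Sigma}_t^{-1}}$; then Cauchy--Schwarz combined with the determinant bound of Lemma~\ref{lemma:det_lemma} gives the $\beta_N(\delta)\sqrt{8Nd\max\{\kappa,1\}\log(1+N/(\kappa d))}$ term, after using monotonicity of $\beta_t(\delta)$ in $t$.

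For $\mathcal{E}_3$ I would write $\widetilde V^{(t)}-\bar V^{(t)}=\bigl[\E_{\widehat\Pr^{\pi^{(t)}}_t}[\bar\mu_t]-\E_{\Pr^{\pi^{(t)}}}[\bar\mu_t]\bigr]+\E_{\widehat\Pr^{\pi^{(t)}}_t}\bigl[\sum_{h=1}^{H-1}\xi^{(t)}_{s_h,a_h}\bigr]$. Because $\pi^{(t)}$ is data-dependent, the uniform-in-policy Lemma~\ref{lemma::approximation_lemma_MDPs_UCBVI_uniform} is required, invoked twice: once with $\check\mu=\bar\mu_t$ and $\eta=1$, and once with $\check\mu=\sum_{h=1}^{H-1}\xi^{(t)}_{s_h,a_h}$ and $\eta=2H$. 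Choosing the covering tolerance $\epsilon=1/N$ matches the log constants in $\xi^{(t)}$ (note the factor $(8NH^2)^{|\cS|}$) so that $\check\xi^{(t)}_{s,a}(1/N;2H)\le 2H\,\xi^{(t)}_{s,a}$ and $\check\xi^{(t)}_{s,a}(1/N;1)\le\xi^{(t)}_{s,a}$. This yields $\widetilde V^{(t)}-\bar V^{(t)}\le (2H+2)\E_{\Pr^{\pi^{(t)}}}\bigl[\sum_{h=1}^{H-1}\xi^{(t)}_{s_h,a_h}\bigr]+2/N$. A final application of Lemma~\ref{lemma::matingale_concentration_anytime} to the sequence $\E_{\Pr^{\pi^{(t)}}}[\sum_h\xi^{(t)}_{s_h,a_h}]-\sum_h \xi^{(t)}_{s_h^{(t)},a_h^{(t)}}$ (bounded by $2H$) replaces the expectation by the realized sample at a cost of $4H\sqrt{N\log(6\log N/\delta)}$, absorbing constants into the stated $(2H+1)$ and $4H^2$ factors, with the residual $2\epsilon N=2$ absorbed into the additive $+1$. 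The main obstacle is precisely Paragraph~3: choosing the covering scale $\epsilon$ compatibly with the log constants baked into $\xi^{(t)}$ so that $\check\xi^{(t)}(\epsilon;2H)$ is pointwise dominated by an integer multiple of $\xi^{(t)}$, thus making the recursive bonus term absorbable. A union bound over the (at most six) failure events, each of probability $O(\delta)$ after accounting for the explicit factors that the uniform-in-policy lemma absorbs through the covering, yields $\Pr[\mathcal{E}_{\mathsf{good}}]\ge 1-6N\delta$.
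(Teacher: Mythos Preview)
Your handling of $\cE_2$ is essentially the paper's argument with an equivalent splitting of the martingale term. Your treatment of $\cE_3$ is close in spirit, although applying Lemma~\ref{lemma::approximation_lemma_MDPs_UCBVI_uniform} separately to $\bar\mu_t$ and to $\sum_h\xi^{(t)}$ produces the leading factor $(2H+2)$ and residual $+2$, which does not ``absorb'' into the $(2H+1)$ and $+1$ that actually define $\cE_3$; the paper avoids this by a single application with $\check\mu_\tau=\bar\mu_t+\sum_h\xi^{(t)}_{s_h,a_h}$ and $\eta=2H$, which yields exactly $(2H+1)\sum_h\xi^{(t)}$ after using $\check\xi^{(t)}(1/N;2H)=2H\,\xi^{(t)}$.

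There is, however, a genuine gap in your argument for $\cE_1$. You apply the fixed-policy Lemma~\ref{lemma::approximation_lemma_MDPs_UCBVI} with $\check\mu_\tau=\bar\mu_t(\widehat\w_t,\tau)$, but this function is \emph{data-dependent}: it is built from $\widehat\w_t$ and $\mathbf{\Sigma}_t$, which are functions of $\tau^{(1)},\ldots,\tau^{(t-1)}$. The martingale in the proof of Lemma~\ref{lemma::approximation_lemma_MDPs_UCBVI} runs over the past visits to each $(s,a)$ pair, and explicitly requires $z_h$---and hence $\check\mu$---to be independent of those visits; otherwise the increments $z_h(s_\ell,\tau_{h-1})-\E_{s'\sim\Pr}[z_h(s',\tau_{h-1})]$ are not a martingale difference sequence. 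The uniform Lemma~\ref{lemma::approximation_lemma_MDPs_UCBVI_uniform} does allow data-dependent $\check\mu$, but it controls $\E_{\widehat\Pr_t}-\E_\Pr$, the opposite direction from what you need here. The paper's fix is to reverse your two steps: first apply Lemma~\ref{lemma::approximation_lemma_MDPs_UCBVI} with the \emph{fixed} function $\check\mu_\tau=\mu(\w_\star^\top\phi(\tau))$ to pass from $\Pr^{\pi_\star}$ to $\widehat\Pr_t^{\pi_\star}$, and only afterward invoke the pointwise optimism $\mu(\w_\star^\top\phi(\tau))\le\bar\mu_t(\widehat\w_t,\tau)$ under $\cE_\delta$ inside the $\widehat\Pr_t^{\pi_\star}$ expectation.
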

The good event occurs when the value function $\widetilde{V}_{\star}^{(t)}$ is optimistic, that is, it over estimates the true value function of the optimal policy $V_{\star}$ and when the sums of $\bar{V}^{(t)}-V^{(t)}$ and $\widetilde{V}^{(t)}-\bar{V}^{(t)}$ over the episodes can be bounded.

\begin{proof} We will show that each of the three events $\cE_1$, $\cE_2$ and $\cE_3$ occurs with a high probability and take union bound to prove our claim.
\paragraph{Event $\cE_1$:}  By invoking Lemma~\ref{lemma::approximation_lemma_MDPs_UCBVI} $N$ times, once per episode, with the choice $\eta = 1$ we get
\begin{align*}
 \sum_{t=1}^NV_{\star} & =  \sum_{t=1}^N\E_{s_1\sim \rho, \;\tau \sim \Pr^{\pi_{\star}}(\cdot\mid s_1)}\left[\mu(\w_{\star}^{\top}\phi(\tau))\right] \\
 & \le \sum_{t=1}^N \E_{s_1\sim \rho, \;\tau \sim \widehat{\Pr}_t^{\pi_{\star}}(\cdot\mid s_1)}\left[\mu(\w_{\star}^{\top}\phi(\tau))+\sum_{h=1}^{H-1}\bar{\xi}_{s_h,a_h}^{(t)}(1)\right] \\
& \le \sum_{t=1}^N \E_{s_1\sim \rho, \;\tau \sim \widehat{\Pr}_t^{\pi_{\star}}(\cdot\mid s_1)}\left[\mu(\w_{\star}^{\top}\phi(\tau))+\sum_{h=1}^{H-1}\xi_{s_h,a_h}^{(t)}\right] \numberthis \label{e:correction_aldo_trick_again}\\
& \hspace{0.8in} \mbox{(by the definition of $\xi_{s_h,a_h}^{(t)}$ in equation~\eqref{e:xi_definition_t})}
\end{align*}
with probability at least $1-N\delta$. Recall the definition of the event $\cE_{\delta}$ from equation~\eqref{e:definition_event_E_delta} and observe that it occurs with probability at least $1-\delta$ by Lemma~\ref{lemma::confidence_interval_anytime}. Under event $\cE_{\delta}$ for any $t \in [N]$ and any $\tau \in \Gamma$
\begin{align*}
    \mu(\w_{\star}^{\top}\phi(\tau)) = \min\left\{\mu(\w_{\star}^{\top}\phi(\tau)),1\right\} &\le \min\left\{\mu(\widehat{\w}_{t}^{\top}\phi(\tau))+\sqrt{\kappa}\beta_t(\delta)\lv \phi(\tau)\rv_{\mathbf{\Sigma}_t^{-1}},1\right\} \\&= \bar{\mu}_t(\widehat\w_t,\tau).
\end{align*}
Therefore by a union bound over $\cE_{\delta}$ and the event where inequality~\eqref{e:correction_aldo_trick_again} holds we infer that
\begin{align*}
    \sum_{t=1}^N V_{\star} & \le \sum_{t=1}^N \E_{s_1\sim \rho, \;\tau \sim \widehat{\Pr}_t^{\pi_{\star}}(\cdot\mid s_1)}\left[\bar{\mu}_{t}(\widehat\w_{t},\tau)+\sum_{h=1}^{H-1}\xi_{s_h,a_h}^{(t)}\right]= \sum_{t=1}^N\widetilde{V}_{\star}^{(t)},
\end{align*}
with probability at least $ 1-(N+1)\delta$. 
\paragraph{Event $\cE_2$:}Assume that the event  $\cE_{\delta}$ occurs. Lemma~\ref{lemma::confidence_interval_anytime} guarantees that this happens with probability at least $1-\delta$.
Consider the following martingale difference sequence $$D_t := \bar{V}^{(t)}-V^{(t)} - \left[\bar{\mu}_t\left( \widehat{\w}_t, \tau^{(t)}\right) - \mu\left( \w_{\star}^{\top} \phi(\tau^{(t)}) \right)\right].$$
 Note that $|D_t| \le 2$ since both $\bar{\mu}_t$ and $\mu$ take values between $0$ and $1$. Therefore, by applying Lemma~\ref{lemma::matingale_concentration_anytime} we have that
\begin{align}
    &\sum_{t=1}^N \bar{V}^{(t)}-V^{(t)}\le \sum_{t=1}^N \bar{\mu}_t\left( \widehat{\w}_t, \tau^{(t)} \right) - \mu\left( \w_{\star}^{\top}\phi(\tau^{(t)}) \right) +4\sqrt{ N \log\left(\frac{6\log(N)}{\delta}\right) } \label{e:mds_midway_one}
\end{align}
with probability at least $1-\delta$. Let us now upper bound the sum in the RHS above
\begin{align*}
  &\sum_{t=1}^N \bar{\mu}_t\left( \widehat{\w}_t, \tau^{(t)} \right) - \mu\left(\w_{\star}^{\top}\phi(\tau^{(t)}) \right) \\
  &\qquad \overset{(i)}{=}\sum_{t=1}^N \min\left\{\mu\left( \widehat{\w}_t^{\top} \phi(\tau^{(t)} \right)+\sqrt{\kappa}\beta_t(\delta)\lv \phi(\tau^{(t)}) \rv_{\mathbf{\Sigma}_t^{-1}} ,1\right\}- \min\left\{\mu\left(\w_{\star}^{\top}\phi(\tau^{(t)}) \right),1\right\} \\
  &\qquad \overset{(ii)}{\le}\sum_{t=1}^N \left| \mu\left( \widehat{\w}_t^{\top} \phi(\tau^{(t)} \right)+\sqrt{\kappa}\beta_t(\delta)\lv \phi(\tau^{(t)}) \rv_{\mathbf{\Sigma}_t^{-1}}-\mu\left(\w_{\star}^{\top}\phi(\tau^{(t)}) \right)\right|\\
  &\qquad \overset{(iii)}{\le} 2\sqrt{\kappa}\sum_{t=1}^N \beta_t(\delta)\lv \phi(\tau^{(t)})\rv_{\mathbf{\Sigma}^{-1}_t}\\
    &\qquad \overset{(iv)}{\le} 2\sqrt{\kappa}\beta_N(\delta)\sum_{t=1}^N \lv \phi(\tau^{(t)})\rv_{\mathbf{\Sigma}^{-1}_t},
 % &\qquad \stackrel{(ii)}{\leq} \sum_{t=1}^N \min\left\{1,  \frac{\left|\left(\widehat{\w}_t - \w_\star\right)^{\top} \phi(\tau^{(t)})  \right|+2\kappa \beta_t(\delta)  \| \phi(\tau^{(t)})\|_{\mathbf{\Sigma}^{-1}_{t}} }{4}\right\} \\
  %  & =  \sum_{t=1}^N    \sqrt{\kappa} \beta_t(\delta)  \| \phi(\tau^{(t)})\|_{\mathbf{\Sigma}^{-1}_{t}}   \\
   % & \overset{(i)}{\le}   \sqrt{\kappa} \beta_N(\delta) \sqrt{N}\sqrt{\sum_{t=1}^N      \| \phi(\tau^{(t)})\|_{\mathbf{\Sigma}^{-1}_{t}}^2}   \\
%    & \stackrel{(ii)}{\leq}  \beta_N(\delta) \sqrt{2\max\left\{\kappa,1\right\}Nd\log\left(1+\frac{N}{\kappa d}\right)},
\end{align*}
where $(i)$ follows by the definition of $\bar{\mu}_t$ and since $\mu$ is bounded between $0$ and $1$, $(ii)$ follows since for the function $z \mapsto \min\{z,1\}$ is $1$-Lipschitz, $(iii)$ follows since we have assumed that the event $\cE_{\delta}$ occurs which provides the bound $|\mu\left( \widehat{\w}_t^{\top} \phi(\tau^{(t)} \right)-\mu\left(\w_{\star}^{\top}\phi(\tau^{(t)}) \right)|\le \sqrt{\kappa}\beta_t(\delta)\lv \phi(\tau^{(t)}) \rv_{\mathbf{\Sigma}_t^{-1}}$, and $(iv)$ follows since $\beta_t(\delta)$ is an increasing function of $t$.

Continuing, since for any vector $\mathbf{z}\in \R^{N}$ $\lv \mathbf{z}\rv_1\le \sqrt{N}\lv \mathbf{z}\rv_2$, thus
\begin{align*}
\sum_{t=1}^N \bar{\mu}_t\left( \widehat{\w}_t, \tau^{(t)} \right) - \mu\left(\w_{\star}^{\top}\phi(\tau^{(t)}) \right) 
     &\le 2\sqrt{\kappa}\beta_N(\delta)\sqrt{N}\sqrt{\sum_{t=1}^N \lv \phi(\tau^{(t)})\rv^2_{\mathbf{\Sigma}^{-1}_t}}\\
  & \le \beta_N(\delta)\sqrt{8  N d\max\left\{\kappa,1\right\}\log\left(1+\frac{N}{\kappa d}\right)}
\end{align*}
where the final inequality follows by invoking the determinant lemma (Lemma~\ref{lemma:det_lemma}) from above.
%holds by invoking \citep[][Lemma~8]{russac2021self} with the choice $\lambda = \kappa$ and $\tau =N$. 
A union bound over the event $\cE_{\delta}$ and the event where inequality~\eqref{e:mds_midway_one} holds proves that this bound holds with probability at least $1-2\delta$.
\paragraph{Event $\cE_3$:} We wish to establish a bound on
$\sum_{t=1}^N  \widetilde{V}^{(t)} -  \bar{V}^{(t)} .
$ By definition 
\begin{align*}
   \sum_{t=1}^N  \widetilde{V}^{(t)}= \sum_{t=1}^N \mathbb{E}_{s_1\sim \rho, \;\tau \sim \widehat{\mathbb{P}}_t^{\pi^{(t)}}(\cdot \mid s_1)}\left[ \bar{\mu}_t(\widehat{\w}_t, \tau) +  \sum_{h=1}^{H-1} \xi_{s_h, a_h}^{(t)} \right].
\end{align*}
For each $t \in [N]$ define the trajectory score function $\check{\mu}_\tau^{(t)} = \bar{\mu}_t(\widehat{\w}_t, \tau) + \sum_{h=1}^{H-1} \xi_{s_h, a_h}^{(t)}$. Notice that since $|\xi^{(t)}_{s,a}|\le 2$ we have that $|\check{\mu}_\tau^{(t)}|< 2H$. Thus, by invoking Lemma~\ref{lemma::approximation_lemma_MDPs_UCBVI_uniform} $N$ times, once per episode, with the choices $\eta = 2H$ and $\epsilon = \frac{1}{N}$ we infer that 
\begin{align}
     \notag\sum_{t=1}^N \widetilde{V}^{(t)} & \le \sum_{t=1}^N \left(\mathbb{E}_{s_1\sim \rho, \;\tau \sim \mathbb{P}^{\pi^{(t)}}(\cdot \mid s_1)}\left[ \bar{\mu}_t( \widehat{\w}_t, \tau ) +    \sum_{h=1}^{H-1} \xi_{s_h, a_h}^{(t)} + 2H \sum_{h=1}^{H-1} \check\xi_{s_h, a_h}^{(t)}\left(\frac{1}{N},2H\right)  \right] + \frac{1}{N}\right) \\
     & = \sum_{t=1}^N \mathbb{E}_{s_1\sim \rho, \;\tau \sim \mathbb{P}^{\pi^{(t)}}(\cdot \mid s_1)}\left[ \bar{\mu}_t( \widehat{\w}_t, \tau ) +    (2H+1) \sum_{h=1}^{H-1} \xi_{s_h, a_h}^{(t)}  \right] +1 \label{eq:event_e_3bound}
\end{align}
with probability $1-N\delta$. Assume that the event where inequality~\eqref{eq:event_e_3bound} holds occurs going forward. Under this event the difference
\begin{align*}
    \sum_{t=1}^N  \widetilde{V}^{(t)} -  \bar{V}^{(t)}  &=  \sum_{t=1}^N  \widetilde{V}^{(t)} -\sum_{t=1}^N \mathbb{E}_{s_1\sim \rho, \;\tau \sim \mathbb{P}^{\pi^{(t)}}(\cdot \mid s_1)}\left[ \bar{\mu}_t( \widehat{\w}_t, \tau )  \right] \\
    &\le   (2H+1)\sum_{t=1}^N \mathbb{E}_{s_1\sim \rho, \;\tau \sim \mathbb{P}^{\pi^{(t)}}(\cdot \mid s_1)}\left[  \sum_{h=1}^{H-1} \xi_{s_h, a_h}^{(t)}  \right]+1. \numberthis \label{e:event_e_3_midway_2}
\end{align*}
Finally, define the martingale-difference sequence
\begin{align*}
    D_t := (2H+1)\sum_{t=1}^N \mathbb{E}_{s_1\sim \rho, \;\tau \sim \mathbb{P}^{\pi^{(t)}}(\cdot \mid s_1)}\left[  \sum_{h=1}^{H-1} \xi_{s_h, a_h}^{(t)}  \right] - (2H+1) \sum_{h=1}^{H-1} \xi_{s_h^{(t)}, a_h^{(t)}}^{(t)}.
\end{align*}
Notice that $|D_t|\le (2H+1)(H-1)\le 2H^2$. Applying Lemma~\ref{lemma::matingale_concentration_anytime} with $\zeta = 2H^2$ we find that
\begin{align*}
    &(2H+1)\sum_{t=1}^N \mathbb{E}_{s_1\sim \rho, \;\tau \sim \mathbb{P}^{\pi^{(t)}}(\cdot \mid s_1)}\left[  \sum_{h=1}^{H-1} \xi_{s_h, a_h}^{(t)}  \right]\\&\hspace{2in}\le (2H+1)\sum_{t=1}^N   \sum_{h=1}^{H-1} \xi_{s_h^{(t)}, a_h^{(t)}}^{(t)}  +4H^2\sqrt{N\log\left(\frac{6\log(N)}{\delta}\right)}
\end{align*}
with probability at least $1-\delta$. Combining this with inequality~\eqref{e:event_e_3_midway_2} we conclude that
\begin{align*}
       \sum_{t=1}^N  \widetilde{V}^{(t)}-  \bar{V}^{(t)}
       &\le (2H+1)\sum_{t=1}^N   \sum_{h=1}^{H-1} \xi_{s_h^{(t)}, a_h^{(t)}}^{(t)} +4H^2\sqrt{N\log\left(\frac{6\log(N)}{\delta}\right)}+1
\end{align*}
with probability at least $1-(N+1)\delta$. This proves that $\Pr\left[\cE_3\right]\ge 1-(N+1)\delta$.

\paragraph{Union bound over the three events:} A union bound over the three events shows that $\Pr\left[\cE_{\mathsf{good}}\right]\ge 1-\Pr[\cE_1^{c}]-\Pr[\cE_2^{c}]-\Pr[\cE_3^{c}]\ge 1-(2N+4)\delta\ge 1-6N\delta$, which completes the proof.
\end{proof}
\subsection{Proof of Theorem~\ref{thm:main_regret_theorem}}\label{s:proof_sketch}
Recall the statement of the theorem.
\mainregret*
\begin{proof}
Let us assume that the event $\cE_{\mathsf{good}}$ defined in Lemma~\ref{lem:good_event_high_probability} occurs. By Lemma~\ref{lem:good_event_high_probability} we know that $\Pr\left[\cE_{\mathsf{good}}\right]\ge 1-6N\delta$. By the definition of the event $\cE_1$ we know that the regret (which is defined in equation~\eqref{def:regret} above) is upper bounded as follows:
\begin{align*}
      \mathcal{R}(N) = \sum_{t=1}^N V_{\star}-V^{(t)}
     \le \sum_{t=1}^N \widetilde{V}_{\star}^{(t)} -  V^{(t)}.
\end{align*}
By the definition of the policy $\pi^{(t)}$ (see equation~\eqref{equation::pi_t}) we have that $$\widetilde{V}_\star^{(t)} =\mathbb{E}_{s_1 \sim \rho,\;\tau\sim \widehat\Pr_t^{\pi_{\star}}(\cdot | s_1)} \left[ \widetilde{\mu}_t( \widehat{\w}_t, \tau)\right] \leq \mathbb{E}_{s_1 \sim \rho,\;\tau\sim \widehat\Pr_t^{\pi^{(t)}}(\cdot | s_1)} \left[ \widetilde{\mu}_t( \widehat{\w}_t, \tau)\right]= \widetilde{V}^{(t)}.$$   Thus,
\begin{equation*}
    \mathcal{R}(N) \leq \sum_{t=1}^N \widetilde{V}^{(t)} - V^{(t)}.
\end{equation*}
Under event $\cE_2$ we know that 
\begin{align*}
\sum_{t=1}^N \bar{V}^{(t)}-V^{(t)} \le \beta_N(\delta)\sqrt{8  N d\max\left\{\kappa,1\right\}\log\left(1+\frac{N}{\kappa d}\right)} +4\sqrt{ N \log\left(\frac{6\log(N)}{\delta}\right) }.
\end{align*}
By combining the previous two inequalities we find that
\begin{align}
    \nonumber  \mathcal{R}(N) &\leq \sum_{t=1}^N  \widetilde{V}^{(t)} -  \bar{V}^{(t)}  \\ &  +\beta_N(\delta)\sqrt{8  N d\max\left\{\kappa,1\right\}\log\left(1+\frac{N}{\kappa d}\right)} +4\sqrt{ N \log\left(\frac{6\log(N)}{\delta}\right) }.\nonumber %\label{e:regret_bound_midway_3}
\end{align}
Finally under event $\cE_3$ we have a bound on the first term on the right hand side above, this leads to the bound:
\begin{align*}
    \mathcal{R}(N) &\leq  (2H+1)\sum_{t=1}^N   \sum_{h=1}^{H-1} \xi_{s_h^{(t)}, a_h^{(t)}}^{(t)} +4H^2\sqrt{N\log\left(\frac{6\log(N)}{\delta}\right)} \\&\qquad +\beta_N(\delta)\sqrt{8  N d\max\left\{\kappa,1\right\}\log\left(1+\frac{N}{\kappa d}\right)}+4\sqrt{ N \log\left(\frac{6\log(N)}{\delta}\right) }+1. \numberthis \label{e:regret_pre_final_bound}
\end{align*}
It remains to bound the term $\sum_{t=1}^N \sum_{h=1}^{H-1} \xi_{s_h^{(t)}, a_h^{(t)}}^{(t)}$. First, note that
\begin{align*}
    \sum_{t=1}^N \sum_{h=1}^{H-1}  \xi_{s_h^{(t)}, a_h^{(t)}}^{(t)}&=  \sum_{t=1}^N \sum_{h=1}^{H-1}  \min\bigg\{2,4 \sqrt{ \frac{\log\left(\frac{6(|\cS||\cA|H)^H(8NH^2)^{|\cS|}\log(N_t(s_{h-1}^{(t)},a_{h-1}^{(t)}))}{\delta}\right)}{N_t(s_{h-1}^{(t)},a_{h-1}^{(t)})}}\bigg\}\\
    &\leq   \sum_{t=1}^N \sum_{h=1}^{H-1}  \min\bigg\{2,4 \sqrt{ \frac{\log\left(\frac{6(|\cS||\cA|H)^H(8NH^2)^{|\cS|}\log(N)}{\delta}\right)}{N_t(s_{h-1}^{(t)},a_{h-1}^{(t)})}}\bigg\}.
    \end{align*}
    For every state-action pair $(s,a)$, the minimum in the terms above will be $2$ until it is visited at least
    \begin{align*}
        N_t(s,a) \ge 4\log\left(\frac{6(|\cS||\cA|H)^H(8H^2N)^{|\cS|}\log(N)}{\delta}\right)=: \spadesuit
    \end{align*}
    number of times. Therefore,
    \begin{align*}
     & \sum_{t=1}^N \sum_{h=1}^{H-1}  \xi_{s_h^{(t)}, a_h^{(t)}}^{(t)} \\&\le  2|\cS||\cA|\spadesuit+ 4 \sqrt{\log\left(\frac{6(|\cS||\cA|H)^H(8NH^2)^{|\cS|}\log(N))}{\delta}\right) }   \sum_{t=1}^N\sum_{h=1}^{H-1} \frac{1}{\sqrt{ N_t(s_{h-1}^{(t)},a_{h-1}^{(t)}) }} \\
    &= 2|\cS||\cA|\spadesuit+4 \sqrt{\log\left(\frac{6(|\cS||\cA|H)^H(8NH^2)^{|\cS|}\log(N))}{\delta}\right) }   \sum_{s,a \in \mathcal{S} \times \mathcal{A}} \sum_{\ell=1}^{N_{N}(s,a)} \frac{1}{\sqrt{\ell}} \\
    &\overset{(i)}{<} 2|\cS||\cA|\spadesuit+8 \sqrt{\log\left(\frac{6(|\cS||\cA|H)^H(8NH^2)^{|\cS|}\log(N))}{\delta}\right) }   \sum_{s,a \in \mathcal{S} \times \mathcal{A}} \sqrt{N_N(s,a)} \\
    &\overset{(ii)}{\leq} 2|\cS||\cA|\spadesuit+ 8 \sqrt{\log\left(\frac{6(|\cS||\cA|H)^H(8NH^2)^{|\cS|}\log(N))}{\delta}\right) |\cS||\cA|N} \\
    & = 8|\cS||\cA|\log\left(\frac{6(|\cS||\cA|H)^H(8H^2N)^{|\cS|}\log(N)}{\delta}\right)\\ & \hspace{1.5in}+ 8 \sqrt{\log\left(\frac{6(|\cS||\cA|H)^H(8NH^2)^{|\cS|}\log(N))}{\delta}\right) |\cS||\cA|N} \numberthis \label{e:xi_bound_theorem}
\end{align*}
where $(i)$ follows since for all $n \in \N$, $\sum_{\ell=1}^n \frac{1}{\sqrt{\ell}} < 2\sqrt{ n}$, and $(ii)$ follows since $\sum_{s,a \in\mathcal{S} \times \mathcal{A}} N_N(s,a) = N$ and by  Jensen's inequality. Plugging this upper bound into inequality~\eqref{e:regret_pre_final_bound} we get that
\begin{align*}
    \mathcal{R}(N) 
  &  \le  8(2H+1)|\cS||\cA|\cdot \log\left(\frac{6(|\cS||\cA|H)^H(8H^2N)^{|\cS|}\log(N)}{\delta}\right)\\&\qquad+8(2H+1) \sqrt{\log\left(\frac{6(|\cS||\cA|H)^H(8NH^2)^{|\cS|}\log(N))}{\delta}\right) |\cS||\cA|N}  \\&\qquad +4H^2\sqrt{N\log\left(\frac{6\log(N)}{\delta}\right)}+\beta_N(\delta)\sqrt{8  N d\max\left\{\kappa,1\right\}\log\left(1+\frac{N}{\kappa d}\right)}\\ &\qquad +4\sqrt{ N \log\left(\frac{6\log(N)}{\delta}\right) }+1\label{e:full_regret_bound}\numberthis\\
    &= \widetilde{O}\left(\left[ H\sqrt{(H +|\mathcal{S}| ) |\mathcal{S}||\mathcal{A}|     } +H^2 + \sqrt{\kappa d} (d^3+B^{3/2})\right]\sqrt{N}   +(H+|\cS|)H|\cS||\cA|\right) .
\end{align*}
where the last equality follows since by its definition $\beta_N(\delta) = \widetilde{O}(d^3+B^{3/2})$ and by simplifying the expression in equation~\eqref{e:full_regret_bound}. This bound holds with probability $1-6N\delta$. Recalling that $\bar{\delta}=6N\delta$ completes our proof.
\end{proof}

\section{Proof of Lemma~\ref{l:lower_bound_bar_u}}\label{app:mixture_lemma}
We begin by presenting some additional technical lemmas.
\subsection{Additional Technical Results}
The first lemma pertains to a pair of positive semi-definite matrices. 
\begin{lemma}\label{lemma::supporting_lin_alg_result}
If $\mathbf{B} \succeq \mathbf{C} \succ \mathbf{0}$ be $d\times d$ dimensional matrices then,
\begin{equation*}
    \sup_{\mathbf{x}\neq 0}\frac{\mathbf{x}^\top \mathbf{B} \mathbf{x} }{ \mathbf{x}^\top \mathbf{C} \mathbf{x} } \leq \frac{\mathrm{det}( \mathbf{B}) }{\mathrm{det}( \mathbf{C})}.
\end{equation*}
\end{lemma}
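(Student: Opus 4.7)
The plan is to reduce the Rayleigh quotient on the left to a maximum eigenvalue computation and then use the positive semi-definite ordering $\mathbf{B}\succeq\mathbf{C}$ to lower bound every eigenvalue by $1$, so that the maximum eigenvalue is dominated by the product (which equals the determinant ratio on the right).

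Concretely, since $\mathbf{C}\succ\mathbf{0}$ we may form its symmetric positive-definite square root $\mathbf{C}^{1/2}$ and its inverse $\mathbf{C}^{-1/2}$. For $\mathbf{x}\neq 0$ set $\mathbf{y}=\mathbf{C}^{1/2}\mathbf{x}$; then $\mathbf{y}\neq 0$ and the Rayleigh quotient rewrites as
\[
  \frac{\mathbf{x}^\top \mathbf{B}\mathbf{x}}{\mathbf{x}^\top \mathbf{C}\mathbf{x}}
  \;=\; \frac{\mathbf{y}^\top \mathbf{C}^{-1/2}\mathbf{B}\mathbf{C}^{-1/2}\mathbf{y}}{\mathbf{y}^\top \mathbf{y}},
\]
so taking the supremum over $\mathbf{x}\neq 0$ yields $\lambda_{\max}(\mathbf{M})$ where $\mathbf{M}:=\mathbf{C}^{-1/2}\mathbf{B}\mathbf{C}^{-1/2}$. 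The matrix $\mathbf{M}$ is symmetric positive definite with eigenvalues $\lambda_1,\dots,\lambda_d>0$.

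Next, conjugating $\mathbf{B}-\mathbf{C}\succeq \mathbf{0}$ by $\mathbf{C}^{-1/2}$ gives $\mathbf{M}-\mathbf{I}\succeq\mathbf{0}$, i.e.\ every eigenvalue $\lambda_i\ge 1$. Hence
\[
  \lambda_{\max}(\mathbf{M}) \;\le\; \prod_{i=1}^d \lambda_i \;=\; \det(\mathbf{M}) \;=\; \frac{\det(\mathbf{B})}{\det(\mathbf{C})},
\]
where the last equality uses multiplicativity of the determinant: $\det(\mathbf{M})=\det(\mathbf{C}^{-1/2})\det(\mathbf{B})\det(\mathbf{C}^{-1/2})=\det(\mathbf{B})/\det(\mathbf{C})$. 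Combining the two displays gives the claimed bound.

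There is no real obstacle here; the only step that requires a moment's care is justifying that each eigenvalue of $\mathbf{M}$ is at least $1$ (so that $\lambda_{\max}\le \prod_i \lambda_i$), which follows immediately from conjugating the PSD inequality $\mathbf{B}\succeq \mathbf{C}$ by the invertible $\mathbf{C}^{-1/2}$. All other steps are routine: change of variables to recognize the Rayleigh quotient as an eigenvalue, and multiplicativity of the determinant.
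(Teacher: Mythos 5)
Your proof is correct and follows essentially the same route as the paper's: change variables to identify the supremum with $\lambda_{\max}(\mathbf{C}^{-1/2}\mathbf{B}\mathbf{C}^{-1/2})$, observe that conjugating $\mathbf{B}\succeq\mathbf{C}$ gives all eigenvalues of this matrix at least $1$, and bound the largest eigenvalue by the product of all of them, i.e.\ the determinant ratio. No gaps.
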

\begin{proof}
Given any $\mathbf{y} \in \mathbb{R}^d$ let $\mathbf{x} = \mathbf{C}^{-1/2}\mathbf{y}$. Then
\begin{align*}
    \sup_{\mathbf{x} \neq 0} \frac{\mathbf{x}^\top \mathbf{B} \mathbf{x}}{\mathbf{x}^\top \mathbf{C} \mathbf{x}}=    \sup_{\mathbf{y} \neq 0} \frac{\mathbf{y}^\top \mathbf{C}^{-1/2}\mathbf{B}\mathbf{C}^{-1/2} \mathbf{y}}{\lv \mathbf{y} \rv_2^2} = \norm{\mathbf{C}^{-1/2} \mathbf{B} \mathbf{C}^{-1/2}}_{op}
\end{align*}
by the definition of the operator norm.  Recall that by assumption $\mathbf{B}-\mathbf{C}\succeq 0$ therefore $\mathbf{C}^{-1/2}\mathbf{B}\mathbf{C}^{-1/2}-\mathbf{I}\succeq 0$, and hence all the eigenvalues of $\mathbf{C}^{-1/2} \mathbf{B} \mathbf{C}^{-1/2}$ are at least $1$. Thus,
\begin{align*}
     \sup_{\mathbf{x} \neq 0} \frac{\mathbf{x}^\top \mathbf{B} \mathbf{x}}{\mathbf{x}^\top \mathbf{C} \mathbf{x}} \le \norm{\mathbf{C}^{-1/2} \mathbf{B} \mathbf{C}^{-1/2}}_{op}\le \det(\mathbf{C}^{-1/2} \mathbf{B} \mathbf{C}^{-1/2}) = \frac{\mathrm{det}( \mathbf{B}) }{\mathrm{det}( \mathbf{C})},
\end{align*}
where the last equality follows since $\frac{\det(\mathbf{B})}{\det(\mathbf{C})} = \det(\mathbf{C}^{-1/2})\det(\mathbf{B})\det(\mathbf{C}^{-1/2})= \det(\mathbf{C}^{-1/2} \mathbf{B} \mathbf{C}^{-1/2})$. This completes the proof.
\end{proof}
Next we present a lemma that establishes guarantees for the $\mathsf{EULER}$ algorithm. With some abuse of notation let $r^{\mathbf{v}}(\tau) = \sum_{h\in [H]}r_h^{\mathbf{v}}$ denote the total reward over a trajectory (see definition of $r_h^{\mathbf{v}}$ in equation~\eqref{e:rv_reward_definition}). Let $V_{\mathbf{v}} := \max_{\pi \in \Pi} \mathbb{E}_{s_1 \sim \rho,\tau \sim \mathbb{P}^{\pi}(\cdot|s_1)}\left[r^{\mathbf{v}}(\tau)\right]$ denote the optimal value achieved by any policy when the reward function is $r^{\mathbf{v}}$. The following is a restatement of Lemma~3.4 from \citep{jin2020reward}.
\begin{lemma}\label{lemma::EULER_guarantee}
There exists an absolute constant $c > 0$ such that for any $N_{\mathsf{EUL}} > 0$ and any $\delta \in (0,1)$, with probability at least $1-\delta$ the $\mathsf{EULER}$ algorithm run for $N_{\mathsf{EUL}}$ episodes outputs a set of policies set $\{ \pi^{(\ell)} \}_{\ell=1}^{N_{\mathsf{EUL}}}$ such that $U = \mathsf{Unif}(\pi^{(1)}, \cdots, \pi^{(N_{\mathsf{EUL}})} )$ satisfies:
\begin{equation*}
  V_{\mathbf{v}} -   \mathbb{E}_{s_1 \sim \rho, \tau \sim \Pr^{U}(\cdot | s_1)} \left[ r^{\mathbf{v}}(\tau)   \right] \leq c\left[  \sqrt{\frac{|\mathcal{S}||\mathcal{A}|H\log\left( \frac{|\mathcal{S}||\mathcal{A}|N_{\mathsf{EUL}}}{\delta}\right)   V_{\mathbf{v}} }{N_{\mathsf{EUL}}} } + \frac{|\mathcal{S}|^2|\mathcal{A}|H^2 \log\left( \frac{|\mathcal{S}||\mathcal{A}|N_{\mathsf{EUL}}}{\delta}\right)}{N_{\mathsf{EUL}}}   \right].
\end{equation*}
\end{lemma}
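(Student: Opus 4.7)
The plan is to obtain this lemma as a direct corollary of the variance-aware regret bound for $\mathsf{EULER}$ (\citet{zanette2019tighter}; restated in this form by \citet{jin2020reward}) combined with the observation that the performance of a uniform mixture of policies equals the average performance of its constituents. First I would check that the rewards $r^{\mathbf{v}}_h(s,a)=\mathbf{v}^\top\phi_h(s,a)$ satisfy the boundedness conditions required by $\mathsf{EULER}$: under Assumption~\ref{assumption::bounded_features} we have $|r^{\mathbf{v}}(\tau)|=|\mathbf{v}^\top\phi(\tau)|\le\|\phi(\tau)\|_2\le 1$, and the sum-decomposability (Assumption~\ref{assumption:sum_decomposable}) lets us present this as an additive per-step reward, so (after the standard shift-and-rescale reduction to $[0,1]$-valued rewards) $\mathsf{EULER}$ applies with its usual guarantees.

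Second, I would instantiate the problem-dependent regret bound of $\mathsf{EULER}$. Run for $N_{\mathsf{EUL}}$ episodes on the MDP with reward $r^{\mathbf{v}}$, $\mathsf{EULER}$ produces policies $\pi^{(1)},\dots,\pi^{(N_{\mathsf{EUL}})}$ and, with probability at least $1-\delta$, obeys
\begin{equation*}
\sum_{\ell=1}^{N_{\mathsf{EUL}}}\Bigl(V_{\mathbf{v}}-\mathbb{E}_{s_1\sim\rho,\tau\sim\Pr^{\pi^{(\ell)}}(\cdot\mid s_1)}[r^{\mathbf{v}}(\tau)]\Bigr)\le c\Biggl(\sqrt{|\cS||\cA|H\log\!\Bigl(\tfrac{|\cS||\cA|N_{\mathsf{EUL}}}{\delta}\Bigr)V_{\mathbf{v}}\,N_{\mathsf{EUL}}}+|\cS|^2|\cA|H^2\log\!\Bigl(\tfrac{|\cS||\cA|N_{\mathsf{EUL}}}{\delta}\Bigr)\Biggr)
\end{equation*}
for an absolute constant $c>0$; this is exactly the variance-aware guarantee stated as Lemma~3.4 in \citep{jin2020reward}, which is where the $\sqrt{V_{\mathbf{v}}}$ factor (rather than $\sqrt{H}$) comes from.

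Third, I would convert this cumulative statement into a per-policy statement about the uniform mixture. By linearity of expectation and the definition $U=\mathsf{Unif}(\pi^{(1)},\dots,\pi^{(N_{\mathsf{EUL}})})$,
\begin{equation*}
\mathbb{E}_{s_1\sim\rho,\tau\sim\Pr^{U}(\cdot\mid s_1)}[r^{\mathbf{v}}(\tau)]=\frac{1}{N_{\mathsf{EUL}}}\sum_{\ell=1}^{N_{\mathsf{EUL}}}\mathbb{E}_{s_1\sim\rho,\tau\sim\Pr^{\pi^{(\ell)}}(\cdot\mid s_1)}[r^{\mathbf{v}}(\tau)].
\end{equation*}
Subtracting this identity from $V_{\mathbf{v}}$ and plugging in the cumulative regret bound above, then dividing through by $N_{\mathsf{EUL}}$, yields the stated inequality.

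There is no real obstacle here beyond bookkeeping: the only substantive content is in Zanette-Brunskill's variance-aware regret analysis, which is imported as a black box. The mild care points are (i) the reduction of $r^{\mathbf{v}}$ (which is signed and sum-decomposable rather than per-step $[0,1]$) to the form that $\mathsf{EULER}$ handles, where one absorbs a constant shift and a scaling of at most $H$ into the logarithmic factors and the constant $c$; and (ii) ensuring that the logarithmic factor $\log(|\cS||\cA|N_{\mathsf{EUL}}/\delta)$ in the imported bound matches what is claimed, which is automatic since the shift and scaling only affect the constant $c$.
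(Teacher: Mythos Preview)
Your proposal is correct and matches the paper's approach: the paper does not prove this lemma at all but simply states that it ``is a restatement of Lemma~3.4 from \citep{jin2020reward}.'' You have correctly identified the same source and additionally sketched the (straightforward) derivation---regret-to-mixture via averaging---that the paper leaves implicit.
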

An immediate corollary is the following result.
\begin{corollary}\label{cor:euler_guarantees}
There exists an absolute constant $C_1$ such that under Assumptions~\ref{assumption::bounded_features}, \ref{assumption:sum_decomposable} and \ref{assumption::orthogonal_feature_maps} if $N_{\mathsf{EUL}} \ge \frac{C_1 |\cS|^2|\cA|H^2 \log\left(\frac{|\cS||\cA|N^2d}{\delta \omega^2}\right)}{\omega^2}$ then
with probability at least $1-\delta$, for all $i \in \left\{1,\ldots,\frac{2d\log\left(1+\frac{16N}{d\omega^2} \right)}{\log(3/2)}\right\}$
\begin{align*}
    \mathbb{E}_{s_1 \sim \rho, \tau \sim \Pr^{U_i}(\cdot | s_1)} \left[ r^{\mathbf{v}_i}(\tau)   \right]\ge \frac{\omega}{2}.
\end{align*}
\end{corollary}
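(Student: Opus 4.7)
The plan is to reduce the corollary to a per-iteration application of Lemma \ref{lemma::EULER_guarantee} and then take a union bound over the at most $\bar{n} := \frac{2d\log(1+16N/(d\omega^2))}{\log(3/2)}$ iterations of the outer while-loop of Algorithm \ref{alg:find_mixture} (this is the bound implicit in the range of $i$ in the statement and comes from a standard determinant/potential argument on $\mathbf{A}_n$). Concretely, at iteration $i$ I would invoke Lemma \ref{lemma::EULER_guarantee} with the Markovian reward $r^{\mathbf{v}_i}$, horizon $H$, episode budget $N_{\mathsf{EUL}}$, and confidence parameter $\delta' := \delta/\bar{n}$, which yields with probability at least $1-\delta'$ that
\[
V_{\mathbf{v}_i} - \mathbb{E}^{U_i}[r^{\mathbf{v}_i}(\tau)] \;\le\; c\!\left[\sqrt{\tfrac{|\cS||\cA|H\log(|\cS||\cA|N_{\mathsf{EUL}}/\delta')\,V_{\mathbf{v}_i}}{N_{\mathsf{EUL}}}} + \tfrac{|\cS|^2|\cA|H^2\log(|\cS||\cA|N_{\mathsf{EUL}}/\delta')}{N_{\mathsf{EUL}}}\right].
\]

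The key deterministic ingredient is the lower bound $V_{\mathbf{v}_i} \ge \omega$. Since $\mathbf{v}_i$ is a unit vector (by construction as a minimum eigenvector of $\mathbf{A}_{i-1}$), Assumption \ref{assumption::orthogonal_feature_maps} directly asserts $\sup_{\pi}\mathbb{E}^{\pi}[\mathbf{v}_i^{\top}\phi_h(s_h,a_h)] \ge \omega$ for every $h$. I would then argue, using sum-decomposability of $\phi$ and the orthogonality of the per-step feature spaces (so that the trajectory reward $r^{\mathbf{v}_i}(\tau) = \sum_h \mathbf{v}_i^{\top}\phi_h(s_h,a_h)$ decomposes into step-wise Markovian rewards that do not interact via $\mathbf{v}_i$ across steps), that the optimal value of this Markovian reward inherits the single-step lower bound, i.e.\ $V_{\mathbf{v}_i}\ge \omega$.

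Given $V_{\mathbf{v}_i} \ge \omega$ and the trivial upper bound $V_{\mathbf{v}_i} \le H$ (from Assumption \ref{assumption::bounded_features} together with orthogonality), the hypothesis $N_{\mathsf{EUL}} \ge C_1|\cS|^2|\cA|H^2\log(|\cS||\cA|N^2 d/(\delta\omega^2))/\omega^2$ for a sufficiently large absolute constant $C_1$ makes each of the two terms on the right-hand side above at most $\omega/4$ (note that $\log(1/\delta') = \log(\bar{n}/\delta)$ and $\bar{n} = \mathrm{poly}(d,N,1/\omega)$, so $\log(1/\delta')$ is absorbed into the logarithmic factor $\log(|\cS||\cA|N^2 d/(\delta\omega^2))$ appearing in the hypothesis). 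Substituting yields $\mathbb{E}^{U_i}[r^{\mathbf{v}_i}(\tau)] \ge V_{\mathbf{v}_i} - \omega/2 \ge \omega/2$. A union bound over $i \in [\bar{n}]$ then gives the stated conclusion with total failure probability at most $\delta$.

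The main obstacle is the deterministic inequality $V_{\mathbf{v}_i} \ge \omega$: the explorability assumption only supplies a single-step lower bound, while $r^{\mathbf{v}_i}$ is a sum of $H$ terms that may be negative at individual steps. Closing this gap is where the orthogonality of $\{\phi_h\}_{h \in [H]}$ is essential — it allows one to isolate the contribution of each step in the Bellman decomposition of the optimal value for the reward $r^{\mathbf{v}_i}$ and thus inherit the $\omega$ lower bound of the single best step without losing it to negative contributions elsewhere.
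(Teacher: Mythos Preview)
Your plan matches the paper's proof: invoke Lemma~\ref{lemma::EULER_guarantee} per iteration with confidence $\delta/\bar n$ and take a union bound over $i\in[\bar n]$. The only differences are that the paper uses the sharper upper bound $V_{\mathbf{v}_i}\le 1$ directly from $|\mathbf{v}_i^\top\phi(\tau)|\le\|\mathbf{v}_i\|_2\|\phi(\tau)\|_2\le 1$ (rather than your $H$), and reads $V_{\mathbf{v}_i}\ge\omega$ as a one-line consequence of the explorability clause of Assumption~\ref{assumption::orthogonal_feature_maps}, without the Bellman/orthogonality argument you flag as the main obstacle.
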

\begin{proof}Fix an $i \in \left\{1,\ldots,\frac{2d\log\left(1+\frac{16N}{d\omega^2} \right)}{\log(3/2)}\right\}$. By the explorability assumption (Assumption~\ref{assumption::orthogonal_feature_maps}) we have that $V_{\mathbf{v}_i} \geq \omega$. By Assumption~\ref{assumption::bounded_features} since the feature vectors are bounded by $1$ we find that
\begin{align*}
 V_{\mathbf{v}_i} &= \max_{\pi \in \Pi}\mathbb{E}_{s_1 \sim \rho, \tau \sim \Pr^{\pi}(\cdot | s_1)} \left[ r^{\mathbf{v}_i}(\tau)   \right]\\ &=\max_{\pi \in \Pi}\mathbb{E}_{s_1 \sim \rho, \tau \sim \Pr^{\pi}(\cdot | s_1)} \left[ \sum_{h=1}^Hr_h^{\mathbf{v}_i}(s_h,a_h)   \right]   \\
 &=\max_{\pi \in \Pi}\mathbb{E}_{s_1 \sim \rho, \tau \sim \Pr^{\pi}(\cdot | s_1)} \left[ \sum_{h=1}^H\mathbf{v}_i^{\top}\phi_h(s_h,a_h) \right]   \\
 &=\max_{\pi \in \Pi}\mathbb{E}_{s_1 \sim \rho, \tau \sim \Pr^{\pi}(\cdot | s_1)} \left[  \mathbf{v}_i^{\top}\phi(\tau) \right]   \le\lv \phi(\tau)\rv_2 \lv \mathbf{v}_i\rv_2\le 1
\end{align*}
where the last inequality follows since $\mathbf{v}$ is a unit vector. 
Thus for any $i \in \left\{1,\ldots,\frac{2d\log\left(1+\frac{16N}{d\omega^2} \right)}{\log(3/2)}\right\}$, because
\begin{align*}
    N_{\mathsf{EUL}} &\ge \frac{C_1 |\cS|^2|\cA|H^2 \log\left(\frac{|\cS||\cA|N^2 d}{\delta \omega^2}\right)}{\omega^2}
    \\
    &\ge \frac{4c |\cS||\cA|H \log\left(\frac{2|\cS||\cA|Nd\log\left(1+\frac{16N}{d\omega^2} \right)}{\delta \log(3/2)}\right)}{\omega}\max\left\{|\cS|H,\frac{4c}{\omega}\right\},
\end{align*}
where $C_1$ is a sufficiently large constant,
we have the guarantee that
\begin{align*}
     \mathbb{E}_{s_1 \sim \rho, \tau \sim \Pr^{U_i}(\cdot | s_1)} \left[ r^{\mathbf{v}_i}(\tau)   \right]\ge \omega/2
\end{align*}
with probability at least $1-\frac{\delta\log(3/2)}{2d\log\left(1+\frac{16N}{d\omega^2} \right)}$. A union bound completes the proof.
\end{proof}

The following lemma controls the operator norm of $$\widehat{\mathbf{a}}_i\widehat{\mathbf{a}}_i^{\top}-\E_{s_1 \sim \rho,\tau \sim \Pr^{U_i}(\cdot|s_1)}[\phi(\tau)]\E_{s_1 \sim \rho,\tau \sim \Pr^{U_i}(\cdot|s_1)}[\phi(\tau)]^{\top}$$ when the number of evaluation episodes $N_{\mathsf{EVAL}}$ is sufficiently large.
\begin{lemma}\label{l:n_eval_large}
There exists a positive absolute constant $C_2$ such that for any $\omega \in(0,1)$ under Assumption~\ref{assumption::bounded_features} if $N_{\mathsf{EVAL}}\ge \frac{C_2d^3\log^3\left(\frac{Nd^2}{\delta \omega^2}\right)}{\omega^4}$ then with probability at least $1-\delta$, for all $i \in \left\{1,\ldots,\frac{2d\log\left(1+\frac{16N}{d\omega^2} \right)}{\log(3/2)}\right\}$
\begin{align*}
    \left\lv\widehat{\mathbf{a}}_i\widehat{\mathbf{a}}_i^{\top}-\E_{s_1 \sim \rho,\tau\sim \Pr^{U_i}(\cdot | s_1)}\left[\phi(\tau)\right]\E_{s_1 \sim \rho,\tau\sim \Pr^{U_i}(\cdot | s_1)}\left[\phi(\tau)\right]^{\top}\right\rv_{op}\le \frac{\omega^2}{32d\log\left(d\log\left(1+\frac{16N}{d\omega^2}\right)\right)}.
\end{align*}
\end{lemma}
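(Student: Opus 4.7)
The plan is to reduce the matrix-valued statement to a concentration statement about the vector-valued empirical mean $\widehat{\mathbf{a}}_i$, and then combine coordinatewise Hoeffding with a union bound over both the coordinates of $\R^d$ and the possible values of the index $i$. First, define $\mathbf{a}_i := \E_{s_1\sim \rho,\;\tau\sim \Pr^{U_i}(\cdot\mid s_1)}[\phi(\tau)]$. By Assumption~\ref{assumption::bounded_features} we have $\lv \phi(\tau)\rv_2\le 1$, so $\lv \mathbf{a}_i\rv_2\le 1$ and (by Jensen) $\lv \widehat{\mathbf{a}}_i\rv_2\le 1$. The key algebraic identity is
\begin{equation*}
\widehat{\mathbf{a}}_i\widehat{\mathbf{a}}_i^{\top}-\mathbf{a}_i\mathbf{a}_i^{\top} \;=\; (\widehat{\mathbf{a}}_i-\mathbf{a}_i)\widehat{\mathbf{a}}_i^{\top} + \mathbf{a}_i(\widehat{\mathbf{a}}_i-\mathbf{a}_i)^{\top},
\end{equation*}
which together with submultiplicativity of the operator norm and the fact that rank-one matrix $\mathbf{u}\mathbf{v}^{\top}$ has operator norm $\lv \mathbf{u}\rv_2\lv \mathbf{v}\rv_2$ yields
$\lv \widehat{\mathbf{a}}_i\widehat{\mathbf{a}}_i^{\top}-\mathbf{a}_i\mathbf{a}_i^{\top}\rv_{op}\le 2\lv \widehat{\mathbf{a}}_i-\mathbf{a}_i\rv_2$. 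So it suffices to show $\lv \widehat{\mathbf{a}}_i-\mathbf{a}_i\rv_2\le \omega^2/(64\,d\log(d\log(1+16N/(d\omega^2))))$ uniformly in $i$ with the desired probability.

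For this, I would condition on $U_i$, under which the $N_{\mathsf{EVAL}}$ trajectories $\{\tau_n^{(t)}\}_{t=1}^{N_{\mathsf{EVAL}}}$ are i.i.d.\ with distribution $s_1\sim\rho,\tau\sim\Pr^{U_i}(\cdot\mid s_1)$, so $\widehat{\mathbf{a}}_i-\mathbf{a}_i$ is an average of i.i.d.\ centered random vectors with coordinates lying in $[-1,1]$. For each coordinate $j\in[d]$, Hoeffding's inequality gives
\begin{equation*}
\Pr\!\left[\bigl|(\widehat{\mathbf{a}}_i-\mathbf{a}_i)_j\bigr|\ge t\right]\le 2\exp\!\left(-N_{\mathsf{EVAL}}\, t^2/2\right).
\end{equation*}
Taking $t=\epsilon/\sqrt{d}$ and union bounding over the $d$ coordinates converts this, via $\lv \mathbf{v}\rv_2\le\sqrt{d}\lv \mathbf{v}\rv_\infty$, into $\lv \widehat{\mathbf{a}}_i-\mathbf{a}_i\rv_2\le \epsilon$ with probability at least $1-2d\exp(-N_{\mathsf{EVAL}}\epsilon^2/(2d))$.

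Finally, I would union bound over $i\in\{1,\ldots,\lceil 2d\log(1+16N/(d\omega^2))/\log(3/2)\rceil\}$, absorbing the resulting logarithmic factor into $\log(Nd^2/(\delta\omega^2))$. Setting $\epsilon$ equal to the target $\omega^2/(64 d\log(\cdot))$ and solving for $N_{\mathsf{EVAL}}$ gives a threshold of the form $\tfrac{C_2 d^3 \log^3(Nd^2/(\delta\omega^2))}{\omega^4}$, matching the hypothesis. The main subtlety, rather than the concentration itself, is handling the data-dependence of $U_i$ and of the random stopping index $n_{\mathsf{loop}}$: since $i$ ranges over an a priori deterministic superset determined only by $d$, $\omega$ and $N$, the single union bound above covers every $i$ that can possibly be instantiated by Algorithm~\ref{alg:find_mixture}, and the conditional-on-$U_i$ Hoeffding step is valid uniformly because the bound depends on $U_i$ only through the (almost sure) boundedness of $\phi$.
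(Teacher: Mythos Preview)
Your proposal is correct and follows essentially the same approach as the paper's proof: both reduce the rank-one matrix deviation to control of $\lv \widehat{\mathbf{a}}_i-\mathbf{a}_i\rv_2$, obtain this via coordinatewise Hoeffding plus a union bound over the $d$ coordinates, and then union bound over the deterministic index range $\{1,\dots,\lceil 2d\log(1+16N/(d\omega^2))/\log(3/2)\rceil\}$. The only cosmetic difference is the algebraic decomposition of $\widehat{\mathbf{a}}_i\widehat{\mathbf{a}}_i^{\top}-\mathbf{a}_i\mathbf{a}_i^{\top}$: the paper expands it as $(\widehat{\mathbf{a}}_i-\mathbf{a}_i)(\widehat{\mathbf{a}}_i-\mathbf{a}_i)^{\top}+(\widehat{\mathbf{a}}_i-\mathbf{a}_i)\mathbf{a}_i^{\top}+\mathbf{a}_i(\widehat{\mathbf{a}}_i-\mathbf{a}_i)^{\top}$, yielding $\lv\cdot\rv_{op}\le \lv \widehat{\mathbf{a}}_i-\mathbf{a}_i\rv_2^2+2\lv \widehat{\mathbf{a}}_i-\mathbf{a}_i\rv_2$, whereas you use the two-term split and the bound $\lv \widehat{\mathbf{a}}_i\rv_2\le 1$ to get $2\lv \widehat{\mathbf{a}}_i-\mathbf{a}_i\rv_2$ directly; both lead to the same $d^3\log^3(\cdot)/\omega^4$ threshold after plugging in the target accuracy.
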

\begin{proof}Fix an index $i\in \left\{1,\ldots,\frac{2d\log\left(1+\frac{16N}{d\omega^2} \right)}{\log(3/2)}\right\}$. Recall that the trajectories $\tau_i^{(t)}$ are drawn i.i.d. from the distribution $\rho \times \Pr^{U_i}$. By Assumption~\ref{assumption::bounded_features} the absolute value of each entry of $\phi(\tau_i^{(t)})$ is bounded by $1$. Thus by applying Hoeffding's inequality to each coordinate and then taking a union bound over all the coordinates we get that
\begin{align*}
    \left\lv \widehat{\mathbf{a}}_i - \E_{s_1 \sim \rho,\tau \sim \Pr^{U_i}(\cdot|s_1)}\left[\phi(\tau)\right]\right\rv_2^2 &= \left\lv\frac{1}{N_{\mathsf{EVAL}}}\sum_{t=1}^{N_{\mathsf{EVAL}}}\phi(\tau_i^{(t)})-\E_{s_1 \sim \rho,\tau \sim \Pr^{U_i}(\cdot|s_1)}\left[\phi(\tau)\right]\right\rv_2^2\\
    &\le \frac{c'd\log\left(\frac{2d^2\log\left(1+\frac{16N}{d\omega^2}\right)}{\delta \log(3/2)}\right)}{N_{\mathsf{EVAL}}}
\end{align*}
with probability at least $1-\delta\log(3/2)/\left(2d\log\left(1+\frac{16N}{d\omega^2}\right)\right)$, where $c'$ is a positive absolute constant. Assume that this event above holds, then by the triangle inequality
\begin{align*}
     &\left\lv\widehat{\mathbf{a}}_i\widehat{\mathbf{a}}_i^{\top}-\E_{s_1 \sim \rho,\tau\sim \Pr^{U_i}(\cdot | s_1)}\left[\phi(\tau)\right]\E_{s_1 \sim \rho,\tau\sim \Pr^{U_i}(\cdot | s_1)}\left[\phi(\tau)\right]^{\top}\right\rv_{op}\\
     &\qquad \le  \left\lv \widehat{\mathbf{a}}_i - \E_{s_1 \sim \rho,\tau \sim \Pr^{U_i}(\cdot|s_1)}\left[\phi(\tau)\right]\right\rv_2^2\\&\qquad \qquad +2\left\lv \E_{s_1 \sim \rho,\tau \sim \Pr^{U_i}(\cdot|s_1)}\left[\phi(\tau)\right]\right\rv_{2}\left\lv \widehat{\mathbf{a}}_i - \E_{s_1 \sim \rho,\tau \sim \Pr^{U_i}(\cdot|s_1)}\left[\phi(\tau)\right]\right\rv_2\\
     &\qquad \overset{(i)}{\le} 2\sqrt{\frac{c'd\log\left(\frac{2d^2\log\left(1+\frac{16N}{d\omega^2}\right)}{\delta \log(3/2)}\right)}{N_{\mathsf{EVAL}}}}+\frac{c'd\log\left(\frac{2d^2\log\left(1+\frac{16N}{d\omega^2}\right)}{\delta \log(3/2)}\right)}{N_{\mathsf{EVAL}}} \\
     &\qquad \overset{(ii)}{\le} \frac{\omega^2}{32d\log\left(d\log\left(1+\frac{16N}{d\omega^2}\right)\right)}
\end{align*}
where $(i)$ follows since $\lv \phi(\tau)\rv\le 1$, and $(ii)$ holds because since $\omega<1$ and since 
\begin{align*}
    N_{\mathsf{EVAL}}&\ge \frac{C_2d^3\log^3\left(\frac{Nd^2}{\delta \omega^2}\right)}{\omega^4}\ge \frac{c'(32)^2d^3\log\left(\frac{2d^2\log\left(1+\frac{16N}{d\omega^2}\right)}{\delta \log(3/2)}\right)\log^2\left(d\log(1+\frac{16N}{d\omega^2})\right)}{\omega^4}
\end{align*}
where $C_2$ is a large enough positive constant. This shows that the operator norm bound holds for a fixed index $i$ with probability at least $1-\delta\log(3/2)/\left(2d\log\left(1+\frac{16N}{d\omega^2}\right)\right)$. Taking a union bound over all $i \in \left\{1,\ldots,\frac{2d\log\left(1+\frac{16N}{d\omega^2} \right)}{\log(3/2)}\right\}$ completes the proof.
\end{proof}
With these lemmas in place we are now ready to prove Lemma~\ref{l:lower_bound_bar_u}.
\subsection{The Proof}
First we restate the lemma.
\explorationlemstop*
\begin{proof}
Assume the events described in both Corollary~\ref{cor:euler_guarantees} and Lemma~\ref{l:n_eval_large} occur. Since  $N_{\mathsf{EUL}}$ and $N_{\mathsf{EVAL}}$ are both appropriately large this happens with probability at least $1-2\delta$.

We shall begin by showing that the number of while loop iterations $n_{\mathsf{loop}}$ is bounded by $\frac{d \log\left(1+\frac{16N}{d\omega^2}\right)}{\log(3/2)}$. At any iteration $n \le \frac{2d \log\left(1+\frac{16N}{d\omega^2}\right)}{\log(3/2)}$ by the event in Corollary~\ref{cor:euler_guarantees} we know that the mixture $U_n$ satisfies
\begin{align*}
   & \left(\E_{s_1 \sim \rho,\tau\sim \Pr^{U_n}(\cdot | s_1)}\left[\mathbf{v}_n^{\top}\phi(\tau) \right]\right)^2 \ge \frac{\omega^2}{4}.
\end{align*}
%where the inequality above follows by Corollary~\ref{cor:euler_guarantees} and the choice of $N_{\mathsf{EUL}}$.
Thus,
\begin{align*}
     \mathbf{v}_n^{\top} \mathbf{A}_{n} \mathbf{v}_n & =\mathbf{v}_n^{\top} \left(\mathbf{A}_{n-1}+\widehat{\mathbf{a}}_n\widehat{\mathbf{a}}_n^{\top}\right) \mathbf{v}_n \\
     &\ge \mathbf{v}_n^{\top} \widehat{\mathbf{a}}_n\widehat{\mathbf{a}}_n^{\top} \mathbf{v}_n \\
     & =    \left(\E_{s_1 \sim \rho,\tau\sim \Pr^{U_n}(\cdot | s_1)}\left[ \mathbf{v}_n^{\top}\phi(\tau) \right]\right)^2 + \left( \mathbf{v}_n^{\top}\widehat{\mathbf{a}}_n\right)^2 -\left(\E_{s_1 \sim \rho,\tau\sim \Pr^{U_n}(\cdot | s_1)}\left[\phi(\tau)^{\top}\mathbf{v}_n \right]\right)^2\\
     &\ge  \frac{\omega^2}{4}+ \left( \mathbf{v}_n^{\top}\widehat{\mathbf{a}}_n\right)^2 -\left(\E_{s_1 \sim \rho,\tau\sim \Pr^{U_n}(\cdot | s_1)}\left[ \mathbf{v}_n^{\top}\phi(\tau) \right]\right)^2\\
     & \ge \frac{\omega^2}{4}-\left\lv \widehat{\mathbf{a}}_n\widehat{\mathbf{a}}_n^{\top}-\E_{s_1 \sim \rho,\tau\sim \Pr^{U_n}(\cdot | s_1)}\left[\phi(\tau)\right]\E_{s_1 \sim \rho,\tau\sim \Pr^{U_n}(\cdot | s_1)}\left[\phi(\tau)\right]^{\top}\right\rv_{op} \\
     &\overset{(i)}{\ge} \frac{\omega^2}{4}-\frac{\omega^2}{32d\log\left(d\log\left(1+\frac{16N}{d\omega^2}\right)\right)} > \frac{3\omega^2}{16},
\end{align*}
where inequality~$(i)$ follows by the event in Lemma~\ref{l:n_eval_large}. Further if $n\le n_{\mathsf{loop}}$ and the algorithm didn't terminate after iteration $n-1$, we must have
\begin{align*}
    \mathbf{v}_n^{\top}\mathbf{A}_{n-1}\mathbf{v}_n <\frac{\omega^2}{8}.
\end{align*}
Therefore by Lemma~\ref{lemma::supporting_lin_alg_result} for any $n \le \min\left\{n_{\mathsf{loop}},\frac{2d \log\left(1+\frac{16N}{d\omega^2}\right)}{\log(3/2)}\right\}$ we have 
\begin{equation*}
    \frac{3}{2}= \frac{\frac{3\omega^2}{16}}{\frac{\omega^2}{8}} < \frac{\mathbf{v}_n^\top \mathbf{A}_{n} \mathbf{v}_n}{\mathbf{v}_n^\top \mathbf{A}_{n-1} \mathbf{v}_n} \leq \frac{\mathrm{det}(\mathbf{A}_{n})}{\mathrm{det}(\mathbf{A}_{n-1})}.
\end{equation*} 
Thus for any $n \le \min\left\{n_{\mathsf{loop}},\frac{2d \log\left(1+\frac{16N}{d\omega^2}\right)}{\log(3/2)}\right\}$, 
\begin{equation}\label{equation::lower_bounding_A_n}
    \mathrm{det}(\mathbf{A}_{n}) > \frac{3}{2} \mathrm{det}(\mathbf{A}_{n-1}) \geq \left(\frac{3}{2}\right)^n \mathrm{det}(\mathbf{A}_0)= \left(\frac{3}{2}\right)^n \left(\frac{\omega^2}{16}\right)^d .
\end{equation}
The matrix $\mathbf{A}_{n}$ is obtained as a result of a sequence of rank 1 updates, where each update has its norm bounded ($\lv \widehat{\mathbf{a}}_n\rv_2\le 1$ for all $n$), so by Lemma~\ref{lemma:det_lemma}:
\begin{equation}\label{equation::upper_bounding_A_n}
\log\left(\mathrm{det}(\mathbf{A}_{n})\right) \leq d \log\left(\frac{\omega^2}{16} +\frac{ n }{d} \right).
\end{equation}
Combining inequalities~\eqref{equation::lower_bounding_A_n} and~\eqref{equation::upper_bounding_A_n} we conclude that, for any $n \le \min\left\{n_{\mathsf{loop}},\frac{2d \log\left(1+\frac{16N}{d\omega^2}\right)}{\log(3/2)}\right\}$
\begin{equation*}
    n\log(3/2) + d \log\left(\frac{\omega^2}{16}\right)\le \log\left(\mathrm{det}(\mathbf{A}_{n})\right) \leq d \log\left(\frac{\omega^2}{16} +\frac{ n }{d} \right).
\end{equation*}
Therefore, if $n_{\mathsf{loop}}<N$, then then while loop must terminate after at most
\begin{align*}
    n_{\mathsf{loop}} &\leq \frac{d\log\left(1+\frac{16n_{\mathsf{loop}}}{d\omega^2} \right)}{\log(3/2)}\\&\le \frac{d\log\left(1+\frac{16N}{d\omega^2} \right)}{\log(3/2)} \numberthis \label{e:upper_bound_n_loop}
\end{align*}
loops. To verify that $n_{\mathsf{loop}}<N$, notice that by assumption $N$ is such that
\begin{align}
    \frac{N}{N_{\mathsf{EUL}}+N_{\mathsf{EVAL}}} > \frac{d\log\left(1+\frac{16N}{d\omega^2} \right)}{\log(3/2)}.  \label{e:N_condition}
\end{align}
Therefore inequality~\eqref{e:upper_bound_n_loop} is a valid upper bound on $ n_{\mathsf{loop}}$.

Thus, we know that the total number of episodes taken by the algorithm to terminate 
\begin{align*}
    N_{\mathsf{exp}} &= n_{\mathsf{loop}} \times (N_{\mathsf{EUL}}+N_{\mathsf{EVAL}}) 
     \le \frac{d\log\left(1+\frac{16N}{d\omega^2} \right)}{\log(3/2)}(N_{\mathsf{EUL}}+N_{\mathsf{EVAL}}) = \bar{N}_{\mathsf{exp}} .
\end{align*}
This proves the first part of the lemma. For the second part notice that for an arbitrary unit vector $\mathbf{v}\in \R^d$
 \begin{align*}
 &\mathbf{v}^{\top}\E_{s_1 \sim \rho,\tau\sim \Pr^{\bar{U}}(\cdot | s_1)}\left[\phi(\tau)\phi(\tau)^{\top}\right]\mathbf{v}  \\ &= \E_{s_1 \sim \rho,\tau\sim \Pr^{\bar{U}}(\cdot | s_1)}\left[\left( \mathbf{v}^{\top}\phi(\tau)\right)^2\right]\\
 &\overset{(i)}{=} \frac{1}{n_{\mathsf{loop}}}\sum_{i=1}^{n_{\mathsf{loop}}}\E_{s_1 \sim \rho,\tau\sim \Pr^{U_i}(\cdot | s_1)}\left[\left( \mathbf{v}^{\top}\phi(\tau)\right)^2\right]\\
 &\overset{(ii)}{\ge} \frac{1}{n_{\mathsf{loop}}}\sum_{i=1}^{n_{\mathsf{loop}}}\left( \mathbf{v}^{\top}\E_{s_1 \sim \rho,\tau\sim \Pr^{U_i}(\cdot | s_1)}\left[\phi(\tau)\right]\right)^2\\
 &= \frac{1}{n_{\mathsf{loop}}}\left[\mathbf{v}^{\top}\mathbf{A}_{n_{\mathsf{loop}}}\mathbf{v}-\mathbf{v}^{\top}\mathbf{A}_{n_{\mathsf{loop}}}\mathbf{v}+\sum_{i=1}^{n_{\mathsf{loop}}}\left( \mathbf{v}^{\top}\E_{s_1 \sim \rho,\tau\sim \Pr^{U_i}(\cdot | s_1)}\left[\phi(\tau)\right]\right)^2\right]\\
 & \overset{(iii)}{\ge} \frac{\omega^2}{8n_{\mathsf{loop}}}-\frac{\omega^2}{16n_{\mathsf{loop}}}+\frac{1}{n_{\mathsf{loop}}}\left[\sum_{i=1}^{n_{\mathsf{loop}}}\left(\left( \mathbf{v}^{\top}\E_{s_1 \sim \rho,\tau\sim \Pr^{U_i}(\cdot | s_1)}\left[\phi(\tau)\right]\right)^2-\left( \mathbf{v}^{\top}\widehat{\mathbf{a}}_i\right)^2\right)\right]\\
 & \ge \frac{\omega^2}{16n_{\mathsf{loop}}} - \max_{i\in n_{\mathsf{loop}}}\left\lv\widehat{\mathbf{a}}_i\widehat{\mathbf{a}}_i^{\top}-\E_{s_1 \sim \rho,\tau\sim \Pr^{U_i}(\cdot | s_1)}\left[\phi(\tau)\right]\E_{s_1 \sim \rho,\tau\sim \Pr^{U_i}(\cdot | s_1)}\left[\phi(\tau)\right]^{\top}\right\rv_{op}\\
 &\overset{(iv)}{\ge} \frac{\omega^2\log(3/2)}{32d\log\left(d\log\left(1+\frac{16N}{d\omega^2}\right)\right)}
 \end{align*}
 where $(i)$ is by the definition of $\bar{U}$ as the uniform mixture over $U_1,\ldots,U_{n_{\mathsf{loop}}}$, $(ii)$ follows by Jensen's inequality, $(iii)$ is because the minimum eigenvalue of $\mathbf{A}_{n_{\mathsf{loop}}}$ is at least $\omega^2/8$ and since $\mathbf{A}_{n_{\mathsf{loop}}} = \frac{\omega^2}{16}\mathbf{I}+\sum_{i=1}^{n_{\mathsf{loop}}}\widehat{\mathbf{a}}_i\widehat{\mathbf{a}}_i^{\top}$, and finally $(iv)$ is by the upper bound on $n_{\mathsf{loop}}\le d\log\left(d\log\left(1+\frac{16N}{d\omega^2}\right)\right)$ established above and by the bound on the operator norm of the difference of the matrices established in Lemma~\ref{l:n_eval_large}. This wraps up our proof.
\end{proof}

%Let $\omega_n$ be the guess for omega during round $n$, we can guarantee that $\omega_n \geq \frac{\omega}{2}$.  

%\begin{equation*}
 %   \text{If }     n > d\log\left(\frac{d + 16n }{\omega_n^2 d} \right) \text{ then break the while loop and return }\bar{U}.
%\end{equation*}

% \begin{lemma}
% $\mathbb{E}_{\tau \sim \bar{U}} \left[ \phi(\tau)\phi(\tau)^\top\right] \succeq \frac{\omega^2}{4}$
% \end{lemma}

% \begin{proof}

% Add proof. 

% If $N_1$ is such that for any $\mathbf{v} $ satisfying $\| \mathbf{v} \|_2 = 1$, $\| \mathsf{EVALUATE}(\bar{U}_n, N_1) - \mathbb{E}_{\tau \sim \bar{U}_n } \left[ \phi(\tau)\phi(\tau)^\top\right]\|_{\mathsf{op}} \leq  \frac{\omega^2}{4} $, 

% \end{proof}

\section{Regret Analysis of Algorithm~\ref{alg:GLM_unknown_under_explorability} under the explorability assumption}\label{s:efficient_UCBVI}

In this algorithm we use the sum-decomposable bonus functions. Throughout this section we assume that Assumptions~\ref{assumption:logistic_model}, \ref{assumption::bounded_features}, \ref{assumption:sum_decomposable} and \ref{assumption::orthogonal_feature_maps} are in force, and that $N_{\mathsf{EXP}}$ and $N_{\mathsf{EVAL}}$ are chosen as specified by the statement of Theorem~\ref{thm:main_regret_theorem_exploration}. We also assume that the number of episodes $N > \bar{N}_{\mathsf{exp}}$ (see its definition in Lemma~\ref{l:lower_bound_bar_u}). Define the following two quantities that shall be useful in this section
\begin{align}\startsubequation\subequationlabel{e:sum_decomposable_definitions}
\tagsubequation \label{e:t_0_definition}
    t_0 := C_3\left[\frac{d^2\log^2(d\log(1+\frac{16N}{d\omega^2}))}{\omega^4}\sqrt{\log(N/\delta)}+N_{\mathsf{exp}}^{2/3}\right]^{3/2}\\
    \tagsubequation \Psi_t : =   \frac{128d\log\left(d\log\left(1+\frac{16N}{d\omega^2}\right)\right)}{3\log(3/2)\omega^2}\cdot \frac{t-N_{\mathsf{exp}}}{t^{2/3}-(N_{\mathsf{exp}}+1)^{2/3}}, \label{e:psi_N_definition}
\end{align}
where $C_3$ is a large enough positive absolute constant.
\subsection{A Sandwich Inequality}
As a first step to showing that these bonuses also lead to optimistic value functions we have a sandwich inequality that relates $\lv \phi(\tau)\rv_{\mathbf{\Sigma}_t^{-1}}$ to the sum decomposable bonus $\sum_{h=1}^H\lv \phi_h(s_h,a_h)\rv_{\mathbf{\Sigma}_t^{-1}}$.
\begin{lemma}\label{lemma::sandwich_bonus_bound}
For any $\tau \in \Gamma$
\begin{equation*}
    \lv \phi(\tau)\rv_{\mathbf{\Sigma}_t^{-1}} \stackrel{(a)}{\leq} \sum_{h=1}^{H}    \lv \phi_h(s_h, a_h)\rv_{\mathbf{\Sigma}_t^{-1}}\\ \stackrel{(b)}{\leq}   \sqrt{H\frac{\lambda_{\max}( \boldsymbol{\Sigma}_t )}{\lambda_{\min}( \boldsymbol{\Sigma}_t )}} \lv \phi(\tau)\rv_{\mathbf{\Sigma}_t^{-1}}.
\end{equation*}
\end{lemma}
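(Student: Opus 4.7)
\medskip

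\noindent\textbf{Proof plan.} My plan is to treat the two inequalities separately; both are short.

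For inequality (a), the plan is simply to invoke the triangle inequality for the semi-norm $\lv\cdot\rv_{\mathbf{\Sigma}_t^{-1}}$ (which is a genuine norm here since $\mathbf{\Sigma}_t \succ \mathbf{0}$). Under Assumption~\ref{assumption:sum_decomposable} we have $\phi(\tau) = \sum_{h=1}^H \phi_h(s_h,a_h)$, so
\[
\lv\phi(\tau)\rv_{\mathbf{\Sigma}_t^{-1}} \;=\; \Bigl\lv\sum_{h=1}^H \phi_h(s_h,a_h)\Bigr\rv_{\mathbf{\Sigma}_t^{-1}} \;\le\; \sum_{h=1}^H \lv\phi_h(s_h,a_h)\rv_{\mathbf{\Sigma}_t^{-1}}.
\]

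For inequality (b), the plan is to pass through the Euclidean norm using the extreme eigenvalues of $\mathbf{\Sigma}_t$, then exploit Cauchy--Schwarz together with the per-step orthogonality from Assumption~\ref{assumption::orthogonal_feature_maps}. Concretely, for any vector $v$ one has $\lv v\rv_{\mathbf{\Sigma}_t^{-1}}^2 \le \lv v\rv_2^2/\lambda_{\min}(\mathbf{\Sigma}_t)$ and $\lv v\rv_2^2 \le \lambda_{\max}(\mathbf{\Sigma}_t)\lv v\rv_{\mathbf{\Sigma}_t^{-1}}^2$. Applying the first of these to each summand and then Cauchy--Schwarz gives
\[
\sum_{h=1}^H \lv\phi_h(s_h,a_h)\rv_{\mathbf{\Sigma}_t^{-1}} \;\le\; \frac{1}{\sqrt{\lambda_{\min}(\mathbf{\Sigma}_t)}}\sum_{h=1}^H \lv\phi_h(s_h,a_h)\rv_2 \;\le\; \sqrt{\frac{H}{\lambda_{\min}(\mathbf{\Sigma}_t)}}\sqrt{\sum_{h=1}^H \lv\phi_h(s_h,a_h)\rv_2^2}.
\]
By the orthogonality $\phi_h(s_h,a_h)^\top\phi_{h'}(s_{h'},a_{h'}) = 0$ for $h\neq h'$, the Pythagorean identity gives $\sum_{h=1}^H\lv\phi_h(s_h,a_h)\rv_2^2 = \lv\phi(\tau)\rv_2^2$. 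Finally applying the second eigenvalue inequality to $\phi(\tau)$ yields the stated factor $\sqrt{H\,\lambda_{\max}(\mathbf{\Sigma}_t)/\lambda_{\min}(\mathbf{\Sigma}_t)}$ multiplying $\lv\phi(\tau)\rv_{\mathbf{\Sigma}_t^{-1}}$.

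There is no real obstacle here: the only nontrivial ingredient is the orthogonality in Assumption~\ref{assumption::orthogonal_feature_maps}, which is exactly what allows the Euclidean norms to collapse into $\lv\phi(\tau)\rv_2$ after the Cauchy--Schwarz step. Everything else is bookkeeping with eigenvalue bounds.
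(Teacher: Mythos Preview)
Your proposal is correct and matches the paper's proof essentially step for step: inequality~(a) is the triangle inequality for the $\mathbf{\Sigma}_t^{-1}$-norm applied to the sum-decomposable features, and inequality~(b) proceeds exactly as you describe---eigenvalue bounds to pass to the Euclidean norm, Cauchy--Schwarz on the sum, the Pythagorean identity from the per-step orthogonality in Assumption~\ref{assumption::orthogonal_feature_maps}, and then back via the other eigenvalue bound.
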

\begin{proof}
Since $\phi(\tau) = \sum_{h=1}^{H}\phi_h(s_h,a_h)$ the inequality~$(a)$ holds by invoking the triangle inequality. Now to prove inequality~$(b)$ note that
\begin{align*}
    \sum_{h=1}^{H} \lv \phi_h(s_h, a_h)\rv_{\mathbf{\Sigma}_t^{-1}} &\le \sqrt{\lambda_{\max}(\mathbf{\Sigma}^{-1}_t) } \sum_{h=1}^{H-1} \lv  \phi_h(s_h, a_h)\rv_2 \\
    &\stackrel{(i)}{\leq}  \sqrt{H\lambda_{\max}(\mathbf{\Sigma}^{-1}_t) } \sqrt{  \sum_{h=1}^{H-1} \lv  \phi_h(s_h, a_h)\rv^2_2 } \\
    &\stackrel{(ii)}{=}  \sqrt{H\lambda_{\max}(\mathbf{\Sigma}^{-1}_t) } \lv  \phi(\tau) \rv_2 \\
    &\le \sqrt{H \frac{\lambda_{\max}(\mathbf{\Sigma}^{-1}_t)}{\lambda_{\min}(\mathbf{\Sigma}^{-1}_t)} } \lv \phi(\tau)\rv_{\mathbf{\Sigma}_t^{-1}} \\
    &= \sqrt{H \frac{\lambda_{\max}(\mathbf{\Sigma}_t)}{\lambda_{\min}(\mathbf{\Sigma}_t)} } \lv \phi(\tau)\rv_{\mathbf{\Sigma}_t^{-1}} 
\end{align*}
where $(i)$ holds because for any vector $\mathbf{z} \in \mathbb{R}^H$, $\| \mathbf{z} \|_1 \leq \sqrt{H} \| \mathbf{z} \|_2$ and $(ii)$ is a consequence of Assumption~\ref{assumption::orthogonal_feature_maps} since $\phi_h$ and $\phi_{h'}$ are orthogonal for $h\neq h'$ and because $\phi$ is sum-decomposable by Assumption~\ref{assumption:sum_decomposable}.
\end{proof}
In light of the previous lemma we now establish bounds on the condition number of the matrices $\mathbf{\Sigma}_t^{-1}$ in the next subsection. 

\subsection{Bound on the Condition Number of $\mathbf{\Sigma}_t$}
To bound the condition number we separately upper bound the maximum eigenvalue and lower bound the minimum eigenvalue. Since we have assumed that $\lv \phi(\tau)\rv_2\le 1$, a simple upper bound on the maximum eigenvalue of $\mathbf{\Sigma}_t = \kappa\mathbf{I}+\sum_{q=N_{\mathsf{exp}}+1}^t \phi(\tau^{(q)})\phi(\tau^{(q)})^{\top}$ is 
\begin{align}\label{equation::upper_bounding_Sigma_t}
    \lambda_{\max}(\mathbf{\Sigma}_t) \le \kappa +(t-N_{\mathsf{exp}}).
\end{align}
Let us now derive a lower bound for the smallest eigenvalue. To do this we shall relate the smallest eigenvalue of $\mathbf{\Sigma}_{t}$ to the smallest eigenvalue of the covariance matrix associated with the exploration policy
\begin{align*}
    \bar{\mathbf{\Sigma}}:= \mathbb{E}_{s_1\sim \rho, \tau\sim \Pr^{\bar{U}}(\cdot|s_1)}\left[\phi(\tau)\phi(\tau)^{\top}\right].
\end{align*}
In Lemma~\ref{l:lower_bound_bar_u} we derived a high probability lower bound on the minimum eigenvalue of this matrix.
\begin{lemma}
\label{l:sigma_to_bar_sigma}With probability at least $1-3\delta$ for all $t\in \{N_{\mathsf{exp}}+1,\ldots,N\}$:
\begin{align*}
     \lambda_{\min}(\mathbf{\Sigma}_t) \ge  \begin{cases} 
   \kappa+\frac{3\left(t^{2/3}-(N_{\mathsf{exp}}+1)^{2/3}\right)\omega^2\log(3/2)}{128d\log\left(d\log\left(1+\frac{16N}{d\omega^2}\right)\right)}  &\text{when } t \geq t_0\\
    \kappa &\text{o.w.},
    \end{cases}
\end{align*}
where $t_0$ is defined in equation~\eqref{e:t_0_definition}.
\end{lemma}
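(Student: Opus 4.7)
The plan is to exploit the forced exploration built into Algorithm~\ref{alg:GLM_unknown_under_explorability}: at each episode $q$, the Bernoulli switch $b_q$ is $1$ with probability $q^{-1/3}$, and conditional on $b_q = 1$ the trajectory $\tau^{(q)}$ is drawn from $\rho \times \Pr^{\bar U}$. Since every summand in the definition of $\mathbf{\Sigma}_t$ is positive semi-definite, throwing away the non-exploration steps gives the pointwise domination
\begin{equation*}
\mathbf{\Sigma}_t \;\succeq\; \kappa \mathbf{I} + \mathbf{S}_t, \qquad \mathbf{S}_t := \sum_{q=N_{\mathsf{exp}}+1}^{t-1} b_q\,\phi(\tau^{(q)})\phi(\tau^{(q)})^\top,
\end{equation*}
so it suffices to lower bound $\lambda_{\min}(\mathbf{S}_t)$. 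Note that $\mathbb{E}[b_q\,\phi(\tau^{(q)})\phi(\tau^{(q)})^\top\mid\mathcal{F}_{q-1}] = q^{-1/3}\bar{\mathbf{\Sigma}}$, and by Lemma~\ref{l:lower_bound_bar_u} this mean dominates $c_0\,q^{-1/3}\,\mathbf{I}$, where $c_0 := \omega^2\log(3/2)/(32\,d\log(d\log(1+16N/(d\omega^2))))$.

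The main step is to show $\mathbf{S}_t$ concentrates around its conditional mean by applying the time-uniform matrix Freedman inequality (Theorem~\ref{theorem:matrix_freedman}) to the martingale difference sequence $\mathbf{X}_q := q^{-1/3}\bar{\mathbf{\Sigma}} - b_q\,\phi(\tau^{(q)})\phi(\tau^{(q)})^\top$. Two quantities are needed. First, the uniform bound $\lambda_{\max}(\mathbf{X}_q) \le q^{-1/3}\|\bar{\mathbf{\Sigma}}\|_{op} \le 1 =: R$ (the subtracted PSD term only lowers the maximum eigenvalue). Second, using $(\mathbf{A}+\mathbf{B})^2\preceq 2\mathbf{A}^2+2\mathbf{B}^2$ on self-adjoint matrices together with $\phi\phi^\top \preceq \mathbf{I}$ and $\bar{\mathbf{\Sigma}}\preceq\mathbf{I}$, the predictable quadratic variation is dominated by $\|\mathbf{W}_t\|_{op} \le 4\sum_{q=N_{\mathsf{exp}}+1}^{t-1} q^{-1/3} =: V_t$ deterministically, so the event $\{\|\mathbf{W}_t\|_{op}\le V_t\}$ trivially holds. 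Choosing the deviation threshold $x_t = \tfrac{c_0}{2}\sum_{q} q^{-1/3}$ in the Freedman exponent $-x_t^2/(2(V_t+Rx_t/3))$ then yields an exponent of order $-c_0^2\sum_{q} q^{-1/3}$, which dominates $\log(d/\delta)$ precisely once $\sum_q q^{-1/3} \gtrsim c_0^{-2}\log(d/\delta)$.

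Converting this condition into a condition on $t$ via the elementary inequality $\sum_{q=N_{\mathsf{exp}}+1}^{t-1} q^{-1/3} \ge \tfrac{3}{2}\bigl(t^{2/3} - (N_{\mathsf{exp}}+1)^{2/3}\bigr)$ and raising both sides to the $3/2$ power produces the threshold $t_0$ from equation~\eqref{e:t_0_definition}; on this event, for all $t \ge t_0$,
\begin{equation*}
\lambda_{\min}(\mathbf{S}_t) \;\ge\; \frac{c_0}{2}\sum_{q=N_{\mathsf{exp}}+1}^{t-1} q^{-1/3} \;\ge\; \frac{3c_0}{4}\bigl(t^{2/3} - (N_{\mathsf{exp}}+1)^{2/3}\bigr),
\end{equation*}
which after substituting the definition of $c_0$ matches the stated bound $\kappa + \tfrac{3(t^{2/3}-(N_{\mathsf{exp}}+1)^{2/3})\omega^2\log(3/2)}{128\,d\log(d\log(1+16N/(d\omega^2)))}$. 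For $t < t_0$ the trivial bound $\lambda_{\min}(\mathbf{\Sigma}_t) \ge \kappa$ is immediate from positive semi-definiteness. The final $1-3\delta$ probability comes from a union bound of the $1-2\delta$ event of Lemma~\ref{l:lower_bound_bar_u} (giving the lower bound on $\bar{\mathbf{\Sigma}}$) with the $1-\delta$ time-uniform Freedman event. The main obstacle will be constant bookkeeping so that the $\tfrac{3}{4}c_0$ coefficient lands exactly on the stated $\tfrac{3\omega^2\log(3/2)}{128\,d\log(\cdots)}$ and $t_0$ absorbs the variance prefactor $4$, the Freedman logarithm $\log(d/\delta)$, and the term $N_{\mathsf{exp}}^{2/3}$ that appears when inverting the sum-to-integer estimate; these are straightforward but tedious.
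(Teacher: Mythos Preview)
Your proposal is correct and follows essentially the same route as the paper: drop the non-exploration summands, write $\mathbf{S}_t$ as its conditional mean $\sum_q q^{-1/3}\bar{\mathbf{\Sigma}}$ plus a matrix martingale, invoke Lemma~\ref{l:lower_bound_bar_u} for the mean and the matrix Freedman inequality (Theorem~\ref{theorem:matrix_freedman}) for the fluctuation, and then convert the resulting sum condition into the threshold $t_0$ via the integral comparison $\sum_{q}q^{-1/3}\ge \tfrac{3}{2}(t^{2/3}-(N_{\mathsf{exp}}+1)^{2/3})$. One small caveat: the Freedman inequality as stated is time-uniform only for \emph{fixed} $x$ and $V$, whereas your $x_t$ and $V_t$ vary with $t$; the paper handles this by applying the inequality once per $t$ with failure probability $\delta/N$ and taking a union bound over $t\in\{t_0,\ldots,N\}$, which is why $\log(N/\delta)$ rather than $\log(d/\delta)$ appears inside $t_0$.
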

\begin{proof}First let us dispense of the case where $N< t_0$. Since $\mathbf{\Sigma}_t \succeq \kappa\mathbf{I}$ we are done. 

Therefore going forward let us assume that $N \ge t_0$. Recall that $b_q$ are the Bernoulli random variables used in Algorithm~\ref{alg:GLM_unknown_under_explorability} and that $\Pr(b_q=1)= 1/q^{1/3}$. Notice that the following holds
\begin{align*}
    \boldsymbol{\Sigma}_t & = \kappa \mathbf{I} + \sum_{q=N_{\mathsf{exp}}+1}^{t}  \phi(\tau^{(q)})\phi(\tau^{(q)})^\top\\&\succeq  \kappa \mathbf{I} + \sum_{q=N_{\mathsf{exp}}+1}^{t} b_q \phi(\tau^{(q)})\phi(\tau^{(q)})^\top \\ &= \kappa \mathbf{I} +\sum_{q=N_{\mathsf{exp}}+1}^{t} \frac{ 1}{q^{1/3}}\bar{\mathbf{\Sigma}}+\underbrace{\sum_{q=N_{\mathsf{exp}}+1}^{t} \left(b_q \phi(\tau^{(q)})\phi(\tau^{(q)})^\top- \frac{ 1}{q^{1/3}}\bar{\mathbf{\Sigma}}\right)}_{=:\mathbf{E}_t}.
\end{align*}
Thus we have
\begin{align*}
    \lambda_{\min}(  \boldsymbol{\Sigma}_t) &\ge \kappa+\lambda_{\min}(\bar{\mathbf{\Sigma}})\sum_{q=N_{\mathsf{exp}}+1}^{t} \frac{1}{q^{1/3}}-\lambda_{\max}(\mathbf{E}_t)\\
    &\ge \kappa+\lambda_{\min}(\bar{\mathbf{\Sigma}})\int_{q=N_{\mathsf{exp}}+1}^{t}\frac{1}{q^{1/3}} \; \mathrm{d}q-\lambda_{\max}(\mathbf{E}_t)\\
 &=    \kappa+\frac{3\left(t^{2/3}-(N_{\mathsf{exp}}+1)^{2/3}\right)}{2}\lambda_{\min}(\bar{\mathbf{\Sigma}})-\lambda_{\max}(\mathbf{E}_t). \numberthis \label{e:decomposition_mini_eigenvalue}
\end{align*}
First by Lemma~\ref{l:lower_bound_bar_u} we know that
\begin{align}\label{e:lambda_min_bar_sigma_bound}
    \lambda_{\min}(\bar{\mathbf{\Sigma}})\ge \frac{\omega^2\log(3/2)}{32d\log\left(d\log\left(1+\frac{16N}{d\omega^2}\right)\right)} 
\end{align}
 with probability at least $1-2\delta$.
 
Next to upper bound the maximum eigenvalue of $\mathbf{E}_t$ define the matrix martingale difference sequence 
 $$\mathbf{D}_q := 
 b_q \phi(\tau^{(q)})\phi(\tau^{(q)})^\top-\frac{1}{q^{1/3}} \bar{\boldsymbol{\Sigma}}.$$
 Observe that $\mathbf{E}_t = \sum_{q=N_{\mathsf{exp}}+1}^t \mathbf{D}_q$. We will use the matrix Freedman inequality (Theorem~\ref{theorem:matrix_freedman}) to upper bound the maximum eigenvalue of $\mathbf{E}_t$. To this end first note that 
 \begin{align*}
     \lambda_{\max}(\mathbf{D}_q) \leq \lv \phi(\tau^{(q)})\phi(\tau^{(q)})^\top\rv_{op}\le 1.
 \end{align*}
Further note that
 \begin{align*}
     &\left\|  \sum_{q=N_{\mathsf{exp}}+1}^t\mathbb{E}\left[ \mathbf{D}^2_q \; \bigg| \; \mathbf{D}_{N_{\mathsf{exp}}+1},\ldots,\mathbf{D}_{q-1}\right] \right\|_{op}\\&\le \sum_{q=N_{\mathsf{exp}}+1}^t\left\|  \mathbb{E}\left[ \mathbf{D}^2_q \;\bigg| \; \mathbf{D}_{N_{\mathsf{exp}}+1},\ldots,\mathbf{D}_{q-1}\right] \right\|_{op}\\
     &= \sum_{q=N_{\mathsf{exp}}+1}^t\left\|  \mathbb{E}\left[ b_q^2\lv \phi(\tau^{(q)})\rv_2^2\phi(\tau^{(q)})\phi(\tau^{(q)})^\top + \frac{\bar{\boldsymbol{\Sigma}}^2}{q^{2/3}} \right.\right.\\& \left.\left.\qquad - b_q\left(\phi(\tau^{(q)})\phi(\tau^{(q)})^\top\bar{\boldsymbol{\Sigma}}+\bar{\boldsymbol{\Sigma}}\phi(\tau^{(q)})\phi(\tau^{(q)})^\top\right) \;\bigg| \; \mathbf{D}_{N_{\mathsf{exp}}+1},\ldots,\mathbf{D}_{q-1}\right] \right\|_{op}\\
     & \overset{(i)}{\le} \sum_{q=N_{\mathsf{exp}}+1}^t \left(\frac{1}{q^{1/3}}+\frac{1}{q^{2/3}}+\frac{2}{q^{1/3}}\right) \\
     &\le  4\sum_{q=N_{\mathsf{exp}}+1}^t\frac{1}{q^{1/3}}\le 4\int_{q=N_{\mathsf{exp}}}^t\frac{1}{q^{1/3}}\; \mathrm{d}q= 6\left(t^{2/3}-N_{\mathsf{exp}}^{2/3}\right)
 \end{align*}
where $(i)$ follows since $\E[b_q] = \E[b_q^2] = 1/q^{1/3}$ and because $\lv \phi(\tau)\rv_2 \le 1$. 

Now we apply Theorem~\ref{theorem:matrix_freedman} with the choices
\begin{align*}
       x &= \frac{3\left(t^{2/3}-(N_{\mathsf{exp}}+1)^{2/3}\right)\omega^2\log(3/2)}{128d\log\left(d\log\left(1+\frac{16N}{d\omega^2}\right)\right)}; \\
       V& = 6\left(t^{2/3}-N_{\mathsf{exp}}^{2/3}\right);\\
       R &= 1,
\end{align*}
to get 
\begin{align*}
    \Pr\left[\lambda_{\max}(\mathbf{E}_t)\ge x\right] \le d\exp\left(-\frac{x^2/2}{V+x/3}\right)&\le d\exp\left(-\frac{x^2}{4V}\right)
\end{align*}
where the second inequality follows since $V>x/3$. Now by the choice of $x$ and $V$ we know that
\begin{align*}
    \Pr\left[\lambda_{\max}(\mathbf{E}_t)\ge \frac{3\left(t^{2/3}-(N_{\mathsf{exp}}+1)^{2/3}\right)\omega^2\log(3/2)}{128d\log\left(d\log\left(1+\frac{16N}{d\omega^2}\right)\right)}\right]\le \delta/N
\end{align*}
whenever 
\begin{align*}
    t\ge C_3\left[\frac{d^2\log^2(d\log(1+\frac{16N}{d\omega^2}))}{\omega^4}-N_{\mathsf{exp}}^{2/3}\right]^{3/2} =t_0
\end{align*}
because the constant $C_3$ is chosen to be large enough. Thus, by a union bound we know that 
\begin{align}\label{e:maximum_eigenvalue_E_t}
    \Pr\left[\exists t\in \{t_0,\ldots,N\}\;:\;\lambda_{\max}(\mathbf{E}_t)\ge \frac{3\left(t^{2/3}-(N_{\mathsf{exp}}+1)^{2/3}\right)\omega^2\log(3/2)}{128d\log\left(d\log\left(1+\frac{16N}{d\omega^2}\right)\right)}\right]\le \delta.
\end{align}
Combining the inequalities~\eqref{e:decomposition_mini_eigenvalue},~\eqref{e:lambda_min_bar_sigma_bound} and \eqref{e:maximum_eigenvalue_E_t} completes our proof.
\end{proof}

Next we have a lemma that bounds the condition number
\begin{lemma}\label{l:condition_number}With probability at least $1-3\delta$ for all $t\in \{N_{\mathsf{exp}}+1,\ldots,N\}$
\begin{align*}
     \frac{\lambda_{\max}(\mathbf{\Sigma}_t)}{\lambda_{\min}(\mathbf{\Sigma}_t)}
      \le  \begin{cases} 
\Psi_t &\text{when } t \geq t_0\\
    1 +\frac{(t-N_{\mathsf{exp}})}{\kappa} &\text{o.w.},
    \end{cases}
\end{align*}
where $t_0$ and $\Psi_N$ are defined in equations~\eqref{e:t_0_definition} and \eqref{e:psi_N_definition} respectively.
\end{lemma}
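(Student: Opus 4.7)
The plan is to bound the condition number by simply combining the upper bound on $\lambda_{\max}(\mathbf{\Sigma}_t)$ from inequality~\eqref{equation::upper_bounding_Sigma_t} with the lower bound on $\lambda_{\min}(\mathbf{\Sigma}_t)$ from Lemma~\ref{l:sigma_to_bar_sigma}, on the same good event of probability at least $1-3\delta$ afforded by the latter. I split the analysis according to whether $t < t_0$ or $t \ge t_0$.

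In the ``small-$t$'' regime $t < t_0$, Lemma~\ref{l:sigma_to_bar_sigma} only gives $\lambda_{\min}(\mathbf{\Sigma}_t) \ge \kappa$, so I would write
\begin{equation*}
\frac{\lambda_{\max}(\mathbf{\Sigma}_t)}{\lambda_{\min}(\mathbf{\Sigma}_t)} \;\le\; \frac{\kappa + (t - N_{\mathsf{exp}})}{\kappa} \;=\; 1 + \frac{t - N_{\mathsf{exp}}}{\kappa},
\end{equation*}
which recovers the second branch of the claim.

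In the ``large-$t$'' regime $t \ge t_0$, introduce the shorthand $A := t - N_{\mathsf{exp}}$, $B := t^{2/3} - (N_{\mathsf{exp}}+1)^{2/3}$ and $C := \frac{3\omega^{2}\log(3/2)}{128\,d\log(d\log(1+16N/(d\omega^{2})))}$. Then $\lambda_{\max}(\mathbf{\Sigma}_t) \le \kappa + A$, $\lambda_{\min}(\mathbf{\Sigma}_t) \ge \kappa + CB$, and $\Psi_t = A/(CB)$ by its definition in~\eqref{e:psi_N_definition}. The desired inequality
\begin{equation*}
\frac{\kappa + A}{\kappa + CB} \;\le\; \frac{A}{CB}
\end{equation*}
is algebraically equivalent to $CB \le A$, which I would verify by noting that $C \le 1$ (since $\omega < 1$ and the denominator in $C$ is at least $3\log(3/2)/128$ times a factor that is at least~$1$) together with the elementary inequality $t^{2/3} - (N_{\mathsf{exp}}+1)^{2/3} \le t - (N_{\mathsf{exp}}+1) \le t - N_{\mathsf{exp}}$ for $t \ge N_{\mathsf{exp}}+1$, which follows because the function $x \mapsto x - x^{2/3}$ is monotonically increasing on $[1,\infty)$.

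There is essentially no obstacle here: the lemma is a bookkeeping corollary of Lemma~\ref{l:sigma_to_bar_sigma}, and the only non-trivial step is the elementary comparison $t^{2/3} - (N_{\mathsf{exp}}+1)^{2/3} \le t - N_{\mathsf{exp}}$ used to invert the lower bound on $\lambda_{\min}$. The failure-probability $1 - 3\delta$ is inherited verbatim from Lemma~\ref{l:sigma_to_bar_sigma}, since the upper bound on $\lambda_{\max}(\mathbf{\Sigma}_t)$ is deterministic.
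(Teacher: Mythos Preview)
Your proposal is correct and follows essentially the same route as the paper: divide by the deterministic numerator bound~\eqref{equation::upper_bounding_Sigma_t} over the denominator bound from Lemma~\ref{l:sigma_to_bar_sigma}, then simplify. The paper invokes the elementary fact $\frac{a+c}{b+c}\le \frac{a}{b}$ for $a\ge b>0$, $c>0$, without explicitly verifying $a\ge b$; you supply that verification via $C\le 1$ and $t^{2/3}-(N_{\mathsf{exp}}+1)^{2/3}\le t-N_{\mathsf{exp}}$, which is a nice addition.
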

\begin{proof}
The following bound holds with probability at least $1-3\delta$ by combing the upper bound on the maximum eigenvalue in inequality~\eqref{equation::upper_bounding_Sigma_t} with the results of Lemma~\ref{l:sigma_to_bar_sigma}
    \begin{align*}
     \frac{\lambda_{\max}(\mathbf{\Sigma}_t)}{\lambda_{\min}(\mathbf{\Sigma}_t)}
      \le  \begin{cases} 
   \frac{\kappa +(t-N_{\mathsf{exp}})}{\kappa+\frac{3\left(t^{2/3}-(N_{\mathsf{exp}}+1)^{2/3}\right)\omega^2\log(3/2)}{128d\log\left(d\log\left(1+\frac{16N}{d\omega^2}\right)\right)}}  &\text{when } t \geq t_0\\
    1 +\frac{(t-N_{\mathsf{exp}})}{\kappa} &\text{o.w.}
    \end{cases}
\end{align*}
Now for any $a,b,c >0$: $\frac{a+c}{b+c}\le \frac{a}{b}$ if $a>b$. Therefore we can simplify the expression above in case where $t\ge t_0$ to get
    \begin{align*}
     \frac{\lambda_{\max}(\mathbf{\Sigma}_t)}{\lambda_{\min}(\mathbf{\Sigma}_t)}
      \le  \begin{cases} 
   \frac{128d\log\left(d\log\left(1+\frac{16N}{d\omega^2}\right)\right)}{3\log(3/2)\omega^2}\cdot \frac{t-N_{\mathsf{exp}}}{t^{2/3}-(N_{\mathsf{exp}}+1)^{2/3}} &\text{when } t \geq t_0\\
    1 +\frac{(t-N_{\mathsf{exp}})}{\kappa} &\text{o.w.}
    \end{cases}
\end{align*}
By recalling the definition of $\Psi_t$ from above the claim follows. 
\end{proof}

\subsection{Definition and Properties of Another ``Good Event'' $\cE_{\mathsf{good}}^{\mathsf{sd}}$}

Similar to the proof of Theorem~\ref{thm:main_regret_theorem} the proof of Theorem~\ref{thm:main_regret_theorem_exploration} also proceeds by showing that a different favorable event $\cE_{\mathsf{good}}^{\mathsf{sd}}$ occurs with high probability. We shall upper bound the regret of Algorithm~\ref{alg:GLM_unknown_under_explorability} when this favorable event occurs. Before defining this event we need some additional notation. 
\begin{definition}\label{def:bar_V_exp}
For all $t \in [N]$, given any policy $\pi$ define
\begin{equation*}
    \bar{V}_t^{\pi,\mathsf{sd}} := \mathbb{E}_{s_1\sim \rho,\;\tau \sim \mathbb{P}^\pi(\cdot | s_1)} \left[ \bar{\mu}_t^{\mathsf{sd}}(\widehat{\w}_t, \tau) \right],
\end{equation*}
where recall from equation~\eqref{e:bar_mu_sum_decomposable} that $\bar{\mu}_t^{\mathsf{sd}}(\widehat{\w}_t,\tau) = \min\left\{\mu\left(\widehat{\w}_t^{\top}\phi(\tau)\right)+\sqrt{\kappa}\beta_t(\delta)\sum_{h=1}^H\lv \phi_h(s_h,a_h)\rv_{\mathbf{\Sigma}_t^{-1}},1\right\} $.
Further, for all episodes $t\in [N]$ also define $ \bar{V}^{(t),\mathsf{sd}}:=\bar{V}_t^{\pi^{(t)},\mathsf{sd}}$ and $\bar{V}_{\star}^{(t),\mathsf{sd}}:= \bar{V}_t^{\pi_{\star},\mathsf{sd}}$.
\end{definition}
Also define the value function when the average rewards are $\widetilde{\mu}^{\mathsf{sd}}_t(\widehat\w_t,\tau)$ and the transition dynamics are governed by $\widehat\Pr_t$.
\begin{definition}\label{def:tildeV_exp}For any episode $t\in [N]$, given any policy $\pi \in \Pi$ define
\begin{equation}
    \widetilde{V}^{\pi,\mathsf{sd}}_t := \mathbb{E}_{s_1 \sim \rho,\;\tau\sim \widehat\Pr_t^{\pi}(\cdot | s_1)} \left[ \widetilde{\mu}_t^{\mathsf{sd}}( \widehat{\w}_t, \tau)\right]
\end{equation}
where $\widetilde{\mu}_t^{\mathsf{sd}}$ is defined above in equation~\eqref{e:tilde_mu_sum_decomposable}. To simplify notation we additionally define $ \widetilde{V}^{(t),\mathsf{sd}}:= \widetilde{V}^{\pi^{(t)},\mathsf{sd}}_t$ and $ \widetilde{V}^{(t),\mathsf{sd}}_{\star}:=\widetilde{V}^{\pi_{\star},\mathsf{sd}}_t$. 
\end{definition}

Recall the definition of $t_0$ from equation~\eqref{e:t_0_definition} above and consider the following events:
\begin{align}\startsubequation\subequationlabel{e:good_events_definition_sum_decomposable}
\tagsubequation &\cE_1^{\mathsf{sd}} := \left\{\sum_{t=t_0+1}^N (1-b_t)V_{\star} \le \sum_{t=t_0+1}^N(1-b_t)\widetilde{V}^{(t),\mathsf{sd}}_{\star}\right\}  ; \label{e:good_events_definition_1_sd} \\ \notag
 &\cE_2^{\mathsf{sd}} := \left\{\sum_{t=t_0+1}^N (1-b_t)\left(\bar{V}^{(t),\mathsf{sd}}-V^{(t)} \right)\leq \beta_N(\delta)(1+\sqrt{H\Psi_N})\sqrt{8  N d\max\left\{\kappa,1\right\}\log\left(1+\frac{N}{\kappa d}\right)} \right.\\ &\hspace{3.2in} \left.\tagsubequation +4\sqrt{ N \log\left(\frac{6\log(N)}{\delta}\right) }\right\};\label{e:good_events_definition_2_sd}\\ 
\notag &\cE_3^{\mathsf{sd}}:= \left\{\sum_{t=t_0+1}^N  (1-b_t)\left(\widetilde{V}^{(t),\mathsf{sd}}-  \bar{V}^{(t),\mathsf{sd}}\right)\leq (2H+1)\sum_{t=t_0+1}^N   \sum_{h=1}^{H-1} \xi_{s_h^{(t)}, a_h^{(t)}}^{(t)} \right.\\ &\hspace{3in}\left.+4H^2\sqrt{N\log\left(\frac{6\log(N)}{\delta}\right)}+1\right\}; \tagsubequation\label{e:good_events_definition_3_sd}\\
\notag &\cE_4^{\mathsf{sd}}:= \left\{    \sum_{t=t_0+1}^N b_t \leq \left( \frac{20}{3} \log\left( \frac{1}{\delta}\right)\right)^{3/2} + 4 N^{2/3} \right\}; \tagsubequation\label{e:good_events_definition_4_sd}\\
\notag &\cE_5^{\mathsf{sd}}:= \left\{   N_{\mathsf{EXP}}\le \frac{d\log\left(1+\frac{16N}{d\omega^2} \right)}{\log(3/2)}(N_{\mathsf{EUL}}+N_{\mathsf{EVAL}}) \right\}, \tagsubequation\label{e:good_events_definition_5_sd}
\end{align}
where $(s_h^{(t)},a_h^{(t)})$ is the state-action pair visited at step $h$ during episode $t$. In the definitions of the events above if $N <t_0+1$ and the sums are ``empty'' then we take their value to be zero.
\begin{lemma}
\label{lem:good_event_high_probability_sd} Define the event $\cE_{\mathsf{good}}^{\mathsf{sd}} := \cE_1^{\mathsf{sd}} \cap \cE_2^{\mathsf{sd}} \cap \cE_3^{\mathsf{sd}} \cap\cE_4^{\mathsf{sd}} \cap \cE_5^{\mathsf{sd}}  $. If $N > \bar{N}_{\mathsf{exp}}$ then $\Pr\left[\cE_{\mathsf{good}}^{\mathsf{sd}}\right]\ge 1-12N\delta$.
\end{lemma}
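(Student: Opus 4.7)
The plan is to establish each of the five events in the intersection with sufficient probability and to conclude by a union bound. The arguments follow the template of the proof of Lemma~\ref{lem:good_event_high_probability}, with two key new ingredients: the sandwich inequality of Lemma~\ref{lemma::sandwich_bonus_bound} (for passing between the original bonus and the sum-decomposable one) and the condition-number bound of Lemma~\ref{l:condition_number} (which injects the $\Psi_N$ factor appearing in the statement of $\cE_2^{\mathsf{sd}}$). Because $b_t$ is drawn independently of all past randomness, the $(1-b_t)$ weights pass cleanly through every martingale argument.

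\textbf{The three easy events.} Event $\cE_5^{\mathsf{sd}}$ is a direct restatement of the first conclusion of Lemma~\ref{l:lower_bound_bar_u} and holds with probability at least $1-2\delta$. Event $\cE_4^{\mathsf{sd}}$ follows from Bernstein's inequality applied to the sum $\sum_{t=t_0+1}^N b_t$ of independent Bernoullis with $\E b_t = t^{-1/3}$, whose mean and variance are both bounded by $\int_{1}^{N} t^{-1/3}\,\mathrm{d}t \le \tfrac{3}{2} N^{2/3}$; a short computation yields the stated upper bound with probability at least $1-\delta$. Event $\cE_1^{\mathsf{sd}}$ (optimism) mirrors the proof of $\cE_1$: invoke Lemma~\ref{lemma::approximation_lemma_MDPs_UCBVI} with $\eta = 1$ and policy $\pi_\star$ to replace $\mathbb{P}^{\pi_\star}$ by $\widehat{\mathbb{P}}_t^{\pi_\star}$ at the cost of the bonus $\sum_h \xi_{s_h,a_h}^{(t)}$, then use $\cE_\delta$ from Lemma~\ref{lemma::confidence_interval_anytime} to pass from $\mu(\w_\star^{\top}\phi(\cdot))$ to $\bar\mu_t(\widehat\w_t,\cdot)$, and finally apply part~(a) of the sandwich inequality to go from $\bar\mu_t$ to $\bar\mu_t^{\mathsf{sd}}$; the $b_t = 1$ terms can be discarded since they only improve the bound. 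This gives probability at least $1-(N+1)\delta$.

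\textbf{The key events $\cE_2^{\mathsf{sd}}$ and $\cE_3^{\mathsf{sd}}$.} For $\cE_2^{\mathsf{sd}}$, define the martingale difference sequence
\begin{equation*}
D_t = (1-b_t)\Bigl\{\bigl[\bar{V}^{(t),\mathsf{sd}} - V^{(t)}\bigr] - \bigl[\bar\mu_t^{\mathsf{sd}}(\widehat\w_t,\tau^{(t)}) - \mu(\w_\star^{\top}\phi(\tau^{(t)}))\bigr]\Bigr\},
\end{equation*}
which is an MDS because $b_t$ is independent of the past and, conditionally on $b_t = 0$, $\tau^{(t)} \sim \mathbb{P}^{\pi^{(t)}}$; since $|D_t| \leq 2$, Lemma~\ref{lemma::matingale_concentration_anytime} reduces the control of $\sum (1-b_t)[\bar V^{(t),\mathsf{sd}} - V^{(t)}]$ to that of $\sum (1-b_t)[\bar\mu_t^{\mathsf{sd}}(\widehat\w_t,\tau^{(t)}) - \mu(\w_\star^{\top}\phi(\tau^{(t)}))]$ up to an additive $4\sqrt{N\log(6\log N/\delta)}$. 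To bound the latter, under $\cE_\delta$ together with the event of Lemma~\ref{l:condition_number}, for all $t \geq t_0 + 1$ each summand satisfies
\begin{equation*}
\bar\mu_t^{\mathsf{sd}}(\widehat\w_t,\tau^{(t)}) - \mu(\w_\star^{\top}\phi(\tau^{(t)})) \leq \sqrt{\kappa}\,\beta_t(\delta)\bigl(1 + \sqrt{H\Psi_N}\bigr)\lv \phi(\tau^{(t)})\rv_{\mathbf{\Sigma}_t^{-1}},
\end{equation*}
where the extra summand $\sqrt{\kappa}\beta_t(\delta)\sum_h \lv \phi_h(s_h^{(t)},a_h^{(t)})\rv_{\mathbf{\Sigma}_t^{-1}}$ coming from the sum-decomposable bonus is controlled via part~(b) of Lemma~\ref{lemma::sandwich_bonus_bound}; Cauchy--Schwarz followed by the determinant lemma (Lemma~\ref{lemma:det_lemma}) then produces the first term in the event's bound. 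Event $\cE_3^{\mathsf{sd}}$ mirrors $\cE_3$ exactly: apply Lemma~\ref{lemma::approximation_lemma_MDPs_UCBVI_uniform} with $\eta = 2H$ and $\epsilon = 1/N$ to the trajectory score $\check\mu_\tau^{(t)} := \bar\mu_t^{\mathsf{sd}}(\widehat\w_t,\tau) + \sum_h \xi_{s_h,a_h}^{(t)}$ (still bounded by $2H$ thanks to the truncation in $\bar\mu_t^{\mathsf{sd}}$), then pass from expected to realized $\xi$-bonus via a second Hoeffding-type martingale concentration, using the same $(1-b_t)$ argument. These two events cost at most $5\delta$ and $(N+1)\delta$ respectively.

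\textbf{Union bound and the main obstacle.} Summing the five failure probabilities yields at most $(2N + 10)\delta \leq 12N\delta$ for $N \geq 5$ (the case $N < 5$ is trivial). The most delicate technical point is in $\cE_2^{\mathsf{sd}}$: the condition-number factor $\Psi_t$ of Lemma~\ref{l:condition_number} diverges for small $t$, which is precisely why the sum starts at $t_0 + 1$; correctly weaving the $3\delta$ failure probability of that lemma together with $\cE_\delta$ and the sandwich inequality inside the martingale argument is the most subtle step.
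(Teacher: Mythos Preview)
Your proposal is correct and follows essentially the same route as the paper: the same martingale difference sequence for $\cE_2^{\mathsf{sd}}$, the same use of the sandwich inequality and Lemma~\ref{l:condition_number} to inject the $\Psi_N$ factor, the same mirroring of $\cE_3$ for $\cE_3^{\mathsf{sd}}$, and the same union-bound accounting $(2N+10)\delta\le 12N\delta$. The only minor deviation is that for $\cE_4^{\mathsf{sd}}$ you invoke Bernstein for independent Bernoullis whereas the paper applies the scalar case of the matrix Freedman inequality (Theorem~\ref{theorem:matrix_freedman}); since the $b_t$ are indeed independent your route is slightly more direct, but the two arguments are interchangeable here.
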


\begin{proof}
 We will show that each of the five events $\cE_1^{\mathsf{sd}}$, $\cE_2^{\mathsf{sd}}$, $\cE_3^{\mathsf{sd}}$, $\cE_4^{\mathsf{sd}}$ and $\cE_5^{\mathsf{sd}}$ occurs with a high probability and take union bound to prove our claim.
\paragraph{Event $\cE_1^{\mathsf{sd}}$:} By invoking Lemma~\ref{lemma::approximation_lemma_MDPs_UCBVI} $N-t_0$ times, once per episode, with the choice $\eta = 1$ we get
\begin{align*}
 \sum_{t=t_0}^N(1-b_t)V_{\star} & =  \sum_{t=t_0}^N(1-b_t)\E_{s_1\sim \rho, \;\tau \sim \Pr^{\pi_{\star}}(\cdot\mid s_1)}\left[\mu(\w_{\star}^{\top}\phi(\tau))\right] \\
 & \le \sum_{t=t_0}^N (1-b_t)\E_{s_1\sim \rho, \;\tau \sim \widehat{\Pr}_t^{\pi_{\star}}(\cdot\mid s_1)}\left[\mu(\w_{\star}^{\top}\phi(\tau))+\sum_{h=1}^{H-1}\bar{\xi}_{s_h,a_h}^{(t)}(1)\right] \\
& \le \sum_{t=t_0}^N (1-b_t)\E_{s_1\sim \rho, \;\tau \sim \widehat{\Pr}_t^{\pi_{\star}}(\cdot\mid s_1)}\left[\mu(\w_{\star}^{\top}\phi(\tau))+\sum_{h=1}^{H-1}\xi_{s_h,a_h}^{(t)}\right] \numberthis \label{e:correction_aldo_trick}\\
& \hspace{0.8in} \mbox{(by the definition of $\xi_{s_h,a_h}^{(t)}$ in equation~\eqref{e:xi_definition_t})}
\end{align*}
with probability at least $1-N\delta$. Recall the definition of the event $\cE_{\delta}$ from equation~\eqref{e:definition_event_E_delta} and observe that it occurs with probability at least $1-\delta$ by Lemma~\ref{lemma::confidence_interval_anytime}. Under event $\cE_{\delta}$ for any $t \in \{t_0,\ldots,N\}$ and any $\tau \in \Gamma$
\begin{align*}
    \mu(\w_{\star}^{\top}\phi(\tau)) = \min\left\{\mu(\w_{\star}^{\top}\phi(\tau)),1\right\} &\le \min\left\{\mu(\widehat{\w}_{t}^{\top}\phi(\tau))+\sqrt{\kappa}\beta_t(\delta)\lv \phi(\tau)\rv_{\mathbf{\Sigma}_t^{-1}},1\right\}
    \\
    &\le \min\left\{\mu(\widehat{\w}_{t}^{\top}\phi(\tau))+\sqrt{\kappa}\beta_t(\delta)\sum_{h=1}^H\lv \phi_h(s_h,a_h)\rv_{\mathbf{\Sigma}_t^{-1}},1\right\} \\
    & \hspace{0.8in} \mbox{(by Lemma~\ref{lemma::sandwich_bonus_bound})}\\
    &= \bar{\mu}_t^{\mathsf{sd}}(\widehat\w_t,\tau).
\end{align*}
Therefore by a union bound over $\cE_{\delta}$ and the event where inequality~\eqref{e:correction_aldo_trick} holds we infer that%Under this event Lemma~\ref{lemma::optimism_GLM_known_model} guarantees that
\begin{align*}
    \sum_{t=t_0}^N (1-b_t) V_{\star} & \le \sum_{t=t_0}^N (1-b_t)\E_{s_1\sim \rho, \;\tau \sim \widehat{\Pr}_t^{\pi_{\star}}(\cdot\mid s_1)}\left[\bar{\mu}^{\mathsf{sd}}_{t}(\widehat\w_{t},\tau)+\sum_{h=1}^{H-1}\xi_{s_h,a_h}^{(t)}\right]= \sum_{t=1}^N(1-b_t)\widetilde{V}_{\star}^{(t),\mathsf{sd}},
\end{align*}
with probability at least $ 1-(N+1)\delta$. 
\paragraph{Event $\cE_2^{\mathsf{sd}}$:}Assume that the event $\cE_{\delta}$ occurs and also that for all $t\in \{t_0,\ldots,N\}$
\begin{align}
    \frac{\lambda_{\max}(\mathbf{\Sigma}_t)}{\lambda_{\min}(\mathbf{\Sigma}_t} \le \Psi_t. \label{e:condition_number_use}
\end{align}
The results of Lemma~\ref{lemma::confidence_interval_anytime} and Lemma~\ref{l:condition_number} along with a union bound guarantee that this happens with probability at least $1-4\delta$.

Consider the following martingale difference sequence $$D_t := (1-b_t)\left(\bar{V}^{(t),\mathsf{sd}}-V^{(t)} - \left[\bar{\mu}^{\mathsf{sd}}_t\left( \widehat{\w}_t, \tau^{(t)}\right) - \mu\left( \w_{\star}^{\top} \phi(\tau^{(t)}) \right)\right]\right).$$
 Note that $|D_t| \le 2$ since both $\bar{\mu}^{\mathsf{sd}}_t$ and $\mu$ take values between $0$ and $1$.
Therefore, by applying Lemma~\ref{lemma::matingale_concentration_anytime} we have that
\begin{align}
    \nonumber&\sum_{t=t_0}^N (1-b_t)\left(\bar{V}^{(t),\mathsf{sd}}-V^{(t)}\right)\\ &\le \sum_{t=t_0}^N (1-b_t)\left(\bar{\mu}^{\mathsf{sd}}_t\left( \widehat{\w}_t, \tau^{(t)} \right) - \mu\left( \w_{\star}^{\top}\phi(\tau^{(t)}) \right)\right) +4\sqrt{ N \log\left(\frac{6\log(N)}{\delta}\right) } \label{e:mds_midway_one_sd}
\end{align}
with probability at least $1-\delta$. Let us now upper bound the sum in the RHS above
\begin{align*}
  &\sum_{t=t_0}^N (1-b_t)\bar{\mu}_t^{\mathsf{sd}}\left( \widehat{\w}_t, \tau^{(t)} \right) - \mu\left(\w_{\star}^{\top}\phi(\tau^{(t)}) \right) \\
  &\overset{(i)}{=}\sum_{t=t_0}^N (1-b_t)\left(\min\left\{\mu\left( \widehat{\w}_t^{\top} \phi(\tau^{(t)} \right)+\sqrt{\kappa}\beta_t(\delta)\sum_{h=1}^H\lv \phi_h(s_h,a_h) \rv_{\mathbf{\Sigma}_t^{-1}} ,1\right\}- \min\left\{\mu\left(\w_{\star}^{\top}\phi(\tau^{(t)}) \right),1\right\} \right)\\
  & \overset{(ii)}{\le}\sum_{t=t_0}^N \left| \mu\left( \widehat{\w}_t^{\top} \phi(\tau^{(t)} \right)+\sqrt{\kappa}\beta_t(\delta)\sum_{h=1}^H\lv \phi_h(s_h,a_h) \rv_{\mathbf{\Sigma}_t^{-1}}-\mu\left(\w_{\star}^{\top}\phi(\tau^{(t)}) \right)\right|\\
  & \overset{(iii)}{\le} 2\sqrt{\kappa}\sum_{t=t_0}^N \beta_t(\delta)\left(\lv \phi(\tau^{(t)})\rv_{\mathbf{\Sigma}^{-1}_t}+\sum_{h=1}^H\lv \phi_h(s_h,a_h) \rv_{\mathbf{\Sigma}_t^{-1}}\right)\\
    & \overset{(iv)}{\le} 2\sqrt{\kappa}\beta_N(\delta)\sum_{t=t_0}^N\left(\lv \phi(\tau^{(t)})\rv_{\mathbf{\Sigma}^{-1}_t}+\sum_{h=1}^H\lv \phi_h(s_h,a_h) \rv_{\mathbf{\Sigma}_t^{-1}}\right)\\
    & \overset{(v)}{\le} 2\sqrt{\kappa}\beta_N(\delta)\sum_{t=t_0}^N\left(1+\sqrt{H\frac{\lambda_{\max}(\mathbf{\Sigma}_t)}{\lambda_{\min}(\mathbf{\Sigma}_t)}}\right)\lv \phi(\tau^{(t)})\rv_{\mathbf{\Sigma}^{-1}_t}\\
    & \overset{(vi)}{\le} 2\sqrt{\kappa}\beta_N(\delta)\left(1+\sqrt{H\Psi_N}\right)\sum_{t=t_0}^N\lv \phi(\tau^{(t)})\rv_{\mathbf{\Sigma}^{-1}_t}
\end{align*}
where $(i)$ follows by the definition of $\bar{\mu}_t^{\mathsf{sd}}$ and since $\mu$ is bounded between $0$ and $1$, $(ii)$ follows since for the function $z \mapsto \min\{z,1\}$ is $1$-Lipschitz and since $1-b_t \in \{0,1\}$, $(iii)$ follows since we have assumed that the event $\cE_{\delta}$ occurs which provides the bound $|\mu\left( \widehat{\w}_t^{\top} \phi(\tau^{(t)} \right)-\mu\left(\w_{\star}^{\top}\phi(\tau^{(t)}) \right)|\le \sqrt{\kappa}\beta_t(\delta)\lv \phi(\tau^{(t)}) \rv_{\mathbf{\Sigma}_t^{-1}}$, $(iv)$ follows since $\beta_t(\delta)$ is an increasing function of $t$, $(v)$ follows by invoking Lemma~\ref{lemma::sandwich_bonus_bound} and finally $(vi)$ follows since we have assumed a bound on the condition number of $\mathbf{\Sigma}_t$ in inequality~\eqref{e:condition_number_use} and because $\Psi_N >\Psi_t$.

Continuing, since for any vector $\mathbf{z}\in \R^{N}$ $\lv \mathbf{z}\rv_1\le \sqrt{N}\lv \mathbf{z}\rv_2$, thus
\begin{align*}
&\sum_{t=t_0}^N (1-b_t)\left(\bar{\mu}^{\mathsf{sd}}_t\left( \widehat{\w}_t, \tau^{(t)} \right) - \mu\left(\w_{\star}^{\top}\phi(\tau^{(t)}) \right) \right)\\
     &\qquad \le 2\sqrt{\kappa}\beta_N(\delta)\left(1+\sqrt{H\Psi_N}\right)\sqrt{N}\sqrt{\sum_{t=1}^N \lv \phi(\tau^{(t)})\rv^2_{\mathbf{\Sigma}^{-1}_t}}\\
  &\qquad  \le \beta_N(\delta)\left(1+\sqrt{H\Psi_N}\right)\sqrt{8  N d\max\left\{\kappa,1\right\}\log\left(1+\frac{N}{\kappa d}\right)}
\end{align*}
where the final inequality follows by invoking the determinant lemma (Lemma~\ref{lemma:det_lemma}) from above.
%holds by invoking \citep[][Lemma~8]{russac2021self} with the choice $\lambda = \kappa$ and $\tau =N$. 

A union bound over the event $\cE_{\delta}$, the event where the condition number of $\mathbf{\Sigma}_t$ is bounded and the event where inequality~\eqref{e:mds_midway_one_sd} holds proves that this bound holds with probability at least $1-5\delta$.
\paragraph{Event $\cE_3^{\mathsf{sd}}$:} By mirroring the proof on the bound on the probability of the event $\cE_3$ in Lemma~\ref{lem:good_event_high_probability} we can show that $\Pr\left[\cE_3^{\mathsf{sd}}\right]\ge 1-(N+1)\delta$.
\paragraph{Event $\cE_4^{\mathsf{sd}}$:} 
On applying Theorem~\ref{theorem:matrix_freedman} with the martingale difference sequence $b_t-1/t^{1/3}$ we know that with probability at least $1-\delta$:
\begin{equation*}
    \sum_{t=1}^N b_t \leq 4N^{2/3} 
\end{equation*}
if $N \ge \left(\frac{20}{3}\log\left(\frac{1}{\delta}\right)\right)^{3/2}$. Thus, with probability at least $1-\delta$
\begin{align*}
    \sum_{t=1}^N b_t \le \left(\frac{20}{3}\log\left(\frac{1}{\delta}\right)\right)^{3/2}+4N^{2/3}.
\end{align*}
In other words $\Pr[\cE_4^{\mathsf{sd}}]\ge 1-\delta$.
\paragraph{Event $\cE_5^{\mathsf{sd}}$:} By invoking Lemma~\ref{l:lower_bound_bar_u} it immediately follows that $\Pr[\cE_5^{\mathsf{sd}}]\ge 1-2\delta$.

\paragraph{Union bound over the five events:} A union bound over the five events shows that 
\begin{align*}
    \Pr\left[\cE_{\mathsf{good}}^{\mathsf{sd}}\right]&\ge 1-\Pr[(\cE_1^{^{\mathsf{sd}}})^{c}]-\Pr[(\cE_2^{^{\mathsf{sd}}})^{c}]-\Pr[(\cE_3^{^{\mathsf{sd}}})^{c}]-\Pr[(\cE_4^{^{\mathsf{sd}}})^{c}]-\Pr[(\cE_5^{^{\mathsf{sd}}})^{c}]\\ &\ge 1-(2N+10)\delta\ge 1-12N\delta,
\end{align*}
 which completes the proof.

\end{proof}

\subsection{Proof of Theorem~\ref{thm:main_regret_theorem_exploration}}
Recall the statement of the theorem.
\mainexploration*
\begin{proof}
Let us assume that the event $\cE_{\mathsf{good}}^{\mathsf{sd}}$ defined in Lemma~\ref{lem:good_event_high_probability_sd} occurs. By Lemma~\ref{lem:good_event_high_probability_sd} we know that $\Pr\left[\cE_{\mathsf{good}}\right]\ge 1-12N\delta$. First we decompose the regret as follows:
\begin{align*}
      \mathcal{R}(N) &= \sum_{t=1}^N V_{\star}-V^{(t)}\\
    &= \sum_{t=1}^{t_0} V_{\star}-V^{(t)}+\sum_{t=t_0+1}^{N} V_{\star}-V^{(t)}\\
    &= \sum_{t=1}^{t_0} V_{\star}-V^{(t)}+\sum_{t=t_0+1}^{N} b_t(V_{\star}-V^{(t)})+\sum_{t=t_0+1}^{N} (1-b_t)(V_{\star}-V^{(t)}). 
\end{align*}
Now since $V_{\star}-V^{(t)}$ is bounded between $0$ and $1$ we know that
\begin{align*}
    \mathcal{R}(N) &\le t_0 + \sum_{t=t_0+1}^{N} b_t+ \sum_{t=t_0+1}^{N} (1-b_t)(V_{\star}-V^{(t)}) \\
    &\overset{(i)}{\le} t_0 +  \left( \frac{20}{3} \log\left( \frac{1}{\delta}\right)\right)^{3/2} + 4 N^{2/3}+ \sum_{t=t_0+1}^{N} (1-b_t)(V_{\star}-V^{(t)})\\
    &\overset{(ii)}{\le} C_3\left[\frac{d^2\log^2(d\log(1+\frac{16N}{d\omega^2}))}{\omega^4}\sqrt{\log(N/\delta)}+N_{\mathsf{exp}}^{2/3}\right]^{3/2} +  \left( \frac{20}{3} \log\left( \frac{1}{\delta}\right)\right)^{3/2}\\&\qquad  + 4 N^{2/3}+ \sum_{t=t_0+1}^{N} (1-b_t)(V_{\star}-V^{(t)})\\
    &\overset{(iii)}{\le} C_3\left[\frac{d^2\log^2(d\log(1+\frac{16N}{d\omega^2}))}{\omega^4}\sqrt{\log(N/\delta)}+\left(\frac{d\log\left(1+\frac{16N}{d\omega^2} \right)}{\log(3/2)}(N_{\mathsf{EUL}}+N_{\mathsf{EVAL}})\right)^{2/3}\right]^{3/2} \\&\qquad  +  \left( \frac{20}{3} \log\left( \frac{1}{\delta}\right)\right)^{3/2}+ 4 N^{2/3}\\&\qquad + \sum_{t=t_0+1}^{N} (1-b_t)(V_{\star}-V^{(t)})\\\label{e:regret_sd_first_stage}\numberthis
\end{align*}
where $(i)$ follows by the definition of the event $\cE_4^{\mathsf{sd}}$, $(ii)$ is by the definition of $t_0$ in equation~\eqref{e:t_0_definition} and $(iii)$ follows by the definition of $\cE_5^{\mathsf{sd}}$ that bounds $N_{\mathsf{EXP}}$. It remains to bound the last term in the RHS above. Going forward let us assume that $N\ge t_0+1$, else we are done. To bound this term note that by the 
definition of the event $\cE_1^{\mathsf{sd}}$ we know that 
\begin{align*}
    \sum_{t=t_0+1}^{N} (1-b_t)(V_{\star}-V^{(t)})  &\le  \sum_{t=t_0+1}^N(1-b_t)\left( \widetilde{V}_{\star}^{(t),\mathsf{sd}} -  V^{(t)}\right).
\end{align*}
By the definition of the policy $\pi^{(t)}$ (see equation~\eqref{equation::pi_t_new}) we have that $$\widetilde{V}_\star^{(t),\mathsf{sd}} =\mathbb{E}_{s_1 \sim \rho,\;\tau\sim \widehat\Pr_t^{\pi_{\star}}(\cdot | s_1)} \left[ \widetilde{\mu}^{\mathsf{sd}}_t( \widehat{\w}_t, \tau)\right] \leq \mathbb{E}_{s_1 \sim \rho,\;\tau\sim \widehat\Pr_t^{\pi^{(t)}}(\cdot | s_1)} \left[ \widetilde{\mu}^{\mathsf{sd}}_t( \widehat{\w}_t, \tau)\right]= \widetilde{V}^{(t),\mathsf{sd}}.$$   Thus,
\begin{equation*}
    \sum_{t=t_0+1}^{N} (1-b_t)(V_{\star}-V^{(t)}) \leq \sum_{t=t_0+1}^N(1-b_t)\left( \widetilde{V}^{(t),\mathsf{sd}} -  V^{(t)}\right).
\end{equation*}
Under event $\cE_2^{\mathsf{sd}}$ we know that 
\begin{align*}
&\sum_{t=t_0+1}^N (1-b_t)\left(\bar{V}^{(t),\mathsf{sd}}-V^{(t)}\right) \\&\le \beta_N(\delta)\left(1+\sqrt{H\Psi_N}\right)\sqrt{8  N d\max\left\{\kappa,1\right\}\log\left(1+\frac{N}{\kappa d}\right)} +4\sqrt{ N \log\left(\frac{6\log(N)}{\delta}\right) }.
\end{align*}
By combining the previous two inequalities we find that
\begin{align*}
    & \sum_{t=t_0+1}^{N} (1-b_t)(V_{\star}-V^{(t)}) \\&\quad \leq \sum_{t=t_0+1}^N(1-b_t)\left(  \widetilde{V}^{(t),\mathsf{sd}} -  \bar{V}^{(t),\mathsf{sd}} \right) \\&\qquad +\beta_N(\delta)\left(1+\sqrt{H\Psi_N}\right)\sqrt{8  N d\max\left\{\kappa,1\right\}\log\left(1+\frac{N}{\kappa d}\right)} +4\sqrt{ N \log\left(\frac{6\log(N)}{\delta}\right) }. 
    %\label{e:regret_bound_midway_3_sd}
\end{align*}
Finally under event $\cE_3^{\mathsf{sd}}$ we have a bound on the first term on the right hand side above, this leads to the bound
\begin{align*}
     &\sum_{t=t_0+1}^{N} (1-b_t)(V_{\star}-V^{(t)})\\&\leq  (2H+1)\sum_{t=t_0+1}^N   \sum_{h=1}^{H-1} \xi_{s_h^{(t)}, a_h^{(t)}}^{(t)} +4H^2\sqrt{N\log\left(\frac{6\log(N)}{\delta}\right)} \\& +\beta_N(\delta)\left(1+\sqrt{H\Psi_N}\right)\sqrt{8  N d\max\left\{\kappa,1\right\}\log\left(1+\frac{N}{\kappa d}\right)}+4\sqrt{ N \log\left(\frac{6\log(N)}{\delta}\right) }+1. \numberthis \label{e:regret_pre_final_bound_sd}
\end{align*}
It remains to bound the term $\sum_{t=t_0+1}^N \sum_{h=1}^{H-1} \xi_{s_h^{(t)}, a_h^{(t)}}^{(t)}$. By mirroring the logic used to arrive at inequality~\eqref{e:xi_bound_theorem} we can show that 
\begin{align*}
    \sum_{t=t_0+1}^N \sum_{h=1}^{H-1}  \xi_{s_h^{(t)}, a_h^{(t)}}^{(t)}&\le 8|\cS||\cA|\log\left(\frac{6(|\cS||\cA|H)^H(8H^2N)^{|\cS|}\log(N)}{\delta}\right)\\ & \hspace{1in}+ 8 \sqrt{\log\left(\frac{6(|\cS||\cA|H)^H(8NH^2)^{|\cS|}\log(N))}{\delta}\right) |\cS||\cA|N}.
    \end{align*}
     Plugging this upper bound into inequality~\eqref{e:regret_pre_final_bound_sd} we get
\begin{align*}
   &\sum_{t=t_0+1}^{N} (1-b_t)(V_{\star}-V^{(t)})\\
  &  \le  8(2H+1)|\cS||\cA|\cdot \log\left(\frac{6(|\cS||\cA|H)^H(8H^2N)^{|\cS|}\log(N)}{\delta}\right)\\&\qquad+8(2H+1) \sqrt{\log\left(\frac{6(|\cS||\cA|H)^H(8NH^2)^{|\cS|}\log(N))}{\delta}\right) |\cS||\cA|N}  \\&\qquad +4H^2\sqrt{N\log\left(\frac{6\log(N)}{\delta}\right)}+\beta_N(\delta)\left(1+\sqrt{H\Psi_N}\right)\sqrt{8  N d\max\left\{\kappa,1\right\}\log\left(1+\frac{N}{\kappa d}\right)}\\ &\qquad +4\sqrt{ N \log\left(\frac{6\log(N)}{\delta}\right) }+1.
\end{align*}
Now finally, by using this upper bound in inequality~\eqref{e:regret_sd_first_stage} we find that
\begin{align*}
    &\mathcal{R}(N)\\ &\le C_3\left[\frac{d^2\log^2(d\log(1+\frac{16N}{d\omega^2}))}{\omega^4}\sqrt{\log(N/\delta)}+\left(\frac{d\log\left(1+\frac{16N}{d\omega^2} \right)}{\log(3/2)}(N_{\mathsf{EUL}}+N_{\mathsf{EVAL}})\right)^{2/3}\right]^{3/2} \\&\qquad  +  \left( \frac{20}{3} \log\left( \frac{1}{\delta}\right)\right)^{3/2}+ 4 N^{2/3}\\
    &\qquad +8(2H+1)|\cS||\cA|\cdot \log\left(\frac{6(|\cS||\cA|H)^H(8H^2N)^{|\cS|}\log(N)}{\delta}\right)\\&\qquad+8(2H+1) \sqrt{\log\left(\frac{6(|\cS||\cA|H)^H(8NH^2)^{|\cS|}\log(N))}{\delta}\right) |\cS||\cA|N}  \\&\qquad +4H^2\sqrt{N\log\left(\frac{6\log(N)}{\delta}\right)}+\beta_N(\delta)\left(1+\sqrt{H\Psi_N}\right)\sqrt{8  N d\max\left\{\kappa,1\right\}\log\left(1+\frac{N}{\kappa d}\right)}\\ &\qquad +4\sqrt{ N \log\left(\frac{6\log(N)}{\delta}\right) }+1 \numberthis \label{e:full_regret_bound_sd} \\
    & = \widetilde{O}\left(\frac{\sqrt{\kappa  H}d}{\omega} (d^3+B^{3/2})N^{2/3}+\left[ H\sqrt{(H +|\mathcal{S}| ) |\mathcal{S}||\mathcal{A}|     } +H^2 \right]\sqrt{N} \right. \\&\left. \hspace{2in} +(H+|\cS|)H|\cS||\cA|+\frac{d^2}{\omega^2}\left(\frac{d^2}{\omega^2}+|\cS|^2|\cA|H^2\right)\right)
\end{align*}
where the last equality follows since by their definitions
\begin{align*}
N_{\mathsf{EUL}} & = \widetilde{\Theta}\left(\frac{|\cS|^2|\cA|H^2}{\omega^2}\right);\qquad 
N_{\mathsf{EVAL}}  = \widetilde{\Theta}\left(
\frac{d^3}{\omega^4}\right);\\
 \beta_N(\delta) &= \widetilde{O}\left(d^3+B^{3/2}\right);   \qquad
 \Psi_N  = \widetilde{O}\left(\frac{dN^{1/3}}{\omega^2}\right),
\end{align*}
 and by simplifying the expression in equation~\eqref{e:full_regret_bound_sd}. This bound holds with probability $1-12N\delta$. Recalling that $\bar{\delta}=12N\delta$ completes our proof.
\end{proof}

\section{A Dynamic Programming Approach to Approximate $\pi^{(t)}$} 
\label{s:approx_dynamic_programming}
 In this section we present a computationally efficient dynamic programming algorithm that can be used to approximate the policy $\pi^{(t)}$ that is defined in equation~\eqref{equation::pi_t_new} in Algorithm~\ref{alg:GLM_unknown_under_explorability}. We will also provide a proof for Proposition~\ref{efficient_pi_t}. 
 
 To avoid clashes of notation with the other sections of the paper we denote policies using $\theta$ here. We assume that we are given a transition dynamics model $\bPr$, a vector $\w \in \mathbb{R}^{d}$, feature maps $\{\phi_h\}_{h\in [H]}$, a positive semi-definite matrix $\mathbf{\Sigma}$ and a bonus function $b_h: \cS \times \cA \to \mathbb{R}$ for every $h \in [H]$. Also assume that there exists $\zeta>0$ such that $\w^{\top}\phi(\tau) \in [-\zeta,\zeta]$, $\sum_{h}\lv \phi_h(s_h,a_h)\rv_{\mathbf{\Sigma}^{-1}} \in [0,\zeta]$ and $\sum_{h}b_h(s_h,a_h) \in [0,\zeta]$ for all $\tau \in \Gamma$. Finally let $w_{h}(s,a):=\w^{\top}\phi_h(s,a)$ and $v_h(s,a) := \lv \phi_h(s,a)\rv_{\mathbf{\Sigma}^{-1}}$.

 Given a policy $\theta$ and an initial state $s_1$ define an optimistic value-function with this vector $\mathbf{w}$, feature maps, positive semi-definite matrix $\mathbf{\Sigma}$ and bonuses $\{b_h\}_{h\in [H]}$ as
 \begin{align*}
     &\bar{V}^{\theta}_{\mathsf{opt}}\\&:=   \E_{s_1\sim \rho,\tau \sim \bPr^{\theta}(\cdot | s_1)}\bigg[ \min\bigg\{\mu\left( \w^{\top}\phi(\tau)\right)+\sum_{h=1}^H \lv \phi_h(s_h,a_h)\rv_{\mathbf{\Sigma}^{-1}},1\bigg\}+ \sum_{h=1}^H b_{h}(s_h,a_h)\bigg]\\
     &=\E_{s_1\sim \rho,\;\tau \sim \bPr^{\theta}(\cdot | s_1)}\bigg[ \min\bigg\{\mu\left(\sum_{h=1}^H w_h(s_h,a_h)\right)+\sum_{h=1}^Hv_h(s_h,a_h),1\bigg\}+ \sum_{h=1}^H b_{h}(s_h,a_h)\bigg].
 \end{align*}
 Define the optimal policy with respect to this optimistic value function:
 \begin{align*}
     \theta_{\star} \in \argmax_{\theta \in \Pi} \bar{V}^{\theta}_{\mathsf{opt}}.
 \end{align*}
 Our goal is to find an $\epsilon$-optimal policy $\htheta= (\htheta_1,\ldots,\htheta_H)$ that satisfies
 \begin{align*}
     \bar{V}_{\mathsf{opt}}^{\theta_{\star}}-\bar{V}_{\mathsf{opt}}^{\htheta} \le \epsilon.
 \end{align*}
 Also define the conditional optimistic value-function at any step $h \in [H]$:
 \begin{align}
    \notag& \bar{V}^{\theta}_h(s,\tau'_{h-1}):=\E_{\tau \sim \bPr^{\theta}}\left[ \min\left\{\mu\left(\sum_{\ell=1}^H w_{\ell}(s_{\ell},a_{\ell})\right)+\sum_{\ell=1}^H v_{\ell}(s_{\ell},a_{\ell}),1\right\} \right.\\&\hspace{2.5in}+\left. \sum_{\ell=1}^H b_{\ell}(s_{\ell},a_{\ell}) \; \bigg| \; s_h=s,\tau_{h-1}=\tau'_{h-1}\right]. \label{def:bar_V_dp}
 \end{align}
 
% \begin{definition}
%  \label{def:function_evaluation} An optimistic-value function oracle computes of the value of the function
%  \begin{align*}
%      &\min\left\{\mu\left(x+\sum_{\ell=h}^H w_{\ell}(s_\ell,a_\ell)\right)+y+\sum_{\ell=h}^H v_{\ell}(s_{\ell},a_{\ell}),1\right\}+z+\sum_{\ell=h}^H b_\ell(s_\ell,a_\ell)
%  \end{align*}
%  when queried at any $x,y,z \in \R$, any $h \in [H]$, $s_h,\ldots,s_H \in \cS$, and $a_h,\ldots,a_H \in \cA$. Also define one unit of memory to be the the number of bits required to store a real number (up to machine precision).
%  \end{definition}
 %We will show that it is possible to find an $\epsilon$-optimal policy with a polynomial number of optimistic-value function oracle queries and using polynomial units of memory.
 
 Define $m := \ceil{\frac{\zeta - (-\zeta)}{\epsilon/(6H^2)}}=\ceil{\frac{12H^2\zeta}{\epsilon}}$ intervals
 \begin{align*}
     &\psi_j :=\left[-\zeta+\frac{(j-1)\epsilon}{6H^2},-\zeta+\frac{j\epsilon}{6H^2}\right), \;\;\; \text{if } j\in \{1,\ldots,m-1\}\\
     \text{ and, }\quad &\psi_m := \left[-\zeta+\frac{(m-1)\epsilon}{6H^2},\zeta\right].
 \end{align*}
 The centers of these intervals are $\nu_j := -\zeta+\frac{(j-\frac{1}{2})\epsilon}{6H^2}$ for every $j \in [m]$. Define a map $\sigma:[-\zeta,\zeta] \to \{1,\ldots,m\}$ that maps each $x$ to the index of interval that $x$ lies in, $$\sigma(x) = j, \quad \text{ if } x \in \psi_j.$$ 
Our dynamic programming approach will require us to define tensors $\widehat{a}_h$ and $\widehat{V}_h$ for every $h \in [H]$. Given any quartet $(s,i,j,k)\in \cS\times [m]\times [m]\times[m]$ define the following at the final step $H$
 \begin{align*}
 \widehat{a}_{H}(s,i,j,k)&\in\argmax_{a \in \cA}\left\{\min\left\{\mu\left(\nu_i+w_H(s,a)\right)+\nu_j+v_H(s,a),1\right\}+\nu_k+b_H(s,a)\right\};\\
 \widehat{V}_H(s,i,j,k)&:= \max_{a\in \cA}\left\{\min\left\{\mu\left(\nu_i+w_H(s,a)\right)+\nu_j+v_H(s,a),1\right\}+\nu_k+b_H(s,a)\right\}.
 \end{align*}
 The action $\widehat{a}_H(s,i,j,k)$ is the optimal action when the state is $s$ and the ``histories'' $\sum_{h=1}^{H-1} w_{h}(s_h,a_h)$, $\sum_{h=1}^{H-1} v_{h}(s_h,a_h)$ and $\sum_{h=1}^{H-1}b_h(s_h,a_h)$ are equal to $\nu_i$, $\nu_j$ and $\nu_k$ respectively. Further, the tensor $\widehat{V}_H(s,i,j,k)$ stores the value of the conditional value function when this optimal action is taken given this quartet.
 Also recursively define the following in the preceding steps:
 \begin{align*}
 &\widehat{a}_{h}(s,i,j,k)\in  \argmax_{a \in \cA} \E_{s'\sim \bPr(\cdot|s,a)}\left[\widehat{V}_{h+1}(s',\sigma(w_h(s,a)+\nu_i),\sigma(v_h(s,a)+\nu_j),\sigma(b_h(s,a)+\nu_k))\right];\\
     &\widehat{V}_h(s,i,j,k):= \max_{a \in \cA} \E_{s'\sim \bPr(\cdot|s,a)}\left[\widehat{V}_{h+1}(s',\sigma(w_h(s,a)+\nu_i),\sigma(v_h(s,a)+\nu_j),\sigma(b_h(s,a)+\nu_k))\right].
 \end{align*}
 At the initial step $h=1$ the expectation over the states $s' \sim \bar{\Pr}(\cdot | s,a)$ in the definition above is replaced by the expectation over the initial state $s_1\sim \rho$.
 
 To construct $\htheta$, our strategy will be to use these near-optimal actions ($\widehat{a}_h$) at every step over this ``$\frac{\epsilon}{6H^2}$-net'' of representative histories that is defined. Then given any state and sub-trajectory we will map this state and sub-trajectory to its nearest neighbor in the net of histories and play the near-optimal action corresponding to this neighbor. To this end define the maps
  \begin{align}\startsubequation\subequationlabel{e:allequations_i} \tagsubequation \label{e:i_proj}
     i_h(\tau_{h-1}) &:= \sigma\left( \sum_{\ell=1}^{h-1} w_\ell(s_\ell,a_\ell)\right), \\
   \tagsubequation  j_h(\tau_{h-1}) &:= \sigma\left( \sum_{\ell=1}^{h-1} v_\ell(s_\ell,a_\ell)\right), \label{e:j_proj}
     \quad\text{and}\\\tagsubequation
     k_h(\tau_{h-1}) &:= \sigma\left( \sum_{\ell=1}^{h-1} b_\ell(s_\ell,a_\ell)\right).\label{e:k_proj}
 \end{align}
  At times we will use $i_h, j_h$ and $k_h$ as shorthand for $i_h(\tau_{h-1}),j_h(\tau_{h-1})$ and $k_h(\tau_{h-1})$ respectively. Given a state $s$ and sub-trajectory $\tau_{h-1}$ the policy at step $h\in [H]$, $\htheta_h(\cdot|s,\tau_{h-1})$ puts all of its mass on the action
 \begin{align*}
     \widehat{a}_h\left(s_h,i_h(\tau_{h-1}),j_{h}(\tau_{h-1}),k_{h}(\tau_{h-1})\right)
 \end{align*}
 (where we break ties among actions arbitrarily). Given a policy $\theta$, let $\theta_{h:H} = (\theta_h,\ldots,\theta_H)$ denote the set of policies from step $h$ onward. Let $\bar{P}^{\theta_{h:H}}(\cdot | s)$ denote the distribution of the trajectory in the steps $h,\ldots,H$ given that the state at step $h-1$ was $s$. Finally define the extended conditional value-functions for the policy $\widehat{\theta}$ to be
 \begin{align*}
     &\widecheck{V}_h^{\htheta_{h+1:H}}(s,\alpha,\beta,\gamma) :=\\&\E_{\tau \sim \bPr^{\htheta_{h+1:H}}(\cdot | s)}\left[\min\Bigg\{\mu\left(\alpha+w_h(s,\widehat{a}_h(s,\sigma(\alpha),\sigma(\beta),\sigma(\gamma)))+\sum_{\ell=h+1}^H w_\ell(s_\ell,a_\ell) \right)\right.\\ &\hspace{1.5in}\left.+ \beta+v_{h}(s,\widehat{a}_h(s,\sigma(\alpha),\sigma(\beta),\sigma(\gamma)))+\sum_{\ell=h+1}^H v_\ell(s_\ell,a_\ell) ,1\Bigg\}\right.\\&\left.\hspace{2in}+\gamma+b_h(s,\widehat{a}_h(s,\sigma(\alpha),\sigma(\beta),\sigma(\gamma)))+\sum_{\ell=h+1}^Hb_\ell(s_\ell,a_\ell)\right]
 \end{align*}
 for any $h \in [H]$, $s \in \cS$, $\alpha \in [-\zeta,\zeta]$, $\beta \in [0,\zeta]$ and $\gamma \in [0,\zeta]$. In the definition above the expectation is over the steps $h+1,\ldots,H$. The extended value function is the definition of the conditional value function by using the summary of the history: $\alpha, \beta$ and $\gamma$.
 
% Given a state $s_{H}$ and sub-trajectory $\tau_{H-1}$ the policy $\htheta_H(\cdot|s_H,\tau_{H-1})$ puts all of its mass on the action
 %$\widehat{a}_H\left(s_H,i_H(\tau_{H-1}),j_{H}(\tau_{H-1}),k_{H}(\tau_{H-1})\right)$
 %(where we break ties among actions arbitrarily). Analogously, at previous steps the policy $\htheta_h(\cdot|s_h,\tau_{h-1})$ places all of its mass on the action
  %   $\widehat{a}_h\left(s_h,i_h(\tau_{h-1}),j_{h}(\tau_{h-1}),k_{h}(\tau_{h-1})\right)$.

\subsection{The Policy $\widehat{\theta}$ is $\epsilon$-Optimal}
 The following lemma shows that the policy $\htheta$ is $\epsilon$-optimal and can be found efficiently. We shall use this lemma to prove Proposition~\ref{efficient_pi_t} below.
 \begin{restatable}{lem}{stepHdyprogramming}\label{l:htheta_is_epsilon_optimal} The policy $\htheta$ satisfies
 $$\bar{V}_{\mathsf{opt}}^{\theta_{\star}} - \bar{V}_{\mathsf{opt}}^{\htheta} \le \epsilon.$$
 Furthermore the policy $\htheta$ can be found in $\mathsf{poly}\left(|\cS|,|\cA|,H,\zeta,\frac{1}{\epsilon}\right)$ time and memory.
 \end{restatable}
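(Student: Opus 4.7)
The plan is to prove both the near-optimality and the computational bound by a backward induction on the step index $h$, comparing the tabulated values $\widehat{V}_h(s,i,j,k)$ against the ``continuous-history'' optimal value function
\begin{equation*}
V_h^{\star,\mathsf{opt}}(s,\alpha,\beta,\gamma) := \max_{\theta_{h:H} \in \Pi_{h:H}} \E_{\tau \sim \bPr^{\theta_{h:H}}(\cdot|s)}\Big[\min\{\mu(\alpha + \textstyle\sum_{\ell=h}^H w_\ell(s_\ell,a_\ell)) + \beta + \sum_{\ell=h}^H v_\ell(s_\ell,a_\ell),1\} + \gamma + \sum_{\ell=h}^H b_\ell(s_\ell,a_\ell)\Big],
\end{equation*}
which satisfies its own Bellman optimality equation in the state $s$ with the summary variables $(\alpha,\beta,\gamma)$ updated additively. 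Note that $\bar V_{\mathsf{opt}}^{\theta_\star} = \E_{s_1\sim\rho}[V_1^{\star,\mathsf{opt}}(s_1,0,0,0)]$ and, by construction of $\widehat\theta$ (which simply plays $\widehat a_h$ on the projected history), $\bar V_{\mathsf{opt}}^{\widehat\theta} = \E_{s_1\sim\rho}[\widecheck V_1^{\widehat\theta_{2:H}}(s_1,0,0,0)]$.

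The first step is to establish a Lipschitz property: for each $h$, policy $\theta_{h:H}$, and state $s$, the map $(\alpha,\beta,\gamma) \mapsto \widecheck V_h^{\theta_{h:H}}(s,\alpha,\beta,\gamma)$ (and therefore its maximum $V_h^{\star,\mathsf{opt}}$) is $1$-Lipschitz in $\alpha$, $\beta$, and $\gamma$ separately. This uses that $z\mapsto\min\{z,1\}$ is $1$-Lipschitz, $\mu$ is $\frac14$-Lipschitz, and that $\gamma$ appears additively. Combined with the fact that $|\nu_{\sigma(x)}-x|\leq \epsilon/(12 H^2)$, replacing an exact coordinate by its center in the net costs at most $\epsilon/(12 H^2)$; summing over the three coordinates gives $\epsilon/(4H^2)$ per step.

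The second step is the backward induction. I will prove that for all $h \in [H]$, all $(s,i,j,k)$,
\begin{equation*}
\big|\widehat V_h(s,i,j,k) - V_h^{\star,\mathsf{opt}}(s,\nu_i,\nu_j,\nu_k)\big| \le \frac{(H-h+1)\epsilon}{4H}.
\end{equation*}
The base case $h=H$ is exact. For the inductive step, compare the tabulated Bellman update
$\widehat V_h = \max_a \E_{s'}[\widehat V_{h+1}(s',\sigma(w_h+\nu_i),\sigma(v_h+\nu_j),\sigma(b_h+\nu_k))]$
to the continuous Bellman update $V_h^{\star,\mathsf{opt}} = \max_a \E_{s'}[V_{h+1}^{\star,\mathsf{opt}}(s', \nu_i + w_h,\nu_j+v_h,\nu_k+b_h)]$; the induction hypothesis controls the difference at the net points, and the Lipschitz bound from the previous paragraph controls the projection error, yielding an additional $\epsilon/(4H^2)$. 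A parallel induction shows that the value of $\widehat\theta$ (with the exact running history projected back to the nearest net point at every step) differs from $\widehat V_h$ at the corresponding discretized summary by at most $(H-h+1)\epsilon/(4H)$: at each step, the exact summary is within $\epsilon/(12H^2)$ of its projection, so by the Lipschitz property the expected continuation cost differs by at most $\epsilon/(4H^2)$. Combining the two bounds at $h=1$ and taking expectation over $s_1 \sim \rho$ yields $\bar V_{\mathsf{opt}}^{\theta_\star} - \bar V_{\mathsf{opt}}^{\widehat\theta} \le \epsilon/2 \le \epsilon$.

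For the running-time claim, the table $\widehat V_h$ contains $|\cS|\cdot m^3$ entries with $m = \lceil 12 H^2 \zeta/\epsilon\rceil$; each entry requires an $\argmax$ over $|\cA|$ actions and an expectation over $|\cS|$ next states, so filling all $H$ tables costs $O(H |\cS|^2 |\cA| m^3)$ arithmetic operations, and evaluating $\widehat\theta$ at any encountered history requires only three projections $\sigma(\cdot)$ and a table lookup. Instantiating in the setting of Proposition~\ref{efficient_pi_t}, the bonus $\sqrt{\kappa}\beta_t(\delta)\sum_h \lv\phi_h(s_h,a_h)\rv_{\mathbf{\Sigma}_t^{-1}}$ is bounded by $H\beta_t(\delta)$ (using $\mathbf{\Sigma}_t \succeq \kappa\mathbf{I}$ and $\|\phi_h\|\le 1$), the bonus $\sum_h \xi^{(t)}_{s_h,a_h}\leq 2H$, and $|\widehat\w_t^\top\phi(\tau)|\le \lv\widehat\w_t\rv_2$, so $\zeta = \mathsf{poly}(H,B,d,\lv\widehat\w_t\rv_2,\log(N/\delta))$ suffices; this establishes the $\mathsf{poly}(|\cS|,|\cA|,H,d,B,\lv\widehat\w_t\rv_2,1/\epsilon,\log(N/\delta))$ bound. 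The main obstacle is the careful Lipschitz analysis through the nonlinearities $\mu$ and $\min\{\cdot,1\}$ applied to an inner sum that mixes the discretized ``prefix'' with the random ``suffix''; once this is cleanly formulated, the rest is a standard approximate-DP argument.
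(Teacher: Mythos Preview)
Your proposal is correct and takes a route that is close in spirit to the paper's but organized differently. Both arguments hinge on the same Lipschitz observation (that $z\mapsto\min\{z,1\}$ and $\mu$ are $1$-Lipschitz, so the extended value functions are $1$-Lipschitz in each of $\alpha,\beta,\gamma$) and on a backward induction that accumulates an $\epsilon/(4H^2)$ discretization error per step. The difference is in how the gap $\bar V_{\mathsf{opt}}^{\theta_\star}-\bar V_{\mathsf{opt}}^{\htheta}$ is reached: you carry \emph{two} inductions, one comparing the table $\widehat V_h$ to the continuous optimal value $V_h^{\star,\mathsf{opt}}$ and a parallel one comparing $\widehat V_h$ to $\widecheck V_h^{\htheta}$, and then combine by triangle inequality at $h=1$. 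The paper instead only proves your second induction (its part~(a)), establishes a ``one-step improvement'' bound (its part~(b): $\max_a\E_{s'}[\bar V_{h+1}^{\htheta}(s',\{s,a,\tau_{h-1}\})]-\bar V_h^{\htheta}(s,\tau_{h-1})\le (H+1-h)\epsilon/H^2$), and then uses a hybrid-policy telescoping $\bar V^{\theta_{\star h}}-\bar V^{\theta_{\star h-1}}$ to reach $\theta_\star$. Your route is arguably cleaner and is the standard approximate-DP argument; the paper's hybrid decomposition avoids ever defining $V_h^{\star,\mathsf{opt}}$ explicitly but is a bit longer. One small point: in your parallel induction the base case at $h=H$ is not exact (there is already a projection error of at most $\epsilon/(4H^2)$), unlike your first induction where it genuinely is; your stated bound $(H-h+1)\epsilon/(4H)$ still absorbs this, but it is worth saying. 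The computational analysis and the instantiation for Proposition~\ref{efficient_pi_t} match the paper's (the paper squeezes $\sqrt H\beta_t(\delta)$ rather than $H\beta_t(\delta)$ via Cauchy--Schwarz, but both are polynomial).
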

\begin{proof} The proof shall proceed in two steps. First, we shall show via an inductive argument that certain properties are satisfied at all steps. In the second part we will use these properties to prove the lemma. 
\paragraph{Part I: The inductive hypothesis.}
The induction will be over the steps $H,\ldots, 1$. We shall inductively show that: 
\begin{enumerate}[(a)]
\item For any $s \in \cS$, $\alpha \in [-\zeta,\zeta]$, $\beta \in [0,\zeta]$ and $\gamma \in [0,\zeta]$:
\begin{align*}
    \left|\widecheck{V}_h^{\htheta_{h:H}}(s,\alpha,\beta,\gamma)-\widehat{V}_h(s,\sigma(\alpha),\sigma(\beta),\sigma(\gamma))\right| \le \frac{(H+1-h)\epsilon}{2H^2};
\end{align*}
    \item for any $s \in \cS$ and $\tau_{h-1} \in \Gamma_{h-1}$
\begin{align*}
    \max_{a \in \cA}\mathbb{E}_{s'\sim \bPr(\cdot|s,a)}\left[\bar{V}_{h+1}^{\htheta_{h+1:H}}(s',\{s,a,\tau_{h-1}\})\right]- \bar{V}_h^{\htheta_{h:H}}(s,\tau_{h-1}) \le \frac{(H+1-h)\epsilon}{H^2};
\end{align*}
\item given the tensor $\widehat{V}_{h+1}$ it is possible to find $\widehat{a}_{h}(s,i,j,k)$ and $\widehat{V}_h(s,i,j,k)$ for all quartets using $\mathsf{poly}\left(|\cS|,|\cA|,H,\zeta,\frac{1}{\epsilon}\right)$ time and memory.
\end{enumerate}
Note that $ \max_{a \in \cA}\mathbb{E}_{s'\sim \bPr(\cdot|s,a)}\left[\bar{V}_{h+1}^{\htheta_{h+1:H}}(s',\{s,a,\tau_{h-1}\})\right]$ corresponds to the conditional-value (see the Definition of $\bar{V}$ in equation~\eqref{def:bar_V_dp}) of taking the best action at step $h$ when the policy for the future steps is $\htheta_{h+1:H}$.

\paragraph{Base case:} The base case of the induction is at step $H$.

\textit{Part~(a):}Fix an $\alpha, \beta$ and $\gamma$ and define the shorthand $\widehat{a}_H := \widehat{a}_H(s,\sigma(\alpha),\sigma(\beta),\sigma(\gamma))$. By the definition of $\widecheck{V}_H$, $\widehat{V}_{H}$ and the policy $\widehat{\theta}$ we have
\begin{align*}
    &\left|\widecheck{V}_H^{\htheta_H}(s,\alpha,\beta,\gamma)-\widehat{V}_H(s,\sigma(\alpha),\sigma(\beta),\sigma(\gamma))\right|\\ &\qquad =  \bigg|\min\bigg\{\mu\left(\alpha+w_H(s,\widehat{a}_h)\right)+\beta+v_{H}(s,\widehat{a}_H) ,1\bigg\}+\gamma+b_H(s,\widehat{a}_H)\\&\qquad\qquad -\min\left\{\mu\left(\nu_{\sigma(\alpha)}+w_H(s,\widehat{a}_H)\right)+\nu_{\sigma(\beta)}+v_H(s,\widehat{a}_H),1\right\}+\nu_{\sigma(\gamma)}+b_H(s,\widehat{a}_H)\bigg| \\
    & \qquad\overset{(i)}{\le} \left| \mu\left(\alpha+w_H(s,\widehat{a}_h)\right)-\mu\left(\nu_{\sigma(\alpha)}+w_H(s,\widehat{a}_H)\right)\right| + |\beta - \nu_{\sigma(\beta)}|+ |\gamma - \nu_{\sigma(\gamma)}|\\
    & \qquad\overset{(ii)}{\le} \left| \alpha-\nu_{\sigma(\alpha)}\right| + |\beta - \nu_{\sigma(\beta)}|+ |\gamma - \nu_{\sigma(\gamma)}|  \overset{(iii)}\le 3 \times \frac{\epsilon}{6H^2} = \frac{\epsilon}{2H^2}
\end{align*}
where $(i)$ follows since the function $z\mapsto \min(z,1)$ is $1$-Lipschitz and by the triangle inequality, and $(ii)$ follows since $\mu$ is $1$-Lipschitz, and $(iii)$ follows by the definition of the function $\sigma$, that projects a number onto a grid with granularity $\epsilon/(6H^2)$.

\textit{Part~(b):} An episode terminates at the end of step $H$, therefore we define $\bar{V}_{H+1}(s',\tau_{H}) := \min\left\{\mu\left(\sum_{h=1}^{H} w_h(s_h,a_h)\right)+\sum_{h=1}^{H} v_h(s_h,a_h),1\right\}+\sum_{h=1}^{H} b_h(s_h,a_h)$. Thus, by the definition of the extended conditional value function $\widecheck{V}_H$
\begin{align*}
   & \max_{a \in \cA}\E _{s'\sim \bPr(\cdot|s,a)}\left[\bar{V}_{H+1}(s',\{s,a,\tau_{H-1}\})\right]- \bar{V}_H^{\htheta_H}(s,\tau_{H-1})\\
   & = \max_{a \in \cA}\E _{s'\sim \bPr(\cdot|s,a)}\left[\bar{V}_{H+1}(s',\{s,a,\tau_{H-1}\})\right]- \widecheck{V}_H^{\htheta_H}\left(s,\sum_{\ell=1}^{H-1}w_\ell(s_{\ell},a_{\ell}),\sum_{\ell=1}^{H-1}v_\ell(s_{\ell},a_{\ell}),\sum_{\ell=1}^{H-1}b_\ell(s_{\ell},a_{\ell})\right) \\
   & \overset{(i)}{=} \max_{a \in \cA}\E _{s'\sim \bPr(\cdot|s,a)}\left[\bar{V}_{H+1}(s',\{s,a,\tau_{H-1}\})\right]-\widehat{V}_{H}^{\htheta_H}(s,i_{H},j_{H},k_{H})\\ &\qquad \qquad \qquad \qquad +\widehat{V}_{H}^{\htheta_H}(s,i_{H},j_{H},k_{H})-\widecheck{V}_H^{\htheta_H}\left(s,\sum_{\ell=1}^{H-1}w_\ell(s_{\ell},a_{\ell}),\sum_{\ell=1}^{H-1}v_\ell(s_{\ell},a_{\ell}),\sum_{\ell=1}^{H-1}b_\ell(s_{\ell},a_{\ell})\right) \\
   & \overset{(ii)}{\le} \max_{a \in \cA}\E _{s'\sim \bPr(\cdot|s,a)}\left[\bar{V}_{H+1}(s',\{s,a,\tau_{H-1}\})\right]-\widehat{V}_{H}^{\htheta_H}(s,i_{H},j_{H},k_{H}) + \frac{\epsilon}{2H^2}
   \end{align*}
   where in $(i)$ recall the definitions  of $i_{H},j_{H}$ and $k_{H}$ from above in equations~\eqref{e:i_proj}-\eqref{e:k_proj}, and $(ii)$ follows by Part~$(a)$ of the induction hypothesis. Continuing
   \begin{align*}
   &\max_{a \in \cA}\E _{s'\sim \bPr(\cdot|s,a)}\left[\bar{V}_{H+1}(s',\{s,a,\tau_{H-1}\})\right]- \bar{V}_H^{\htheta_H}(s,\tau_{H-1})
   \\& \overset{(i)}{\le} \max_{a \in \cA}\bigg\{\min\bigg\{\mu\left(\sum_{h=1}^{H-1} w_h(s_h,a_h)+w_H(s,a)\right)+\sum_{h=1}^{H-1} v_h(s_h,a_h)+v_H(s,a),1\bigg\}\\& \hspace{3in}+\sum_{h=1}^{H-1} b_h(s_h,a_h)+b_H(s,a)\bigg\}\\
   & \qquad -\max_{a' \in \cA}\left\{\min\left\{\mu\left(\nu_{i_H}+w_H(s,a')+\nu_{j_H}+v_H(s,a')\right),1\right\}+\nu_{k_H}+b_H(s,a')\right\} +\frac{\epsilon}{2H^2}\\
   & \le\max_{a \in \cA}\Bigg\{\min\left\{\mu\left(w_H(s,a)+\sum_{h=1}^{H-1} w_h(s_h,a_h)\right)+\sum_{h=1}^{H-1}v_h(s_h,a_h)+v_H(s,a),1\right\}\\ & \hspace{0.8in}+\sum_{h=1}^{H-1} b_h(s_h,a_h)+b_H(s,a)
    \\ &\hspace{1in} -\min\left\{\mu\left(w_H(s,a)+\nu_{i_H}\right)+v_H(s,a)+\nu_{j_H},1\right\}-\nu_{k_H}-b_H(s,a)\Bigg\} +\frac{\epsilon}{2H^2}\\
   %& \qquad \qquad=\max_{a \in \cA}\left\{\frac{1}{1+\exp\left(-w_H(s,a)-\sum_{h=1}^{H-1} w_h(s_h,a_h)\right)}-\frac{1}{1+\exp\left(-w_H(s,a)-\nu_{i_H}\right)}\right.\\
   %& \qquad \qquad\qquad \qquad\left.+\sum_{h=1}^{H-1} b_h(s_h,a_h)-\nu_{j_H}\right\} \\
   %& \qquad \qquad\le\max_{a \in \cA}\left\{\left|\frac{1}{1+\exp\left(-w_H(s,a)-\sum_{h=1}^{H-1} w_h(s_h,a_h)\right)}-\frac{1}{1+\exp\left(-w_H(s,a)-\nu_{i_H}\right)}\right|\right.\\
   %& \qquad \qquad\qquad \qquad\left.+\left|\sum_{h=1}^{H-1} b_h(s_h,a_h)-\nu_{j_H}\right|\right\} \\
   &\overset{(ii)}{\le}\left|\sum_{h=1}^{H-1} w_h(s_h,a_h)-\nu_{i_H}\right|+\left|\sum_{h=1}^{H-1} v_h(s_h,a_h)-\nu_{j_H}\right|+\left|\sum_{h=1}^{H-1} b_h(s_h,a_h)-\nu_{k_H}\right| + \frac{\epsilon}{2H^2}\overset{(iii)}{\le} \frac{\epsilon}{H^2}.
\end{align*}
where $(i)$ follows by the definition of $\widehat{V}_H(s,i_H,j_H,k_H)$, $(ii)$ follows because the functions $z\mapsto \min\{z,1\}$ and $z \mapsto \frac{1}{1+\exp(-z)}$ are both $1$-Lipschitz, and $(iii)$ follows by the definition of the maps $i_H,j_H$ and $k_H$, and the intervals $\psi_j$. This proves the second part of the inductive hypothesis in the base case. 

\textit{Part~(c):} Let's show $\widehat{a}_H(s,i,j,k)$ and $\widehat{V}_H(s,i,j,k)$ can be computed efficiently. Fix a quartet  $(s,i,j,k) \in \cS \times [m] \times [m]\times [m]$. Then the values
\begin{align*} 
\widehat{a}_H(s,i,j,k)&\in  \argmax_{a \in \cA}\left\{\min\left\{\mu\left(\nu_{i}+w_H(s,a)\right)+\nu_j +v_H(s,a),1\right\}+\nu_{k}+b_H(s,a)\right\}\\
\widehat{V}_H(s,i,j,k)&=  \max_{a \in \cA}\left\{\min\left\{\mu\left(\nu_{i}+w_H(s,a)\right)+\nu_j +v_H(s,a),1\right\}+\nu_{k}+b_H(s,a)\right\}
\end{align*}
can be found using $\mathsf{poly}(|\cA|)$ time and memory.  Therefore, the entire tensor can be found using $|\cS|m^3 \times \mathsf{poly}(|\cA|) = \mathsf{poly}(|\cS|,|\cA|, H,\zeta,\frac{1}{\epsilon})$ time and memory.

\paragraph{Induction step:} Assume that the induction hypothesis holds at the steps $H,\ldots,h+1$. We will now prove that each part of the induction hypothesis also holds at the step $h\ge 1$. 

\textit{Part~(a):} Fix an $\alpha, \beta$ and $\gamma$ and let's define the shorthand $\widehat{a}_h = \widehat{a}_h(s,\sigma(\alpha),\sigma(\beta),\sigma(\gamma))$. Hence\footnote{In the arguments that follow when $h=1$, the outer expectation $\E_{s' \sim \bar{P}(\cdot | s,\widehat{a}_h(s,\tau_{h-1}))}$ is replaced by $\E_{s_1 \sim \rho}$ however the same arguments remain unchanged.},
\begin{align*}
    &\left|\widecheck{V}_h^{\htheta_{h:H}}(s,\alpha,\beta,\gamma)-\widehat{V}_h(s,\sigma(\alpha),\sigma(\beta),\sigma(\gamma))\right|\\ &= \bigg|\E_{s'\sim \bPr(\cdot|s,\widehat{a}_h(s,\tau_{h-1}))}\left[\widecheck{V}_{h+1}^{\htheta_{h+1:H}}\left(s',\alpha+w_h(s,\widehat{a}_h),\beta+v_h(s,\widehat{a}_h),\gamma+b_h(s,\widehat{a}_h)\right)\right.\\&\left.\qquad \qquad -\widehat{V}_{h+1}\left(s',\sigma\left(w_h(s,\widehat{a}_h)+\nu_{\sigma(\alpha)}\right),\sigma\left(v_h(s,\widehat{a}_h)+\nu_{\sigma(\beta)}\right),\sigma\left(b_h(s,\widehat{a}_h)+\nu_{\sigma(\gamma)}\right)\right)\right]\bigg|\\
    & =\bigg|\E_{s'\sim \bPr(\cdot|s,\widehat{a}_h(s,\tau_{h-1}))}\left[\widecheck{V}_{h+1}^{\htheta_{h+1:H}}\left(s',\alpha+w_h(s,\widehat{a}_h),\beta+v_h(s,\widehat{a}_h),\gamma+b_h(s,\widehat{a}_h)\right)\right.\\
    &\left.\qquad \qquad -\widecheck{V}_{h+1}^{\htheta_{h+1:H}}\left(s',w_h(s,\widehat{a}_h)+\nu_{\sigma(\alpha)},v_h(s,\widehat{a}_h)+\nu_{\sigma(\beta)},b_h(s,\widehat{a}_h)+\nu_{\sigma(\gamma)}\right)\right.\\
    &\left.\qquad \qquad +\widecheck{V}_{h+1}^{\htheta_{h+1:H}}\left(s',w_h(s,\widehat{a}_h)+\nu_{\sigma(\alpha)},v_h(s,\widehat{a}_h)+\nu_{\sigma(\beta)},b_h(s,\widehat{a}_h)+\nu_{\sigma(\gamma)}\right)\right.\\
    &\left.\qquad \qquad -\widehat{V}_{h+1}\left(s',\sigma\left(w_h(s,\widehat{a}_h)+\nu_{\sigma(\alpha)}\right),\sigma\left(v_h(s,\widehat{a}_h)+\nu_{\sigma(\beta)}\right),\sigma\left(b_h(s,\widehat{a}_h)+\nu_{\sigma(\gamma)}\right)\right)\right]\bigg|\\
    & \le \bigg|\E_{s'\sim \bPr(\cdot|s,\widehat{a}_h(s,\tau_{h-1}))}\left[\widecheck{V}_{h+1}^{\htheta_{h+1:H}}\left(s',\alpha+w_h(s,\widehat{a}_h),\beta+v_h(s,\widehat{a}_h),\gamma+b_h(s,\widehat{a}_h)\right)\right.\\
    &\left.\qquad  -\widecheck{V}_{h+1}^{\htheta_{h+1:H}}\left(s',w_h(s,\widehat{a}_h)+\nu_{\sigma(\alpha)},v_h(s,\widehat{a}_h)+\nu_{\sigma(\beta)},b_h(s,\widehat{a}_h)+\nu_{\sigma(\gamma)}\right)\right]\bigg| + \frac{(H-h)\epsilon}{2H^2} \numberthis \label{e:part_a_induction_dp_midway}
\end{align*}
where the last inequality follows by Part~(a) of the inductive hypothesis at step $h+1$. Let us now bound
\begin{align*}
   & \bigg| \widecheck{V}_{h+1}^{\htheta_{h+1:H}}\left(s',\alpha+w_h(s,\widehat{a}_h),\beta+b_h(s,\widehat{a}_h)\right) - \widecheck{V}_{h+1}^{\htheta_{h+1:H}}(s',w_h\left(s,\widehat{a}_h)+\nu_{\sigma(\alpha)},b_h(s,\widehat{a}_h)+\nu_{\sigma(\beta)}\right)\bigg| \\
    & = \Bigg|\E_{\tau \sim \bPr^{\htheta_{h+1:H}}}\left[\min\left\{\mu\left(\alpha+w_h(s,\widehat{a}_h)+\sum_{\ell=h+1}^Hw_\ell(s_\ell,a_\ell)\right)+\beta+v_h(s,\widehat{a}_h)+\sum_{\ell=h+1}^Hv_\ell(s_\ell,a_\ell),1\right\}\right.\\& \qquad \left.+\gamma+b_h(s,\widehat{a}_h)+\sum_{\ell=h+1}^Hb_\ell(s_\ell,a_\ell)\;\bigg|\; s_{h+1}=s',\tau_{h}=\{s,\widehat{a}_h,\tau_{h-1}\}\right]\\
    &-\E_{\tau \sim \bPr^{\htheta_{h+1:H}}}\left[\min\left\{\mu\left(\nu_{\sigma(\alpha)} +w_h(s,\widehat{a}_h)+\sum_{\ell=h+1}^Hw_\ell(s_\ell,a_\ell)\right)+\nu_{\sigma(\beta)} +v_h(s,\widehat{a}_h)+\sum_{\ell=h+1}^Hv_\ell(s_\ell,a_\ell),1\right\}\right.\\& \qquad \qquad  \left.+\nu_{\sigma(\gamma)}+b_h(s,\widehat{a}_h)+\sum_{\ell=h+1}^Hb_\ell(s_\ell,a_\ell)\;\bigg|\; s_{h+1}=s',\tau_{h}=\{s,\widehat{a}_h,\tau_{h-1}\}\right]\Bigg|.
\end{align*}
Since the functions $z\mapsto \min\{z,1\}$ and $z \mapsto \frac{1}{1+\exp(-z)}$ are $1$-Lipschitz, therefore
\begin{align*}
    &\left|\widecheck{V}_{h+1}^{\htheta_{h+1:H}}\left(s',\alpha+w_h(s,\widehat{a}_h),\beta+b_h(s,\widehat{a}_h)\right) - \widecheck{V}_{h+1}^{\htheta_{h+1:H}}(s',w_h\left(s,\widehat{a}_h)+\nu_{\sigma(\alpha)},b_h(s,\widehat{a}_h)+\nu_{\sigma(\beta)}\right)\right| \\& \qquad \qquad \qquad \le |\alpha - \nu_{\sigma(\alpha)}|+|\beta-\nu_{\sigma(\beta)}|+|\gamma-\nu_{\sigma(\gamma)}|\le \frac{\epsilon}{2H^2}. \label{e:induction_part_a_lipschitz_argument} \numberthis
\end{align*}
This combined with inequality~\eqref{e:part_a_induction_dp_midway} shows that
\begin{align*}
    \left|\widecheck{V}_h^{\htheta_{h:H}}(s,\alpha,\beta)-\widehat{V}_h(s,\sigma(\alpha),\sigma(\beta))\right| \le \frac{(H+1-h)\epsilon}{2H^2}
\end{align*}
and completes the proof of the first part of the induction step.

\textit{Part~(b):} Here let $\widehat{a}_h$ be shorthand for $\widehat{a}_h(s,\sigma(\sum_{\ell=1}^{h-1}w_\ell(s_\ell,a_\ell)),\sigma(\sum_{\ell=1}^{h-1}v_\ell(s_\ell,a_\ell)),\sigma(\sum_{\ell=1}^{h-1}b_\ell(s_\ell,a_\ell)))$. Since the policy $\htheta_h$ picks the action $\widehat{a}_h$
\begin{align*}
    &\bar{V}_{h}^{\htheta_{h:H}}(s,\tau_{h-1})\\ &\hspace{0.1in}= \E_{s' \sim \bPr(\cdot|s,\widehat{a}_h)}\left[\bar{V}_{h+1}^{\htheta_{h+1:H}}(s',\{s,\widehat{a}_h,\tau_{h-1}\})\right]\\
    &\hspace{0.1in}=\E_{s' \sim \bPr(\cdot|s,\widehat{a}_h)}\bigg[\bar{V}_{h+1}^{\htheta_{h+1:H}}(s',\{s,\widehat{a}_h,\tau_{h-1}\})\\&\hspace{0.5in}-\widecheck{V}_{h+1}^{\htheta_{h+1:H}}\left(s',w_h(s,\widehat{a}_h)+\nu_{i_h},v_h(s,\widehat{a}_h)+\nu_{j_h},b_{h}(s,\widehat{a}_h)+\nu_{k_h}\right)\bigg]\\
    &\hspace{0.2in}\qquad  +\E_{s' \sim \bPr(\cdot|s,\widehat{a}_h)}\left[\widecheck{V}_{h+1}^{\htheta_{h+1:H}}\left(s',w_h(s,\widehat{a}_h)+\nu_{i_h},v_h(s,\widehat{a}_h)+\nu_{j_h},b_{h}(s,\widehat{a}_h)+\nu_{k_h}\right)\right] \numberthis \label{e:induction_step_part_b_midway}.
\end{align*}
We know that the difference of the first two terms in the expectation above
\begin{align*}
    &\bar{V}_{h+1}^{\htheta_{h+1:H}}(s',\{s,\widehat{a}_h,\tau_{h-1}\})-\widecheck{V}_{h+1}^{\htheta_{h+1:H}}\left(s',w_h(s,\widehat{a}_h)+\nu_{i_h},v_h(s,\widehat{a}_h)+\nu_{j_h},b_{h}(s,\widehat{a}_h)+\nu_{k_h}\right) \\
    &= \E_{\tau \sim \bPr^{\htheta_{h+1:H}}}\Bigg[\min\bigg\{\mu\bigg(\sum_{\ell=1}^{h-1}w_\ell(s_\ell,a_\ell)+w_h(s,\widehat{a}_h)+\sum_{\ell=h+1}^Hw_\ell(s_\ell,a_\ell)\bigg)\\&\qquad\qquad\qquad\qquad\qquad+\sum_{\ell=1}^{h-1}v_\ell(s_\ell,a_\ell)+v_h(s,\widehat{a}_h)+\sum_{\ell=h+1}^Hv_\ell(s_\ell,a_\ell),1\bigg\}\\&\qquad\qquad\qquad\qquad +\sum_{\ell=1}^{h-1}b_\ell(s_\ell,a_\ell)+b_h(s,\widehat{a}_h)+\sum_{\ell=h+1}^Hb_\ell(s_\ell,a_\ell)\\
    & \qquad\qquad\qquad  -\min\bigg\{\mu\bigg(\nu_{i_h}+w_h(s,\widehat{a}_h)+\sum_{\ell=h+1}^Hw_\ell(s_\ell,a_\ell)\bigg)\\& \qquad \qquad \qquad\qquad\qquad +\nu_{j_h}+v_h(s,\widehat{a}_h)+\sum_{\ell=h+1}^Hv_\ell(s_\ell,a_\ell),1\bigg\}\\& \qquad \qquad \qquad -\nu_{k_h}-b_h(s,\widehat{a}_h)-\sum_{\ell=h+1}^Hb_\ell(s_\ell,a_\ell)\;\bigg| \; s,\widehat{a}_h,\tau_{h-1}
    \Bigg] \\
    & \overset{(i)}{\ge} -\left|\sum_{\ell=1}^{h-1}w_\ell(s_\ell,a_\ell) - \nu_{i_h} \right|-\left|\sum_{\ell=1}^{h-1}v_\ell(s_\ell,a_\ell) - \nu_{j_h} \right| -\left|\sum_{\ell=1}^{h-1}b_\ell(s_\ell,a_\ell) - \nu_{k_h} \right| \overset{(ii)}{\ge} -\frac{\epsilon}{2H^2}.
\end{align*}
where $(i)$ follows since the functions $z\mapsto \min\{z,1\}$ and $z \mapsto \frac{1}{1+\exp(-z)}$ are $1$-Lipschitz and $(ii)$ follows since $\nu_{i_h},\nu_{j_h}$ and $\nu_{k_h}$ are the nearest neighbors of $\sum_{\ell=1}^{h-1}w_\ell(s_\ell,a_\ell),\sum_{\ell=1}^{h-1}v_\ell(s_\ell,a_\ell)$ and $\sum_{\ell=1}^{h-1}b_\ell(s_\ell,a_\ell)$ respectively in the $\frac{\epsilon}{6H^2}$ grid. This previous inequality combined with equation~\eqref{e:induction_step_part_b_midway} yields
\begin{align*}
   & \bar{V}_{h}^{\htheta_{h:H}}(s,\tau_{h-1}) \\
   &\qquad \ge \E_{s' \sim \bPr(\cdot|s,\widehat{a}_h)}\left[\widecheck{V}_{h+1}^{\htheta_{h+1:H}}\left(s',w_h(s,\widehat{a}_h)+\nu_{i_h},v_h(s,\widehat{a}_h)+\nu_{j_h},b_{h}(s,\widehat{a}_h)+\nu_{k_h}\right)\right]-\frac{\epsilon}{2H^2}.
   \end{align*}
   This relates the true conditional-value function to the extended value function $\widecheck{V}$. We will now continue further to relate the true conditional-value function to the surrogate $\widehat{V}$ that we can compute on the grid of histories. Continuing from the previous display above we get
   \begin{align*}
   & \bar{V}_{h}^{\htheta_{h:H}}(s,\tau_{h-1})\\
   &\ge \E_{s' \sim \bPr(\cdot|s,\widehat{a}_h)}\bigg[\widecheck{V}_{h+1}^{\htheta_{h+1:H}}\left(s',w_h(s,\widehat{a}_h)+\nu_{i_h},v_h(s,\widehat{a}_h)+\nu_{j_h},b_{h}(s,\widehat{a}_h)+\nu_{k_h}\right)\\& \quad  -\widehat{V}_{h+1}\left(s',\sigma\left(\nu_{i_h}+w_h(s,\widehat{a}_{h})\right),\sigma\left(\nu_{j_h}+v_h(s,\widehat{a}_{h})\right),\sigma\left(\nu_{k_h}+b_h(s,\widehat{a}_{h})\right)\right)\bigg]\\
    & \quad + \E_{s' \sim \bPr(\cdot|s,\widehat{a}_h)}\left[\widehat{V}_{h+1}\left(s',\sigma\left(\nu_{i_h}+w_h(s,\widehat{a}_{h})\right),\sigma\left(\nu_{j_h}+v_h(s,\widehat{a}_{h})\right),\sigma\left(\nu_{k_h}+b_h(s,\widehat{a}_{h})\right)\right)\right] -\frac{\epsilon}{2H^2}\\
    &\overset{(i)}{\ge} \E_{s' \sim \bPr(\cdot|s,\widehat{a}_h)}\left[\widehat{V}_{h+1}\left(s',\sigma\left(\nu_{i_h}+w_h(s,\widehat{a}_{h})\right),\sigma\left(\nu_{j_h}+v_h(s,\widehat{a}_{h})\right),\sigma\left(\nu_{k_h}+b_h(s,\widehat{a}_{h})\right)\right)\right] \\ & \hspace{3in}- \frac{(H-h)\epsilon}{2H^2}-\frac{\epsilon}{2H^2} \\
    & \overset{(ii)}{=} \max_{a \in \cA} \E_{s' \sim \bPr(\cdot|s,a)}\left[\widehat{V}_{h+1}\left(s',\sigma\left(\nu_{i_h}+w_h(s,a)\right),\sigma\left(\nu_{j_h}+v_h(s,a)\right),\sigma\left(\nu_{k_h}+b_h(s,a)\right)\right)\right]\\ &\hspace{3in}-\frac{(H+1-h)\epsilon}{2H^2} \numberthis \label{e:induction_part_2_lower_bound}
\end{align*}
where $(i)$ follows by using the first part of the induction hypothesis at step $h+1$ and $(ii)$ follows by the definition of $\widehat{a}_h$. With this lower bound in place let us now establish a bound on the quantity of interest
\begin{align*}
    &\max_{a \in \cA}\mathbb{E}_{s'\sim \bPr(\cdot|s,a)}\left[\bar{V}_{h+1}^{\htheta_{h+1:H}}(s',\{s,a,\tau_{h-1}\})\right]- \bar{V}_h^{\htheta_{h:H}}(s,\tau_{h-1}) \\&\overset{(i)}{\le} \max_{a \in \cA}\bigg\{\mathbb{E}_{s'\sim \bPr(\cdot|s,a)}\left[\bar{V}_{h+1}^{\htheta_{h+1:H}}(s',\{s,a,\tau_{h-1}\})\right]\bigg\} \\& \qquad -\max_{a \in \cA}\bigg\{ \E_{s' \sim \bPr(\cdot|s,a)}\left[\widehat{V}_{h+1}\left(s',\sigma\left(\nu_{i_h}+w_h(s,a)\right),\sigma\left(\nu_{j_h}+v_h(s,a)\right),\sigma\left(\nu_{k_h}+b_h(s,a)\right)\right)\right]\bigg\} \\ & \hspace{3in}+ \frac{(H+1-h)\epsilon}{2H^2} \\
    &\le \max_{a \in \cA} \bigg\{\mathbb{E}_{s'\sim \bPr(\cdot|s,a)}\bigg[\bar{V}_{h+1}^{\htheta_{h+1:H}}(s',\{s,a,\tau_{h-1}\})\\ &\hspace{0.8in}-\widehat{V}_{h+1}\left(s',\sigma\left(\nu_{i_h}+w_h(s,a)\right),\sigma\left(\nu_{j_h}+v_h(s,a)\right),\sigma\left(\nu_{k_h}+b_h(s,a)\right)\right)\bigg]\bigg\}\\ &\hspace{3in}+ \frac{(H+1-h)\epsilon}{2H^2} \numberthis \label{e:induction_part_2_midway}
    \end{align*}
    where $(i)$ follows by invoking inequality~\eqref{e:induction_part_2_lower_bound}. Note that by its definition \begin{align*}
        &\widecheck{V}_{h+1}^{\htheta_{h+1:H}}\left(s',\sum_{\ell=1}^{h-1}w_\ell(s_\ell,a_\ell)+w_h(s,a),\sum_{\ell=1}^{h-1}v_\ell(s_\ell,a_\ell)+v_h(s,a),\sum_{\ell=1}^{h-1}b_\ell(s_\ell,a_\ell)+b_h(s,a)\right)\\&\hspace{3in}=\bar{V}_{h+1}^{\htheta_{h+1:H}}(s',\{s,a,\tau_{h-1}\}),\end{align*}
    therefore continuing from inequality~\eqref{e:induction_part_2_midway}
    \begin{align*}
    &\max_{a \in \cA}\mathbb{E}_{s'\sim \bPr(\cdot|s,a)}\left[\bar{V}_{h+1}^{\htheta_{h+1:H}}(s',\{s,a,\tau_{h-1}\})\right]- \bar{V}_h^{\htheta_{h:H}}(s,\tau_{h-1})\\
    &\le \max_{a \in \cA} \bigg\{\mathbb{E}_{s'\sim \bPr(\cdot|s,a)}\bigg[\\ & \hspace{0.5in}\widecheck{V}_{h+1}^{\htheta_{h+1:H}}\left(s',\sum_{\ell=1}^{h-1}w_\ell(s_\ell,a_\ell)+w_h(s,a),\sum_{\ell=1}^{h-1}v_\ell(s_\ell,a_\ell)+v_h(s,a),\sum_{\ell=1}^{h-1}b_\ell(s_\ell,a_\ell)+b_h(s,a)\right)\\ &\hspace{0.8in}-\widehat{V}_{h+1}\left(s',\sigma\left(\nu_{i_h}+w_h(s,a)\right),\sigma\left(\nu_{j_h}+v_h(s,a)\right),\sigma\left(\nu_{k_h}+b_h(s,a)\right)\right)\bigg]\bigg\}\\ &\hspace{3in}+ \frac{(H+1-h)\epsilon}{2H^2}\\
     &\le \max_{a \in \cA} \bigg\{\mathbb{E}_{s'\sim \bPr(\cdot|s,a)}\bigg[\\ & \hspace{0.5in}\widecheck{V}_{h+1}^{\htheta_{h+1:H}}\left(s',\sum_{\ell=1}^{h-1}w_\ell(s_\ell,a_\ell)+w_h(s,a),\sum_{\ell=1}^{h-1}v_\ell(s_\ell,a_\ell)+v_h(s,a),\sum_{\ell=1}^{h-1}b_\ell(s_\ell,a_\ell)+b_h(s,a)\right)\\&\hspace{0.8in}-\widecheck{V}_{h+1}^{\htheta_{h+1:H}}\left(s',\nu_{i_h}+w_h(s,a),\nu_{j_h}+v_h(s,a),\nu_{k_h}+b_h(s,a)\right)\\&\hspace{0.8in}+\widecheck{V}_{h+1}^{\htheta_{h+1:H}}\left(s',\nu_{i_h}+w_h(s,a),\nu_{j_h}+v_h(s,a),\nu_{k_h}+b_h(s,a)\right) 
     \\
     &\hspace{0.8in}-\widehat{V}_{h+1}\left(s',\sigma\left(\nu_{i_h}+w_h(s,a)\right),\sigma\left(\nu_{j_h}+v_h(s,a)\right),\sigma\left(\nu_{k_h}+b_h(s,a)\right)\right)\bigg]\bigg\}\\ &\hspace{3in}+ \frac{(H+1-h)\epsilon}{2H^2}\\
    %&\le \max_{a \in \cA} \bigg\{\mathbb{E}_{s'\sim \bPr(\cdot|s,a)}\bigg[\widecheck{V}_{h+1}^{\htheta_{h+1:H}}\left(s',\sum_{\ell=1}^{h-1}w_\ell(s_\ell,a_\ell)+w_h(s,a),\sum_{\ell=1}^{h-1}b_\ell(s_\ell,a_\ell)+b_h(s,a)\right)\\
    %&\qquad \qquad 
    %-\widehat{V}_{h+1}\left(s',\sigma\left(\nu_{i_h}+w_h(s,a)\right),\sigma\left(\nu_{j_h}+b_h(s,a)\right)\right)\bigg]\bigg\}+ \frac{(H-h)\epsilon}{2H^2} \\
    %&= \max_{a \in \cA} \left\{\mathbb{E}_{s'\sim \bPr(\cdot|s,a)}\left[\widecheck{V}_{h+1}^{\htheta_{h+1:H}}\left(s',\sum_{\ell=1}^{h-1}w_\ell(s_\ell,a_\ell)+w_h(s,a),\sum_{\ell=1}^{h-1}b_\ell(s_\ell,a_\ell)+b_h(s,a)\right)\right.\right.\\
    %& \qquad \qquad\left.\left. +\widecheck{V}_{h+1}^{\htheta_{h+1:H}}\left(s',\nu_{i_h}+w_h(s,a),\nu_{j_h}+b_h(s,a)\right)-\widecheck{V}_{h+1}^{\htheta_{h+1:H}}\left(s',\nu_{i_h}+w_h(s,a),\nu_{j_h}+b_h(s,a)\right)\right.\right.\\
    %&\qquad \qquad \left.\left.
   % -\widehat{V}_{h+1}\left(s',\sigma\left(\nu_{i_h}+w_h(s,a)\right),\sigma\left(\nu_{j_h}+b_h(s,a)\right)\right)\right]\right\}+ \frac{(H-h)\epsilon}{2H^2} \\
    &\overset{(i)}{\le} \max_{a \in \cA} \bigg\{\mathbb{E}_{s'\sim \bPr(\cdot|s,a)}\bigg[\widecheck{V}_{h+1}^{\htheta_{h+1:H}}\left(s',\nu_{i_h}+w_h(s,a),\nu_{j_h}+v_h(s,a),\nu_{k_h}+b_h(s,a)\right)\\&\hspace{1in}-\widehat{V}_{h+1}\left(s',\sigma\left(\nu_{i_h}+w_h(s,a)\right),\sigma\left(\nu_{j_h}+v_h(s,a)\right),\sigma\left(\nu_{k_h}+b_h(s,a)\right)\right)\bigg]\bigg\}
    \\ & \hspace{1in}+ \frac{(H+1-h)\epsilon}{2H^2} +\frac{\epsilon}{2H^2}\\
    &\overset{(ii)}{\le} \frac{(H-h)\epsilon}{2H^2}
    + \frac{(H+2-h)\epsilon}{2H^2} =\frac{(H+1-h)\epsilon}{H^2}.
\end{align*}
where $(i)$ follows by bounding 
\begin{align*}
    &\widecheck{V}_{h+1}^{\htheta_{h+1:H}}\left(s',\sum_{\ell=1}^{h-1}w_\ell(s_\ell,a_\ell)+w_h(s,a),\sum_{\ell=1}^{h-1}v_\ell(s_\ell,a_\ell)+v_h(s,a),\sum_{\ell=1}^{h-1}b_\ell(s_\ell,a_\ell)+b_h(s,a)\right) \\&\hspace{0.5in}- \widecheck{V}_{h+1}^{\htheta_{h+1:H}}\left(s',\nu_{i_h}+w_h(s,a),\nu_{j_h}+v_h(s,a),\nu_{k_h}+b_h(s,a)\right) \le \frac{\epsilon}{2H^2}
\end{align*}
using the same logic as we used above to arrive at inequality~\eqref{e:induction_part_a_lipschitz_argument}, and $(ii)$ follows by using the Part~(a) of the inductive hypothesis at step $h+1$. This proves the second part of the inductive hypothesis.

\textit{Part~(c):}Recall the definition of 
\begin{align*}
 \widehat{a}_{h}(s,i,j,k)&\in  \argmax_{a \in \cA} \E_{s'\sim \bPr(\cdot|s,a)}\left[\widehat{V}_{h+1}(s',\sigma(w_h(s,a)+\nu_i),\sigma(v_h(s,a)+\nu_j),\sigma(b_h(s,a)+\nu_k))\right],\\
     \widehat{V}_h(s,i,j,k)&:= \max_{a \in \cA} \E_{s'\sim \bPr(\cdot|s,a)}\left[\widehat{V}_{h+1}(s',\sigma(w_h(s,a)+\nu_i),\sigma(v_h(s,a)+\nu_j),\sigma(b_h(s,a)+\nu_k))\right].
 \end{align*}
 
For a fixed quartet $(s,i,j,k)$ to calculate $\widehat{a}_h(s,i,j,k)$ it is possible to first calculate 
\begin{align*}
    \widehat{V}_{h+1}(s',\sigma(w_h(s,a)+\nu_i),\sigma(w_h(s,a)+\nu_j),\sigma(b_h(s,a)+\nu_k))
\end{align*}
for all $s' \in \cS$ and all $a \in \cA$. Since we already have access to the tensor the entire $\widehat{V}_{h+1}$ this takes $\mathsf{poly}(|\cS||\cA|)$ time and memory. Once we have calculated this it is possible to use this to calculate
\begin{align*}
    \E_{s'\sim \bPr(\cdot|s,a)}\left[\widehat{V}_{h+1}(s',\sigma(w_h(s,a)+\nu_i),\sigma(b_h(s,a)+\nu_j))\right]
\end{align*}
for all choices of $a$ (we can do this since we have access to the distribution $\bPr$) using $\mathsf{poly}(|\cS|,|\cA|)$ time and memory. After we have enumerated this value for all $a \in \cA$ we can identify $\widehat{a}_h(s,i,j,k)$ and $\widehat{V}_h(s,i,j,k)$ for this quartet. There are $|\cS|m^3 = |\cS|\left(\ceil{\frac{12H^2\zeta}{\epsilon}}\right)^3$ quartets. Therefore it is possible to calculate both these tensors using $\mathsf{poly}\left(|\cS|,|\cA|,H,\zeta,\frac{1}{\epsilon}\right)$ time and memory, which proves our claim.

This completes the proof of all parts of the induction hypothesis. 
\paragraph{Part II: Using the induction hypothesis to prove the lemma.}
We begin by proving that the policy $\htheta$ can be found efficiently. To see this, notice that at every step the policy $\htheta$ only requires to know the tensor of actions $\widehat{a}_h$. Starting from $h=H$, we have shown that each $\widehat{a}_h$ can be computed using $\mathsf{poly}\left(|\cS|,|\cA|,H,\zeta,\frac{1}{\epsilon}\right)$ time and memory. Thus, all $H$ of these tensors can be found using $\mathsf{poly}\left(|\cS|,|\cA|,H,\zeta,\frac{1}{\epsilon}\right)$ time and memory.

Now let's prove that 
\begin{align*}
    \bar{V}^{\theta_{\star}} - \bar{V}^{\htheta} \le \epsilon.
\end{align*}
%For the ease of notation let us suppress $s_1$ from the notation and denote $\theta_{\star}(s_1)$ as $\theta_{\star}$ and $\htheta(s_1)$ as $\htheta$. 
Define a policy $\theta_{\star h} := \left( \theta_{\star 1},\ldots,\theta_{\star  h},\htheta_{h+1},\htheta_{H}\right)$ for $h \in \{0,\ldots,H\}$. Therefore,
\begin{align} \label{e:decomposition_value_function_into_sub_parts}
    \bar{V}^{\theta_{\star}} - \bar{V}^{\htheta} & = \sum_{h=H}^1 \bar{V}^{\theta_{\star h}} - \bar{V}^{\theta_{\star h-1}}.
\end{align}
Consider any term in this decomposition above,
\begin{align*}
    &\bar{V}^{\theta_{\star h}} - \bar{V}^{\theta_{\star h-1}}\\ & = \E_{s_1\sim \rho,\; \tau_{h-1}\sim \bPr^{\theta_{\star 1:h-1}}}\Bigg[\E_{s_h \sim \bPr(\cdot|s_{h-1},a_{h-1})}\bigg[\max_{a \in \cA}\mathbb{E}_{s'\sim \bPr(\cdot|s_h,a)}\Bigg[\bar{V}_{h+1}^{\htheta_{h+1:H}}(s',\{s_h,a,\tau_{h-1}\})\bigg]\\&\hspace{4.5in}- \bar{V}_h^{\htheta_{h:H}}(s_h,\tau_{h-1})\Bigg]\Bigg]
\end{align*}
where the outer expectation $\E_{\tau_{h-1}\sim \bPr^{\theta_{\star 1:h-1}}}$ is over the randomness in the first $h-1$ round where the policy is $(\theta_{\star 1},\ldots,\theta_{\star h-1})$ and the initial state is $s_1$. Now by invoking the second part of the induction hypothesis to bound the RHS in the display above we get
\begin{align*}
    \bar{V}^{\theta_{\star h}} - \bar{V}^{\theta_{\star h-1}} \le \frac{(H+1-h)\epsilon}{H^2}.
\end{align*}
Plugging this into equation~\eqref{e:decomposition_value_function_into_sub_parts} we conclude that
\begin{align*}
     \bar{V}^{\theta_{\star}} - \bar{V}^{\htheta} &\le  \frac{\epsilon}{2H^2}\sum_{h=1}^{H}(H+1-h)  < \frac{\epsilon}{2H^2}\sum_{h=1}^{H}(H+1) \le  \epsilon
\end{align*}
completing our proof.
\end{proof}

\subsection{Proof of Proposition~\ref{efficient_pi_t}}

Recall the statement of the proposition from above.
\efficientpol*
\begin{proof}
The proof shall follow by simply invoking Lemma~\ref{l:htheta_is_epsilon_optimal}. Recall from equation~\eqref{def:tildeV_exp} that 
\begin{align*}
    \widetilde{\mu}_t^{\mathsf{sd}}(\widehat{\w}_t,\tau) : = \min\left\{\mu\left(\w^{\top}\phi(\tau)\right)+\sqrt{\kappa}\beta_t(\delta) \sum_{h=1}^{H} \lv \phi_h(s_h,a_h)\rv_{\mathbf{\Sigma}_t^{-1}},1\right\}+\sum_{h=1}^{H-1} \xi_{s_h,a_h}^{(t)}.
\end{align*}
First notice that since $\lv \phi(\tau)\rv_2\le 1$ we have that
\begin{align}\label{e:compute_bound_1}
   &|\widehat{\w}_t,\phi(\tau) |  \le \lv \widehat{\w}_t\rv_2 \lv \phi(\tau)\rv_2 \le \lv \widehat{\w}_t\rv_2.
   \end{align}
   Next observe that
   \begin{align}
   \sqrt{\kappa}\beta_t(\delta) \sum_{h=1}^{H} \lv \phi_h(s_h,a_h)\rv_{\mathbf{\Sigma}_t^{-1}}&\le \sqrt{\kappa} \beta_t(\delta) \sqrt{\lambda_{\max}(\mathbf{\Sigma}_t^{-1})}\sum_{h=1}^H \lv \phi_h(s_h,a_h)\rv_2\\
   & \overset{(i)}{\le} \frac{\sqrt{\kappa} \beta_t(\delta)}{\sqrt{\lambda_{\min}(\mathbf{\Sigma}_t)}} \sqrt{H}\sqrt{\sum_{h=1}^H  \left\lv\phi_h(s_h,a_h)\right\rv_2^2} \\
   & \overset{(ii)}{\le} \frac{\sqrt{\kappa} \beta_t(\delta)}{\sqrt{\lambda_{\min}(\mathbf{\Sigma}_t)}} \sqrt{H}\lv \phi(\tau)\rv_2 \\
   &\overset{(iii)}{\le} \frac{\sqrt{\kappa} \beta_t(\delta)}{\sqrt{\kappa}} \sqrt{H}\left\lv\phi(\tau)\right\rv_2 \\
   &\le \sqrt{H} \beta_t(\delta) \overset{(iv)}{\le} \sqrt{H}\times \mathsf{poly}\left(d,B,\log\left(\frac{N}{\delta}\right)\right), \numberthis \label{e:compute_bound_2}
\end{align}
where $(i)$ follows since for any $z\in \R^{H}$, $\lv z\rv_1 \le \sqrt{H}\lv z \rv_2$, $(ii)$ follows since by Assumption~\ref{assumption::orthogonal_feature_maps} for any $h\neq h' \in [H]$, the features $\phi_h$ and $\phi_{h'}$ are orthogonal and by Assumption~\ref{assumption:sum_decomposable} the feature map $\phi$ is sum-decomposable, $(iii)$ follows since $\mathbf{\Sigma}_t \succeq \kappa \mathbf{I}$, and $(iv)$ follows by the definition of $\beta_t(\delta)$ in equation~\eqref{e:beta_radius_definition}.
Finally the definition of $\xi^{(t)}$ in equation~\eqref{e:xi_definition_t} we know that
\begin{align}\label{e:compute_bound_3}
    \left|\sum_{h=1}^{H-1} \xi_{s_h,a_h}^{(t)}\right|&\le 2H.
\end{align}
In light of inequalities~\eqref{e:compute_bound_1}, \eqref{e:compute_bound_2} and \eqref{e:compute_bound_3} we can conclude that if we invoke Lemma~\ref{l:htheta_is_epsilon_optimal} with a $\zeta$ that is a large enough polynomial in $\lv \widehat{\w}_t\rv_2,d,B,\log(N/\delta),H$ then the claim follows. 
\end{proof}
 
 \section{Experiments}\label{s:experiments}
  \begin{figure}[h]
\includegraphics[width=0.32\linewidth]{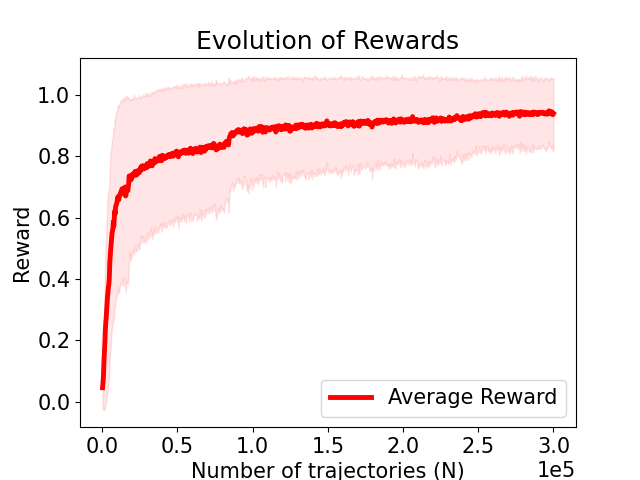}
\includegraphics[width=0.32\linewidth]{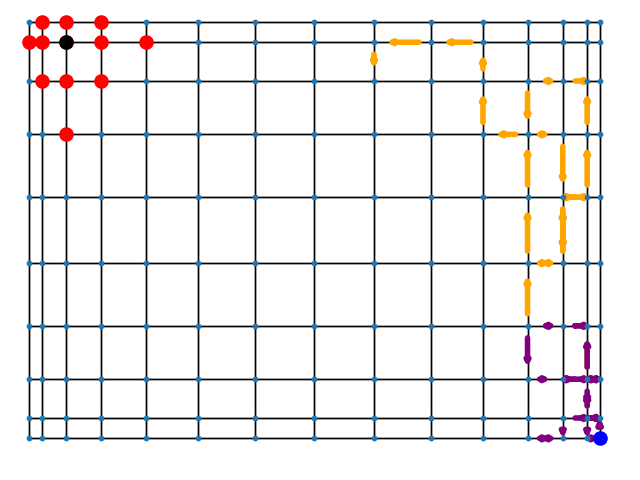}
\includegraphics[width=0.32\linewidth]{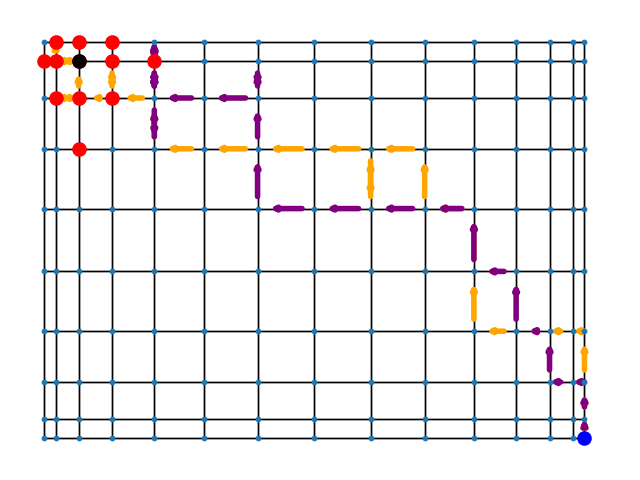}
 \begin{center}
        \caption{\textbf{Left:} Reward learning curve averaged over $40$ independent runs. The shaded region represents a confidence interval which is $\pm \textrm{standard deviation}$. \textbf{Middle:} The purple and yellow paths represent two sample paths taken by an initial random policy. \textbf{Right:} The purple and yellow paths represent two sample paths taken by a trained policy.}
        \label{fig:fig1}
        \end{center}
\end{figure}
 In this section we experimentally show that it is possible to learn a good policy  in a simple non-Markovian domain with binary rewards---received once per episode---using a policy gradient algorithm. We parameterize each policy $\pi_\theta$ by $\theta \in \mathbb{R}^k$. The gradients of the value function can be computed using the $\mathsf{REINFORCE}$~\citep{williams1992simple} algorithm as follows
\begin{equation*}
    \nabla_\theta V^{\pi_\theta} = \mathbb{E}_{y_\tau, \tau \sim \mathbb{P}^{\pi}}\left[ y_\tau\left( \sum_{h=1}^H \nabla_\theta \log \left(\pi_\theta(  a_h | s_h)\right) \right)  \right].
\end{equation*}
We  approximate this expectation empirically by using $30$ sample trajectories, and use the $\mathsf{Adam}$ optimizer~\citep{kingma2015adam} with a default step size of one to update the policy. We studied the behavior of this algorithm on a custom $10\times 15$ grid environment. The agent is initialized at a random location on the grid denoted by the large blue dot. Then the agent is allowed to take one of the actions $\{ \mathrm{UP}, \mathrm{DOWN}, \mathrm{LEFT}, \mathrm{RIGHT}  \}$, and move to an adjacent node (if permitted). During the last three steps of an episode, with $H=30$, if the agent stays at either the black dot (`goal') or at any adjacent nodes marked by the red dots, then the agent receives a reward of $1$, while if the agent is not at one of these nodes during the last three steps then it receives a reward of $0$. The location of the `goal' node is also randomly chosen at each episode. We parametrize the policy using a fully connected neural network with $10$ hidden layers and with width $4$. The state representation that is fed to this policy is of the form $(x^{\mathrm{current}},y^{\mathrm{current}}, x^{\mathrm{goal}}, y^{\mathrm{goal}})$, where $(x^{\mathrm{current}},y^{\mathrm{current}})$ represents the current coordinates of the agent and $(x^{\mathrm{goal}},y^{\mathrm{goal}})$ denotes the coordinates of the `goal' node.  
\printbibliography
\end{document}